\documentclass[twoside,11pt]{article}

\usepackage{jmlr2e_}
\usepackage{amsmath}
\usepackage{amsfonts}
\usepackage{amssymb}
\usepackage{algorithm,algorithmic}
\usepackage{caption}
\usepackage{subfig}
\usepackage{multirow}
\usepackage{color}
\usepackage{pslatex}
\usepackage{textcomp}

\def\x{{\mathbf x}}
\def\z{{\mathbf z}}
\def\1{{\mathbf 1}}

\def\v{{\mathbf v}}
\def\0{{\mathbf 0}}
\def\X{{\mathbf X}}
\def\Prox{\text{Prox}}

\def\weights{\omega}
\def\gi{ {{\scriptscriptstyle \hspace*{-0.002cm}\mid\hspace*{-0.008cm}}  g}}
\def\hi{  {{\scriptscriptstyle \hspace*{-0.002cm}\mid\hspace*{-0.008cm}}  h}}

\def\alphab{{\boldsymbol\alpha}}

\def\y{{\mathbf y}}
\def\w{{\mathbf w}}
\def\D{{\mathbf D}}
\def\DD{{\mathcal D}}

\def\Sb{{\mathbf S}}

\def\d{{\mathbf d}}

\def\N{{\mathcal N}}

\def\s{{\mathbf s}}

\def\d{{\mathbf d}}

\def\u{{\mathbf u}}

\def\tildeD{{\bf \tilde D}}

\def\tildex{{\bf \tilde x}}

\def\Real{{\mathbb R}}

\def\u{{\mathbf u}}
\def\A{{\mathbf A}}

\def\TT{{\mathcal T}}
\def\I{{\mathbf I}}

\def\argmin{\operatornamewithlimits{arg\,min}}

\def\sign{\operatorname{sign}}
\def\st{~~\text{s.t.}~~}
\def\defin{\stackrel{\vartriangle}{=}}

 \newcommand{\RETURN}{ \STATE {\textbf{Return}} }


\newcommand{\R}[1]{\mathbb{R}^{#1}}
\newcommand{\RR}[2]{\mathbb{R}^{#1 \times #2}}
\newcommand{\G}{\mathcal{G}}

\newcommand{\InSet}[1]{\{1,\ldots,#1\}}
\newcommand{\Norm}[1]{\|#1\|}
\newcommand{\DualNorm}[1]{\|#1\|_{\ast}}

\newcommand{\NormDeux}[1]{\|#1\|_2}
\newcommand{\NormInf}[1]{\|#1\|_{\infty}}
\newcommand{\NormFro}[1]{\|#1\|_{\text{F}}}

\def \xib{{\boldsymbol\xi}}
\def \taub{{\boldsymbol\tau}}
\def \rhob{{\boldsymbol\rho}}
\def \kappab {{\boldsymbol\kappa}}

\newcommand{\refLemma}[1]{Lemma~\ref{#1}}

\newcommand{\refSec}[1]{Section~\ref{#1}}

\long\def\symbolfootnote[#1]#2{\begingroup\def\thefootnote{\fnsymbol{footnote}}\footnote[#1]{#2}\endgroup} 

\newcommand{\UpperSpace}{\vspace*{0.25cm}}
\newcommand{\LowerSpace}{\vspace*{0.25cm}}

\renewcommand\UpperSpace{ }
\renewcommand\LowerSpace{ }
\renewcommand{\cite}{\citep}

\jmlrheading{12}{2011}{-}{09/10; Revised 03/11}{-}{Rodolphe Jenatton, Julien Mairal, Guillaume Obozinski and Francis Bach}

\ShortHeadings{Proximal Methods for Hierarchical Sparse Coding}{Jenatton, Mairal, Obozinski and Bach}
\firstpageno{1}

\begin{document}

\title{Proximal Methods for Hierarchical Sparse Coding}

\author{\name Rodolphe Jenatton\thanks{Equal contribution.}~\hspace*{0.08cm}\thanks{Rodolphe Jenatton, Guillaume Obozinski, and Francis Bach are now affiliated to INRIA - Sierra Project-Team. Julien Mairal is now with the Statistics Department of the University of California at Berkeley. When this work was performed all authors were affiliated to INRIA - Willow Project-Team.} 
\email rodolphe.jenatton@inria.fr \\
        \name Julien Mairal$^*$$^\dagger$ \email julien.mairal@inria.fr \\
        \name Guillaume Obozinski$^\dagger$ \email guillaume.obozinski@inria.fr \\
        \name Francis Bach$^\dagger$ \email francis.bach@inria.fr \\
        \addr INRIA - WILLOW Project-Team\\
         Laboratoire d'Informatique de l'Ecole Normale Sup\'erieure (INRIA/ENS/CNRS UMR 8548)\\
         23, avenue d'Italie 75214 Paris CEDEX 13, France.
       }

\editor{}

\maketitle

\begin{abstract}
Sparse coding consists in representing signals as sparse linear combinations of
atoms selected from a dictionary.  We consider an extension of this framework
where the atoms are further assumed to be embedded in a tree. This is achieved
using a recently introduced tree-structured sparse regularization norm, which
has proven useful in several applications. This norm leads to regularized
problems that are difficult to optimize, and in this paper, we propose efficient
algorithms for solving them.  More precisely, we show that the proximal
operator associated with this norm is computable exactly via a dual approach
that can be viewed as the composition of elementary proximal operators.  Our
procedure has a complexity linear, or close to linear, in the number of atoms,
and allows the use of accelerated gradient techniques to solve the
tree-structured sparse approximation problem at the same computational cost as
traditional ones using the $\ell_1$-norm.  Our method is efficient and scales
gracefully to millions of variables, which we illustrate in two types of
applications: first, we consider \textit{fixed} hierarchical dictionaries of
wavelets to denoise natural images.  Then, we apply our optimization tools in
the context of \textit{dictionary learning}, where learned dictionary elements
naturally self-organize in a prespecified arborescent structure, leading to better
performance in reconstruction of natural image patches.  When applied to text
documents, our method learns hierarchies of topics, thus providing a
competitive alternative to probabilistic topic models.
\end{abstract}

\begin{keywords}
Proximal methods, dictionary learning, structured sparsity, matrix factorization.
\end{keywords}
\section{Introduction}
Modeling signals as sparse linear combinations of atoms selected from a
dictionary has become a popular paradigm in many fields, including signal
processing, statistics, and machine learning.  This line of research, also known
as \textit{sparse coding}, has witnessed the development of several well-founded
theoretical frameworks~\cite{Tibshirani1996,  Chen1998, Mallat1999, Tropp2004,
Tropp2006, Wainwright2009, Bickel2009} and the emergence of many efficient
algorithmic tools~\cite{Efron2004, Nesterov2007, Beck2009, Wright2009, Needell2009, Yuan2010a}.

In many applied settings, the structure of the problem at hand, such as, e.g., the
spatial arrangement of the pixels in an image, or the presence of variables corresponding to several levels of a given factor, induces relationships between dictionary elements. It is appealing to use this a priori knowledge about the
problem \textit{directly} to constrain the possible sparsity patterns.
For instance, when the dictionary elements are partitioned into predefined
groups corresponding to different types of features, one can
enforce a similar block structure in the sparsity pattern---that is, allow only that either
all elements of a group are part of the signal decomposition or that all are dismissed
simultaneously \citep[see][]{Yuan2006,Stojnic2009}.

This example can be viewed as a particular instance of \textit{structured
sparsity},  which has been lately the focus of a large amount of
research~\cite{Baraniuk2008, Zhao2009, Huang2009, Jacob2009, Jenatton2009,Micchelli2010}.
In this paper, we concentrate on a specific form of structured sparsity, which we call \emph{hierarchical sparse coding}: the dictionary
elements are assumed to be embedded in a directed tree $\mathcal{T}$, and the
sparsity patterns are constrained to form a \textit{connected and rooted subtree} of
$\mathcal{T}$~\cite{Donoho1997,Baraniuk1999,Baraniuk2002,Baraniuk2008,Zhao2009,Huang2009}.
This setting extends more generally to a forest of directed trees.\footnote{A tree is defined as a connected graph that contains no cycle~\cite[see][]{Ahuja1993}.}

In fact, such a hierarchical structure arises in many applications.
Wavelet decompositions lend themselves
well to this tree organization because of their multiscale structure, and benefit from it for image compression and
denoising~\cite{Shapiro1993,Crouse1998,Baraniuk1999,Baraniuk2002,Baraniuk2008,He2009,
Zhao2009,Huang2009}.  In the same vein, edge filters of natural image patches
can be represented in an arborescent fashion~\cite{Zoran2009}.  Imposing these
sparsity patterns has further proven useful in the context of hierarchical
variable selection, e.g., when applied to kernel methods~\cite{Bach2008}, to
log-linear models for the selection of potential orders~\cite{Schmidt2010}, and 
to bioinformatics, to exploit the tree structure of gene networks for
multi-task regression~\cite{Kim2009}.  Hierarchies of latent variables,
typically used in neural networks and deep learning architectures
\citep[see][and references therein]{Bengio2009}
have also emerged as a natural
structure in several applications, notably to model text documents. In particular, in
the context of \emph{topic models} \cite{Blei2003}, a hierarchical model of latent variables based on Bayesian non-parametric methods has been proposed by \citet{Blei2010} to model hierarchies of topics.

To perform hierarchical sparse coding, our work builds upon the
approach of~\citet{Zhao2009}
who first introduced a
sparsity-inducing norm $\Omega$ leading to this type of tree-structured sparsity
pattern.  We tackle the resulting nonsmooth convex optimization problem with
proximal methods~\citep[e.g.,][]{Nesterov2007, Beck2009, Wright2009, Combettes2010} 
and we show in this paper that its key step, the computation of the \textit{proximal operator}, 
can be solved exactly with a complexity linear, 
or close to linear, in the number of dictionary elements---that is,
with the same complexity as for classical $\ell_1$-sparse decomposition
problems~\cite{Tibshirani1996,Chen1998}.
Concretely, given an $m$-dimensional signal $\x$ along with a dictionary 
$\D = [\d^1,\dots,\d^p] \in \RR{m}{p}$ composed of $p$ atoms,
the optimization problem at the core of our paper can be written as 
$$
\min_{\alphab \in \R{p}} \dfrac{1}{2} \|\x - \D\alphab\|^2_2 + \lambda \Omega(\alphab),\ \mathrm{with}\ \lambda \geq 0.
$$
In this formulation, the sparsity-inducing norm $\Omega$ encodes 
a hierarchical structure among the atoms of $\D$, where this structure is assumed to be known beforehand. 
The precise meaning of \textit{hierarchical structure} and the definition of $\Omega$ will be made more formal in the next sections.
A particular instance of this problem---known as the \textit{proximal problem}---is central to our analysis and concentrates on the case where the dictionary $\D$ is orthogonal.

In addition to a speed benchmark that evaluates the performance of our proposed
approach in comparison with other convex optimization techniques, two types of
applications and experiments are considered.  First, we consider
settings where the dictionary is fixed and given a priori, corresponding for
instance to a basis of wavelets for the denoising of natural images. 
Second, we show how one can take advantage of this hierarchical sparse
coding in the context of dictionary
learning~\cite{Olshausen1997,Aharon2006,Mairal2010}, where the dictionary is
learned to adapt to the predefined tree structure. This extension of
dictionary learning is notably shown to share interesting connections with
hierarchical probabilistic topic models.

To summarize, the contributions of this paper are threefold:
\begin{itemize}
\item We show that the proximal
   operator for a tree-structured sparse regularization can be computed
   exactly in a finite number of operations using a dual approach. 
   Our approach is equivalent to computing a particular sequence of
   elementary proximal operators, and has a
   complexity linear, or close to linear, in the number of variables.
Accelerated gradient methods~\citep[e.g.,][]{Nesterov2007,
Beck2009, Combettes2010} can then be applied to solve large-scale tree-structured sparse
decomposition problems at the same computational cost as traditional ones using the $\ell_1$-norm.
\item We propose to use this regularization scheme to learn dictionaries
embedded in a tree, which, to the best of our knowledge, has not been done
before in the context of structured sparsity.
\item Our method establishes a bridge between hierarchical dictionary learning and hierarchical topic models
\cite{Blei2010}, which builds upon the interpretation of topic models as
multinomial PCA \cite{Buntine2002}, and can learn similar hierarchies of
topics. This point is discussed in Sections~\ref{sec:exp_txt_documents} and \ref{sec:ccl}.
\end{itemize}
Note that this paper extends a shorter version published in the proceedings of the international conference of machine learning~\cite{Jenatton2010a}.

\subsection{Notation}
Vectors are denoted by bold lower case letters and matrices by upper case ones.
We define for $q \geq 1$ the \mbox{$\ell_q$-norm} of a vector~$\x$ in~$\Real^m$ as
$\|\x\|_q \defin (\sum_{i=1}^m |\x_i|^q)^{{1}/{q}}$, where~$\x_i$ denotes the
$i$-th coordinate of~$\x$, and $\|\x\|_\infty \defin \max_{i=1,\ldots,m} |\x_i|
= \lim_{q \to \infty} \|\x\|_q$.  We also define the $\ell_0$-pseudo-norm as
the number of nonzero elements in a vector:\footnote{Note that it would
be more proper to write $\|\x\|_0^0$ instead of $\|\x\|_0$ to be consistent with the traditional notation $\|\x\|_q$.
However, for the sake of simplicity, we will keep this
notation unchanged in the rest of the paper.}
$\|\x\|_0 \defin \#\{i \st \x_i
\neq 0  \} = \lim_{q \to 0^+}  (\sum_{i=1}^m |\x_i|^q)$.  We consider the
Frobenius norm of a matrix~$\X$ in~$\Real^{m \times n}$: $\NormFro{\X} \defin
(\sum_{i=1}^m \sum_{j=1}^n \X_{ij}^2)^{{1}/{2}}$, where $\X_{ij}$ denotes the entry of~$\X$ at row $i$ and column $j$. Finally, for a scalar $y$, we denote $(y)_+ \defin \max(y,0)$.

The rest of this paper is organized as follows: Section \ref{sec:problem_statement}
presents related work and the problem we consider. Section \ref{sec:optimization}
is devoted to the algorithm we propose,
and Section~\ref{sec:dictionary_learning} introduces
the dictionary learning framework and shows how it can be used with tree-structured norms.
Section~\ref{sec:experiment} presents several experiments demonstrating the
effectiveness of our approach and Section~\ref{sec:ccl} concludes the paper.
\section{Problem Statement and Related Work}\label{sec:problem_statement}

Let us consider an input signal of dimension $m$, typically an image described by its $m$ pixels,
which we represent by a vector $\x$ in $\R{m}$.
In traditional sparse coding, we seek to approximate this signal by a sparse linear combination of atoms,
or dictionary elements, represented here by the columns of a matrix
$\D \defin [\d^1,\dots,\d^p]$ in $\RR{m}{p}$.
This can equivalently be expressed as
$\x \approx \D\alphab$ for some sparse vector $\alphab$ in $\R{p}$, i.e,
such that the number of nonzero coefficients~$\|\alphab\|_0$ is small compared to~$p$.
The vector $\alphab$ is referred to as the code, or decomposition, of the signal~$\x$.
\begin{figure}[htb!]
   \centering
   \includegraphics[width=0.6\textwidth]{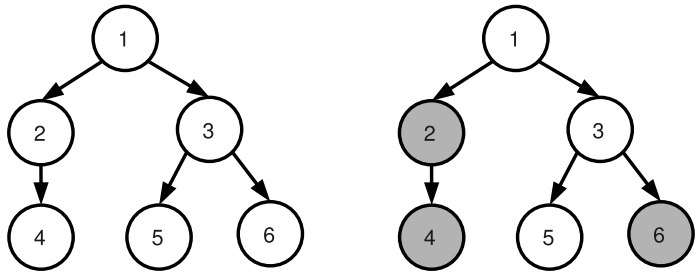}
   \caption{Example of a tree $\TT$ when $p=6$. With the rule we consider for the nonzero patterns, if we have $\alphab_5 \neq 0$,
   we must also have $\alphab_k \neq 0$ for $k$ in $\text{ancestors}(5)=\{1,3,5\}$.}\label{fig:plain_tree}
\end{figure}

In the rest of the paper, we focus on specific sets of nonzero coefficients---or simply, nonzero patterns---for the decomposition vector $\alphab$. In particular, we assume that we are given a tree\footnote{Our analysis straightforwardly extends to the case of a forest of trees; for simplicity, we consider a single tree $\TT$.}~$\TT$ whose $p$ nodes are indexed by $j$ in $\{1,\dots,p\}$.
We want the nonzero patterns of $\alphab$ to form a \textit{connected and rooted subtree} of $\TT$; in other words,
if $\text{ancestors}(j) \subseteq \{1,\dots,p\}$ denotes the set of indices corresponding to the ancestors\footnote{We consider that the set of ancestors of a node also contains the node itself.} of the node $j$ in $\TT$ (see Figure~\ref{fig:plain_tree}),
the vector $\alphab$ obeys the following rule
\begin{equation}\label{eq:ancestor_cond}
  \alphab_j \neq 0 \Rightarrow [\, \alphab_k \neq 0\ \text{for all}\ k\ \text{in ancestors}(j) \,].
\end{equation}
Informally,
we want to exploit the structure of $\TT$ in
the following sense: the decomposition of any signal $\x$ can involve
a dictionary element $\d^j$ \emph{only if the ancestors of $\d^j$ in the tree $\TT$
are themselves part of the decomposition}.

We now review previous work that has considered the sparse approximation problem with
tree-structured constraints~(\ref{eq:ancestor_cond}).  Similarly
to traditional sparse coding, there are basically two lines of research, that
either (A) deal with nonconvex and combinatorial formulations that are in
general computationally intractable and addressed with greedy algorithms, or
(B) concentrate on convex relaxations solved with convex programming methods.

\subsection{Nonconvex Approaches}
For a given sparsity level $s\geq 0$ (number of nonzero coefficients), the following nonconvex problem
\begin{equation}\label{eq:l0_formulation}
 \min_{\substack{\alphab\in\R{p}\\ \|\alphab\|_0 \leq s}} \frac{1}{2} \|\x-\D\alphab\|_2^2\quad \text{such that}\ \text{condition~(\ref{eq:ancestor_cond}) is respected},
\end{equation}
has been tackled by \citet{Baraniuk1999,Baraniuk2002} in the context of
wavelet approximations with a greedy procedure.  A penalized version of
problem~(\ref{eq:l0_formulation}) (that adds $\lambda\|\alphab\|_0$ to the
objective function in place of the constraint $\|\alphab\|_0 \leq s$) has been
considered by~\citet{Donoho1997}, while studying the more general problem of best approximation from dyadic partitions \citep[see Section 6 in][]{Donoho1997}.
Interestingly, the algorithm we introduce in
Section~\ref{sec:optimization} shares conceptual links with the
dynamic-programming approach of~\citet{Donoho1997}, which was also used
by~\citet{Baraniuk2008}, in the sense that the same order of traversal of the tree
is used in both procedures.  We investigate more thoroughly the relations
between our algorithm and this approach in Appendix~\ref{appendix:greedy}.

Problem~(\ref{eq:l0_formulation}) has been further studied for structured
compressive sensing~\cite{Baraniuk2008}, with a greedy algorithm that builds
upon \citet{Needell2009}.  Finally, \citet{Huang2009} have
proposed a formulation related to~(\ref{eq:l0_formulation}), with a nonconvex
penalty based on an infor\-ma\-tion-theoretic criterion.

\subsection{Convex Approach}
We now turn to a convex reformulation of the
constraint~(\ref{eq:ancestor_cond}), which is the starting point for the convex
optimization tools we develop in Section~\ref{sec:optimization}.

\subsubsection{Hierarchical Sparsity-Inducing Norms}

Condition~(\ref{eq:ancestor_cond}) can be equivalently expressed by its contrapositive,
thus leading to an intuitive way of penalizing the vector $\alphab$ to obtain tree-structured nonzero patterns.
More precisely, defining $\text{descendants}(j) \subseteq \{1,\dots,p\}$ analogously to $\text{ancestors}(j)$ for $j$ in $\{1,\dots,p\}$,
condition~(\ref{eq:ancestor_cond}) amounts to saying that \emph{if a dictionary element is not used in the decomposition, its descendants in the tree should not be used either}. 
Formally, this can be formulated as:
\begin{equation}\label{eq:descendant_cond}
  \alphab_j=0 \Rightarrow [\, \alphab_k=0\ \text{for all}\ k\ \text{in descendants}(j) \,].
\end{equation}
From now on, we denote by $\G$ the set defined by
$
\G \defin \{\text{descendants}(j); j\in\{1,\dots,p\}\},
$
and refer to each member~$g$ of $\G$ as a \textit{group} (Figure~\ref{fig:groups}).
To obtain a decomposition with the desired property~(\ref{eq:descendant_cond}),
one can naturally penalize the number of groups $g$ in $\G$ that are ``involved'' in the decomposition of~$\x$, i.e.,
that record at least one nonzero coefficient of $\alphab$:
\begin{equation}
\sum_{g\in\G} \delta^g,\ \text{with}\
\delta^g \defin
\begin{cases}
  1 & \text{if there exists}\ j\in g\ \text{such that}\ \alphab_j\neq 0,\\
  0 & \text{otherwise}.
\end{cases} \label{eq:nonconvex}
\end{equation}
While this intuitive penalization is nonconvex (and not even continuous), a convex proxy has been introduced by~\citet{Zhao2009}.
 It was further considered by \citet{Bach2008, Kim2009, Schmidt2010} in several different contexts.
For any vector $\alphab \in \R{p}$, let us define
\begin{displaymath} 
 \Omega(\alphab) \defin \sum_{g\in\G} \weights_g \|\alphab_{\gi}\|,
\end{displaymath}
where $\alphab_{\gi}$ is the vector of size $p$ whose coordinates are equal
to those of $\alphab$ for indices in the set $g$, and to $0$ otherwise\footnote{Note the difference with the notation $\alphab_g$, which is often used in the literature on structured sparsity, where $\alphab_g$ is a vector of size $|g|$.}.
The notation $\Norm{.}$ stands in practice either for the $\ell_2$- or $\ell_\infty$-norm,
and $(\weights_g)_{g \in \G}$ denotes some positive weights\footnote{For a complete definition of $\Omega$ for any $\ell_q$-norm,
a discussion of the choice of $q$, and a strategy for choosing the weights $\weights_g$ \citep[see][]{Zhao2009,Kim2009}.}.
As analyzed by~\citet{Zhao2009} and~\citet{Jenatton2009}, when penalizing by $\Omega$,
some of the vectors $\alphab_{\gi}$ are set to zero for some $g\in\G$.\footnote{It has been further shown by~\citet{Bach2010a} that the convex envelope of the nonconvex function of Eq.~(\ref{eq:nonconvex}) is in fact $\Omega$ with $\|.\|$ being the $\ell_\infty$-norm.}
Therefore, the components of~$\alphab$ corresponding to some complete subtrees of
$\mathcal{T}$ are set to zero, which exactly matches condition~(\ref{eq:descendant_cond}), as illustrated in Figure~\ref{fig:groups}.
\begin{figure}[hbtp!]
   \centering
   \includegraphics[width=0.6\textwidth]{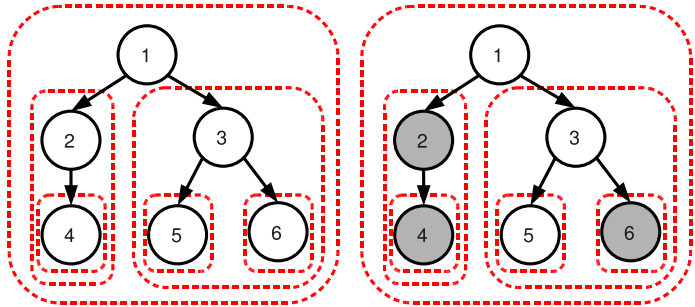}
   \caption{Left: example of a tree-structured set of groups $\G$ (dashed contours in red), corresponding to a tree $\mathcal{T}$ with $p=6$ nodes represented by black circles.
   Right: example of a sparsity pattern induced by the tree-structured norm corresponding to $\G$: the groups $\{2,4\},\{4\}$ and $\{6\}$ are set to zero, so that the corresponding nodes (in gray) that form subtrees of $\mathcal{T}$ are removed.
   The remaining nonzero variables $\{1,3,5\}$ form a rooted and connected subtree of $\mathcal{T}$.
   This sparsity pattern obeys the following equivalent rules: (i) if a node is selected, the same goes for all its ancestors. (ii) if a node is not selected, then its descendant are not selected.
}\label{fig:groups}
\end{figure}

Note that although we presented for simplicity this
hierarchical norm in the context of a single tree with a single element at
each node, it can easily be extended to the case of forests of trees, and/or
trees containing arbitrary numbers of dictionary elements at each node (with nodes possibly containing no dictionary element).
More broadly, this formulation can be extended with the notion
of \emph{tree-structured} groups, which we now present:
\UpperSpace
\begin{definition}[Tree-structured set of groups.]\label{def:tree_struct}~\newline
~~A set of groups $\G \!\defin\! \{g\}_{g\in\G}$ is said to be tree-structured in $\{1,\dots,p\}$,
if
$
\, \bigcup_{g\in\G}\!g=\{1,\dots,p\}
$
and if for all $g,h \in \G$,
$
( g\cap h \neq \emptyset) \Rightarrow ( g \subseteq h\ \text{or}\ h \subseteq g ).
$
For such a set of groups, there exists a (non-unique) total order relation $\preceq$ such that:
$$
   g \preceq h  \; \Rightarrow \; \big\{ g \subseteq h \text{~~~or~~~} g \cap h = \emptyset\big\}.
$$
\end{definition}
\LowerSpace
Given such a tree-structured set of groups $\G$ and its associated norm $\Omega$,
we are interested throughout the paper in the following hierarchical sparse coding problem,
\begin{equation}\label{eq:main_formulation}
 \min_{\alphab \in \R{p}} f(\alphab) + \lambda \Omega(\alphab),
\end{equation}
where $\Omega$ is the tree-structured norm we have previously introduced,
the non-negative scalar $\lambda$ is a regularization parameter controlling the sparsity of the solutions of~(\ref{eq:main_formulation}),
and $f$ a smooth convex loss function
(see Section~\ref{sec:optimization} for more details about the smoothness assumptions on $f$).
In the rest of the paper, we will mostly use the square loss
$
f(\alphab) = \frac{1}{2} \|\x - \D\alphab\|^2_2,
$
with a dictionary $\D$ in $\Real^{m \times p}$, but
the formulation of Eq.~(\ref{eq:main_formulation}) extends beyond this context. In particular one
can choose $f$ to be the logistic loss, which is commonly used for
classification problems \cite[e.g., see][]{Hastie2009}.

Before turning to optimization methods for the hierarchical sparse coding problem, we consider a particular instance.
The \textit{sparse group Lasso} was recently considered by \citet{Sprechmann2010} and \citet{Friedman2010} as an extension of the
group Lasso of~\citet{Yuan2006}. To induce sparsity both groupwise and within groups, \citet{Sprechmann2010} and \citet{Friedman2010} add an $\ell_1$ term to the regularization of the group Lasso, which given a partition $\mathcal{P}$ of $\{1,\ldots,p\}$ in disjoint groups yields a regularized problem of the form
$$\min_{\alphab \in \Real^p} \dfrac{1}{2} \|\x - \D\alphab\|^2_2 + \lambda \sum_{g \in \mathcal{P}} \|\alphab_{\gi}\|_2 + \lambda' \|\alphab\|_1.$$
Since $\mathcal{P}$ is a partition, the set of groups in $\mathcal{P}$ and the singletons form together a tree-structured set of groups according to definition~\ref{def:tree_struct} and the algorithm we will develop is therefore applicable to this problem.

\subsubsection{Optimization for Hierarchical Sparsity-Inducing Norms}
While generic approaches like interior-point methods~\cite{Boyd2004} and
subgradient descent schemes~\cite{Bertsekas1999} might be used to deal with the
nonsmooth norm $\Omega$, several dedicated procedures have been proposed.

In \citet{Zhao2009}, a boosting-like technique is used,
with a path-following strategy in the specific case where $\|.\|$ is the $\ell_\infty$-norm.
Based on the variational equality
\begin{equation}\label{eq:eta_trick}
\|\u\|_1 = \min_{\z \in \R{p}_+} \dfrac{1}{2}\big[ \sum_{j=1}^p \dfrac{\u_j^2}{\z_j} + \z_j \big],
\end{equation}
\citet{Kim2009} follow a reweighted least-square scheme that is
well adapted to the square loss function. 
To the best of our knowledge, a formulation of this type is however not available
when $\|.\|$ is the $\ell_\infty$-norm. In addition it requires an appropriate smoothing to
become provably convergent.  The same approach is considered
by~\citet{Bach2008}, but built upon an active-set strategy.  Other proposed
methods consist of a projected gradient descent with approximate projections
onto the ball $\{\u \in \R{p}; \, \Omega(\u)\leq \lambda\}$~\cite{Schmidt2010},
and an augmented-Lagrangian based technique~\cite{Sprechmann2010} for
solving a particular case with two-level hierarchies.

While the previously listed first-order approaches are
(1) loss-function dependent, and/or
(2) not guaranteed to achieve optimal convergence rates, and/or
(3) not able to yield sparse solutions without a somewhat arbitrary post-processing step,
we propose to resort to proximal methods\footnote{Note that the authors of \citet{Chen2010} have considered proximal methods for general group structure $\G$ when $\|.\|$ is the $\ell_2$-norm;
due to a smoothing of the regularization term, the convergence rate they obtained is suboptimal.}
that do not suffer from any of these drawbacks.
\section{Optimization} \label{sec:optimization}

We begin with a brief introduction to proximal methods, necessary to present
our contributions.  From now on, we assume that $f$ is convex and continuously
differentiable with Lipschitz-continuous gradient.  
It is worth mentioning that there exist various proximal schemes in the literature 
that differ in their settings (e.g., batch versus stochastic) and/or the assumptions made on $f$.
For instance, the material we develop in this paper could also be applied to online/stochastic frameworks~\citep{Duchi2009, Hu2009, Xiao2010} 
and to possibly nonsmooth functions $f$~\citep[e.g.,][and references therein]{Duchi2009, Xiao2010, Combettes2010}.
Finally, most of the technical proofs of this section are presented in Appendix~\ref{sec:proofs} for
readability.

\subsection{Proximal Operator for the Norm $\Omega$}\label{sec:prox_op}
Proximal methods have drawn increasing attention in the signal processing
\cite[e.g.,][and numerous references therein]{Becker2009, Wright2009,
Combettes2010} and the machine learning communities \citep[e.g.,][and
references therein]{Bach2010}, especially because of their convergence rates
(optimal for the class of first-order techniques) and their ability to deal
with large nonsmooth convex problems~\citep[e.g.,][]{Nesterov2007, Beck2009}.
In a nutshell, these methods can be seen as a natural extension of gradient-based techniques
when the objective function to minimize has a nonsmooth part.
Proximal methods are iterative procedures. The simplest version of this
class of methods linearizes at each iteration the function $f$ around the current estimate
$\hat{\alphab}$, and this estimate is 
updated as the (unique by strong convexity) solution of the \textit{proximal problem}, defined as follows:
$$
\min_{\alphab \in \R{p}}~ f(\hat{\alphab}) + (\alphab - \hat{\alphab})^\top \nabla\! f(\hat{\alphab}) + \lambda \Omega(\alphab) + \frac{L}{2}\|\alphab - \hat{\alphab}\|_2^2.
$$
The quadratic term keeps the update in a neighborhood where $f$ is close to its linear approximation,
and $L\!>\!0$ is a parameter which is an upper bound on the Lipschitz constant of $\nabla f$.
This problem can be equivalently rewritten~as:

$$
\min_{\alphab \in \R{p}}~ \frac{1}{2}\Big\|\alphab - \big(\hat{\alphab} - \frac{1}{L}\nabla\! f(\hat{\alphab}) \big) \Big\|_2^2 + \frac{\lambda}{L} \Omega(\alphab).
$$
Solving \textit{efficiently} and \textit{exactly} this problem is crucial to
enjoy the fast convergence rates of proximal methods.
In addition, when the nonsmooth term $\Omega$ is not present, the previous
proximal problem exactly leads to the standard gradient update rule.
More generally, we define the \textit{proximal operator}: 
\UpperSpace
\begin{definition}[Proximal Operator]~\newline
The proximal operator associated with our regularization term $\lambda\Omega$, which we denote by $\text{Prox}_{\lambda \Omega}$, is the function that maps a vector $\u \in \R{p}$ to the unique solution of
\begin{equation}\label{eq:prox_problem}
   \min_{\v \in \R{p}} \frac{1}{2} \NormDeux{\u-\v}^2 + \lambda  \Omega(\v).
\end{equation}
\end{definition}
\LowerSpace
This operator was initially introduced by~\citet{Moreau1962} to generalize the 
projection operator onto a convex set. What makes proximal methods appealing for solving sparse
decomposition problems is that this operator can be often computed in closed-form. For instance,
\begin{itemize}
   \item When $\Omega$ is the $\ell_1$-norm---that is, $\Omega(\u)=\|\u\|_1$,
      the proximal operator is the well-known elementwise soft-thresholding
      operator,
      \begin{displaymath}
         \forall j \in \InSet{p},~~\u_j \mapsto \sign(\u_j)(|\u_j|-\lambda)_+~~~ = \begin{cases}
            0 & \text{if}~~ |\u_j| \leq \lambda \\
            \sign(\u_j)(|\u_j|-\lambda) & \text{otherwise}. 
         \end{cases}
      \end{displaymath}
   \item When $\Omega$ is a group-Lasso penalty with $\ell_2$-norms---that is,
      $\Omega(\u)=\sum_{g\in \G}\|\u_{\gi}\|_2$, with $\G$ being a partition of
      $\InSet{p}$, the proximal problem is \emph{separable} in every group, 
      and the solution is a generalization of the soft-thresholding operator to
      groups of variables:
      \begin{displaymath}
         \forall g\in \G~~, \u_{\gi} \mapsto \u_{\gi}-\Pi_{\|.\|_2 \leq \lambda}[\u_{\gi}] = \begin{cases}
            0 & \text{if}~~ \|\u_{\gi}\|_2 \leq \lambda \\
	    \frac{\|\u_{\gi}\|_2-\lambda}{\|\u_{\gi}\|_2}\u_{\gi} & \text{otherwise}, \\
         \end{cases} 
      \end{displaymath}
      where $\Pi_{\|.\|_2 \leq \lambda}$ denotes the orthogonal projection onto the ball of 
      the $\ell_2$-norm of radius $\lambda$.
   \item When $\Omega$ is a group-Lasso penalty with $\ell_\infty$-norms---that is,
      $\Omega(\u)=\sum_{g\in \G}\|\u_{\gi}\|_\infty$, the solution is also a group-thresholding
      operator:
      \begin{displaymath}
         \forall g\in \G,~~\u_{\gi} \mapsto \u_{\gi} - \Pi_{\|.\|_1 \leq \lambda}[\u_{\gi}],
      \end{displaymath}
      where $\Pi_{\|.\|_1 \leq \lambda}$ denotes the orthogonal projection onto
      the $\ell_1$-ball of radius $\lambda$, which can be solved in $O(p)$
      operations~\cite{Brucker1984,Maculan1989}. Note that when $\|\u_{\gi}\|_1 \leq \lambda$, we
      have a group-thresholding effect, with $\u_{\gi} - \Pi_{\|.\|_1 \leq \lambda}[\u_{\gi}]=0$.
\end{itemize}
More generally, a classical result~\cite[see, e.g.,][]{Combettes2010,Wright2009}
says that the proximal operator for a norm~$\|.\|$ can be computed as the
residual of the projection of a vector onto a ball of
the dual-norm denoted by $\DualNorm{.}$, and defined for any vector~$\kappab$ in $\Real^p$ by
$\DualNorm{\kappab} \defin \max_{\Norm{\z} \leq 1} \z^\top \kappab$.\footnote{It is
easy to show that the dual norm of the $\ell_2$-norm is the $\ell_2$-norm
itself. The dual norm of the $\ell_\infty$ is the $\ell_1$-norm.}
This is a classical duality result for proximal operators leading to the
different closed forms we have just presented. We have indeed that
$\text{Prox}_{\lambda\|.\|_2} = \text{Id}-\Pi_{\|.\|_2 \leq \lambda}$ and
$\text{Prox}_{\lambda\|.\|_\infty} = \text{Id}-\Pi_{\|.\|_1 \leq \lambda}$,
where $\text{Id}$ stands for the identity operator.
Obtaining closed forms is, however, not possible 
anymore as soon as some groups in $\G$ overlap,
which is always the case in our hierarchical setting with tree-structured groups.
\subsection{A Dual Formulation of the Proximal Problem}
We now show that Eq.~(\ref{eq:prox_problem}) can be solved using a dual approach, as
described in the following lemma. The result relies on conic duality
\cite{Boyd2004}, and does not make any assumption on the choice of the norm~$\|.\|$: 
\UpperSpace
\begin{lemma}[Dual of the proximal problem]\label{lem:dual}~\newline
   Let $\u \in \R{p}$ and let us consider the problem
\begin{equation}
   \begin{split}
  & \max_{ \xib \in \RR{p}{|\G|} } -\frac{1}{2} \Big[ \Big\| \u - \sum_{g \in \G} \xib^g  \Big\|^2_2 - \NormDeux{\u}^2 \Big] \\
  \mbox{ s.t. }& \forall g\in\G,\ \DualNorm{\xib^g} \leq \lambda \weights_g
  \mbox{ and } \, \xib^g_j = 0 \, \mbox{ if } \, j \notin g ,\label{eq:dual_problem}
  \end{split}
\end{equation}
where $\xib = (\xib^g)_{g \in \G}$ and $\xib^g_j$ denotes the $j$-th coordinate of the vector $\xib^g$ in $\R{p}$.
Then, problems~(\ref{eq:prox_problem}) and~(\ref{eq:dual_problem})
are dual to each other and strong duality holds.
In addition, the pair of primal-dual variables $\{\v,\xib\}$ is optimal if and only if
$\xib$ is a feasible point of the optimization problem~(\ref{eq:dual_problem}), and
 \begin{equation}\label{eq:opt_cond_primaldual}
 {\textstyle \v = \u -\sum_{g \in \G} \xib^g}\quad \mathrm{and}\quad 
  \forall g \in \G,\ \xib^g = \Pi_{\|.\|_\ast \leq \lambda \weights_g}(\v_{\gi} +\xib^g),
 \end{equation}
where we denote by $\Pi_{\|.\|_\ast \leq \lambda \weights_g}$ the orthogonal projection
onto the ball $\{\kappab\in\R{p};\ \DualNorm{\kappab} \leq \lambda \weights_g \}$.
\end{lemma}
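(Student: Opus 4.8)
The plan is to dualize each norm appearing in $\Omega$ through its variational (dual-norm) representation, which turns the proximal problem into a convex–concave saddle-point problem, and then to invoke a minimax theorem to exchange the two optimizations. First I would use, for the generic norm $\Norm{\cdot}$ with dual norm $\DualNorm{\cdot}$, the identity
$$
\lambda\weights_g\Norm{\v_{\gi}} \;=\; \max_{\DualNorm{\xib^g}\le\lambda\weights_g,\ \xib^g_j=0\ (j\notin g)}\ (\xib^g)^\top\v ,
$$
where the support restriction is harmless because $\v_{\gi}$ vanishes outside $g$, so $(\xib^g)^\top\v = (\xib^g)^\top\v_{\gi}$. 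Summing over $g\in\G$ writes $\lambda\Omega(\v)$ as a maximum of a linear function of $\v$ over the set $\mathcal{K}\defin\{\xib:\DualNorm{\xib^g}\le\lambda\weights_g,\ \xib^g_j=0\text{ for }j\notin g\}$, so that \refEq{eq:prox_problem} becomes
$$
\min_{\v\in\R{p}}\ \max_{\xib\in\mathcal{K}}\ L(\v,\xib),\qquad L(\v,\xib)\defin \tfrac12\NormDeux{\u-\v}^2 + \sum_{g\in\G}(\xib^g)^\top\v .
$$

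Second, $L$ is continuous, strongly convex in $\v$ and concave (in fact linear) in $\xib$, while $\mathcal{K}$ is convex and compact (a finite product of dual-norm balls). Hence Sion's minimax theorem applies and lets me swap the $\min$ and the $\max$, which is exactly the strong duality asserted between \refEq{eq:prox_problem} and \refEq{eq:dual_problem}. For fixed $\xib$ the inner problem $\min_\v L(\v,\xib)$ is an unconstrained strongly convex quadratic; setting its gradient to zero yields the closed form $\v=\u-\sum_{g\in\G}\xib^g$, which is precisely the first relation in \refEq{eq:opt_cond_primaldual}. Substituting this value back and simplifying reproduces the dual objective $-\tfrac12[\NormDeux{\u-\sum_{g}\xib^g}^2-\NormDeux{\u}^2]$, confirming that \refEq{eq:dual_problem} equals $\max_{\xib\in\mathcal{K}}\min_\v L$.

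Finally, for the optimality characterization I would reason at the saddle point $(\v,\xib)$. Stationarity of $L$ in $\v$ gives the first relation as above. Optimality of $\xib$ separates across groups: each $\xib^g$ maximizes the linear map $\z\mapsto\z^\top\v_{\gi}$ over the ball $\{\z:\DualNorm{\z}\le\lambda\weights_g\}$, and the variational inequality for this maximizer, $(\v_{\gi})^\top(\z-\xib^g)\le 0$ for all feasible $\z$, is verbatim the inequality defining the Euclidean projection of $\v_{\gi}+\xib^g$ onto that same ball; that is, $\xib^g=\Pi_{\DualNorm{\cdot}\le\lambda\weights_g}(\v_{\gi}+\xib^g)$, the second relation. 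Conversely, a feasible $\xib$ satisfying both relations is a saddle point, hence primal-dual optimal by strong duality.

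The main obstacle I expect is the rigorous justification of the min–max exchange: one must verify compactness and the convex–concave structure carefully, noting that the quadratic term makes $L$ coercive in $\v$ so the outer minimum is attained and the hypotheses of the minimax theorem hold. A secondary technical point is that the projection in the second condition may be taken onto the full $p$-dimensional dual-norm ball rather than onto its intersection with $\{\xib^g_j=0,\ j\notin g\}$; this holds because $\v_{\gi}+\xib^g$ is supported on $g$ and projection onto the $\ell_1$- or $\ell_2$-ball preserves this support, so the support constraint is inactive at the optimum.
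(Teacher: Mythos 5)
Your proof is correct, and it takes a genuinely different route from the paper's. The paper proceeds by conic duality: it introduces auxiliary variables $z_g$ together with the cone $\mathcal{C}=\{(\v,z):\Norm{\v}\le z\}$ and its dual, verifies Slater's condition for generalized conic inequalities, writes the KKT conditions, and then converts complementary slackness---via the equality case of the generalized Cauchy--Schwarz inequality $\v_{\gi}^\top\xib^g\le\Norm{\v_{\gi}}\DualNorm{\xib^g}$---into the projection condition using a separate characterization of projections onto dual balls (Lemma~\ref{lem:opt_cond_proj}). You instead substitute the dual-norm variational representation of each term $\lambda\weights_g\Norm{\v_{\gi}}$ and exchange $\min$ and $\max$ by Sion's theorem, exploiting compactness of the product of dual balls and strong convexity in $\v$. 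This is more elementary: attainment and strong duality come for free without conic machinery, and your identification of the projection condition is slicker, since the variational inequality characterizing a maximizer of the linear form $\z\mapsto\z^\top\v_{\gi}$ over the ball is verbatim the variational inequality characterizing $\xib^g=\Pi_{\|.\|_\ast\le\lambda\weights_g}(\v_{\gi}+\xib^g)$, so no auxiliary lemma is needed. One caveat deserves emphasis: both your opening identity (with $\xib^g$ constrained to be supported on $g$) and your closing remark about dropping the support constraint require that zeroing the coordinates outside $g$ does not increase $\DualNorm{\cdot}$---equivalently, that the slice of the dual ball by the subspace of $g$-supported vectors coincides with its coordinate projection. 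This holds for \emph{absolute} norms, in particular every $\ell_q$-norm and hence the $\ell_2$/$\ell_\infty$ cases used throughout the paper, but it can fail for an arbitrary norm, whereas the lemma is presented as norm-independent. You are at parity with the paper here, since its proof imposes $\xib^g_j=0$ for $j\notin g$ on the conic multipliers, which is without loss of generality only under the same monotonicity property; still, it is worth stating the assumption explicitly rather than only gesturing at the $\ell_1$- and $\ell_2$-balls as you do in your final paragraph.
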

\LowerSpace
Note that we focus here on
specific tree-structured groups, but the previous lemma is valid regardless of
the nature of~$\G$.
The rationale of introducing such a dual formulation is to consider an
equivalent problem to~(\ref{eq:prox_problem}) that removes the issue of
overlapping groups at the cost of a larger number of variables. In
Eq.~(\ref{eq:prox_problem}), one is indeed looking for a vector $\v$
of size $p$, whereas one is considering a matrix~$\xib$ in
$\Real^{p \times |\G|}$ in Eq.~(\ref{eq:dual_problem}) with
$\sum_{g\in \G}|g|$ nonzero entries, but with separable (convex) constraints for
each of its columns.
 
This specific structure makes it possible to use
block coordinate ascent~\cite{Bertsekas1999}.
Such a procedure is presented in Algorithm~\ref{alg:bcd}. It
optimizes sequentially Eq.~(\ref{eq:dual_problem}) with respect to the
variable~$\xib^g$, while keeping fixed the other variables $\xib^h$, for $h\neq g$.  
It is easy to see from Eq.~(\ref{eq:dual_problem}) that such an update of a column $\xib^g$, for a group~$g$ in~$\G$,
amounts to computing the orthogonal projection of the vector $\u_{\gi}-\sum_{h \neq g}\xi_{\gi}^h$
onto the ball of radius $\lambda \weights_g$ of the dual norm $\DualNorm{.}$.
\begin{algorithm}[!hbtp]
\caption{Block coordinate ascent in the dual}\label{alg:bcd}
\begin{algorithmic}
\STATE Inputs: $\u \in \R{p}$ and set of groups $\G$.
\STATE Outputs: $(\v,\xib)$ (primal-dual solutions).
\STATE Initialization: $\xib=\0$.
\WHILE{ ( \textit{maximum number of iterations not reached} ) }
\FOR{ $g \in \G$ }
\STATE $\xib^g \leftarrow \Pi_{\|.\|_\ast \leq \lambda \weights_g} (\big[ \u - \sum_{h \neq g} \xib^{h}\big]_{\gi})$.
\ENDFOR
\ENDWHILE
\STATE $\v \leftarrow  \u - \sum_{g \in \G} \xib^{g}$.
\end{algorithmic}
\end{algorithm}
\subsection{Convergence in One Pass}\label{sec:one_pass_conv}
In general, Algorithm~\ref{alg:bcd} is not guaranteed to solve exactly
Eq.~(\ref{eq:prox_problem}) in a finite number of iterations.  However, when
$\Norm{.}$ is the $\ell_2$- or $\ell_\infty$-norm, and provided that the groups
in $\G$ are appropriately ordered, we now prove that only \textit{one pass} of
Algorithm~\ref{alg:bcd}, i.e., only one iteration over all groups, is
sufficient to obtain the exact solution of Eq.~(\ref{eq:prox_problem}).
This~result constitutes the main technical contribution of the paper and is the
key for the efficiency of our procedure.

Before stating this result, we need to introduce a lemma showing that,
given two nested groups $g,h$ such that $g \subseteq h \subseteq \{1,\dots, p\}$,
if $\xib^g$ is updated before $\xib^h$ in Algorithm~\ref{alg:bcd}, then
the optimality condition for $\xib^g$ is not perturbed by the update of $\xib^h$.
\UpperSpace
\begin{lemma}[Projections with nested groups]\label{lem:proj_nested_gr}~\\
Let $\Norm{.}$ denote either the $\ell_2$- or $\ell_\infty$-norm, and
$g$ and $h$ be two nested groups---that is, $g \subseteq h \subseteq \{1,\dots, p\}$.
Let $\u$ be a vector in $\R{p}$, and let us consider the successive projections
$$
\xib^g \defin \Pi_{\|.\|_\ast \leq t_g}(\u_{\gi})
	~\text{ and }~
        \xib^h \defin \Pi_{\|.\|_\ast \leq t_h}(\u_{\hi}-\xib^g),
$$
with $t_g, t_h > 0$. Let us introduce $\v = \u - \xib^g - \xib^h$.
The following relationships hold
$$\xib^g = \Pi_{\|.\|_\ast \leq t_g}(\v_{\gi} +\xib^g)\quad \text{and}\quad 
\xib^h = \Pi_{\|.\|_\ast \leq t_h}(\v_{\hi} +\xib^h).$$
\end{lemma}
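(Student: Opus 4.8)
The plan is to reduce both identities to statements about Euclidean projections onto the dual-norm balls, using two structural inputs: that for both the $\ell_2$- and $\ell_\infty$-norms the projection $\Pi_{\|.\|_\ast \leq t}$ onto the dual ball is a coordinatewise sign-preserving shrinkage that never enlarges the support of its argument, and that $g \subseteq h$. From the support fact, $\xib^g = \Pi_{\|.\|_\ast \leq t_g}(\u_{\gi})$ is supported in $g$ and, since $\u_{\hi}-\xib^g$ is then supported in $h$, $\xib^h$ is supported in $h$. Writing $\v = \u - \xib^g - \xib^h$ and substituting these supports, one gets $\v_{\hi} + \xib^h = \u_{\hi} - \xib^g$ and $\v_{\gi} + \xib^g = \u_{\gi} - \xib^h_{\gi}$, where $\xib^h_{\gi}$ is $\xib^h$ restricted to $g$. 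The $h$-identity is then immediate, because $\xib^h = \Pi_{\|.\|_\ast \leq t_h}(\u_{\hi} - \xib^g)$ by definition. All the work lies in the $g$-identity, which amounts to showing that subtracting $\xib^h_{\gi}$ leaves the projection of $\u_{\gi}$ unchanged:
\[
\Pi_{\|.\|_\ast \leq t_g}(\u_{\gi} - \xib^h_{\gi}) = \Pi_{\|.\|_\ast \leq t_g}(\u_{\gi}) = \xib^g .
\]
The guiding intuition is that on the coordinates in $g$ the residual $(\u - \xib^g)_{\gi}$ stays aligned with $\u_{\gi}$, and since $\xib^h$ is itself a shrinkage of this residual, $\xib^h_{\gi}$ is aligned with $\u_{\gi}$ and merely ``undoes'' part of the shrinkage already applied to produce $\xib^g$.

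For the $\ell_2$-norm (dual norm $\ell_2$) I would make this precise with the closed form $\Pi_{\|.\|_2 \leq t}(\w) = \min(1, t/\NormDeux{\w})\,\w$. Both $\xib^g$ and $\xib^h$ are nonnegative multiples of their arguments, so restricting to $g$ yields $\xib^h_{\gi} = \mu\,\u_{\gi}$ with $\mu \in [0,1)$. Hence $\u_{\gi} - \xib^h_{\gi} = (1-\mu)\,\u_{\gi}$ is again a nonnegative multiple of $\u_{\gi}$, and a short estimate shows that its $\ell_2$-norm stays $\geq t_g$ whenever $\xib^g$ sits on the boundary (and otherwise $\u_{\gi}$ was already inside the ball); in either situation the radial projection returns exactly $\xib^g$.

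For the $\ell_\infty$-norm (dual norm $\ell_1$) I would argue coordinatewise through soft-thresholding: write $\xib^g = \mathrm{ST}_{\tau_g}(\u_{\gi})$ and $\xib^h = \mathrm{ST}_{\tau_h}(\u_{\hi}-\xib^g)$, where $\mathrm{ST}_\tau$ thresholds at level $\tau$ and $\tau_g,\tau_h \geq 0$ are the unique thresholds produced by the two $\ell_1$-ball projections. Computing the residual on $g$ gives, for $j \in g$, $(\u-\xib^g)_j = \sign(\u_j)\min(|\u_j|,\tau_g)$, hence $(\xib^h)_j = \sign(\u_j)\big(\min(|\u_j|,\tau_g)-\tau_h\big)_+$. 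If $\tau_h \geq \tau_g$ this forces $\xib^h_{\gi} = \0$ and the claim is trivial; otherwise one checks, in the two regimes $|\u_j|\leq \tau_g$ and $|\u_j| > \tau_g$, that soft-thresholding $\u_{\gi}-\xib^h_{\gi}$ at level $\tau_h$ reproduces $\xib^g$ coordinate by coordinate. Since $\NormUn{\xib^g} = t_g$ in this regime, the strict monotonicity of $\tau \mapsto \NormUn{\mathrm{ST}_\tau(\u_{\gi}-\xib^h_{\gi})}$ then identifies $\tau_h$ as the \emph{unique} projection threshold, so $\Pi_{\|.\|_\ast \leq t_g}(\u_{\gi}-\xib^h_{\gi}) = \mathrm{ST}_{\tau_h}(\u_{\gi}-\xib^h_{\gi}) = \xib^g$.

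The main obstacle is precisely this $\ell_\infty$ case: unlike the $\ell_2$ case there is no single scalar shrinkage factor, so one must track the interaction of the two thresholds $\tau_g$ and $\tau_h$, carry out the sign/magnitude bookkeeping across both regimes, and—most delicately—invoke uniqueness of the $\ell_1$-projection threshold to certify that the coordinatewise-correct soft-thresholding is genuinely the Euclidean projection onto the $\ell_1$ ball, rather than merely agreeing with $\xib^g$ entrywise.
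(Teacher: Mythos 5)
Your proposal is correct, and it reaches the result by a genuinely different mechanism than the paper's proof, even though the skeleton is the same: both dispose of the $h$-identity immediately (it is definitional once one tracks supports and uses $\xib^g=\xib^g_{\hi}$), and both reduce the crux to showing $\xib^g = \Pi_{\|.\|_\ast \leq t_g}(\u_{\gi}-\xib_{\gi}^h)$. The paper never evaluates a projection in closed form: it verifies the abstract optimality condition of \refLemma{lem:opt_cond_proj}---either the argument lies inside the dual ball, or $\DualNorm{\xib^g}=t_g$ together with equality in the Cauchy-Schwarz ($\ell_2$) or $\ell_\infty$-$\ell_1$ H\"older inequality---via sign and collinearity bookkeeping on the residuals (two multipliers $\rho_g,\rho_h$ in the $\ell_2$ case, a chain of $\ell_\infty$-norm inequalities in the other). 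You instead compute the projections explicitly (radial scaling for the $\ell_2$-ball, soft-thresholding at a single calibrated threshold for the $\ell_1$-ball) and check the identity directly, closing the $\ell_\infty$ case by uniqueness of the threshold at level $t_g>0$. Your route is more elementary, makes the composition-of-scalings/thresholdings structure visible (exactly what Algorithm~\ref{alg:fbcd} later exploits), and compresses the $\ell_2$ case to the one-line estimate $(1-\nu)\NormDeux{\u_{\gi}}+\nu t_g \geq t_g$; its cost is importing the closed-form characterization of $\ell_1$-ball projection with its uniqueness, which the paper's normal-cone argument avoids, and the paper's formulation via the generic dual-norm equality condition integrates more directly with \refLemma{lem:dual} and with the $\ell_q$ counterexample analysis of Appendix~\ref{appendix:counter}, which reuses the same H\"older-equality conditions. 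Your case split $\tau_h \geq \tau_g$ versus $\tau_h < \tau_g$ does subsume the degenerate situations ($\tau_g=0$, $\tau_h=0$, $\NormUn{\u_{\gi}-\xib_{\gi}^h}=t_g$), though a full write-up should spell out each in a line.
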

\LowerSpace
The previous lemma establishes the convergence in one pass of
Algorithm~\ref{alg:bcd} in the case where $\G$ only contains two nested groups
$g \subseteq h$, provided that $\xib^g$ is computed before~$\xib^h$. Let us
illustrate this fact more concretely.  After initializing $\xib^g$ and
$\xib^h$ to zero, Algorithm~\ref{alg:bcd} first updates $\xib^g$ with the
formula $\xib^g \leftarrow \Pi_{\|.\|_\ast \leq \lambda \weights_g}(\u_{\gi})$, and then performs
the following update: $\xib^h \leftarrow \Pi_{\|.\|_\ast \leq \lambda
\weights_h}(\u_{\hi}-\xib^g)$ (where we have used that $\xib^g = \xib_{\hi}^g$ since $g
\subseteq h$). We are now in position to apply Lemma~\ref{lem:proj_nested_gr}
which states that the primal/dual variables $\{\v,\xib^g,\xib^h\}$
satisfy the optimality conditions~(\ref{eq:opt_cond_primaldual}), as described in Lemma~\ref{lem:dual}.
In only one pass over the groups $\{g,h\}$, we have in fact reached a
solution of the dual formulation presented in Eq.~(\ref{eq:dual_problem}), 
and in particular, the solution of the proximal problem~(\ref{eq:prox_problem}).

In the following proposition, this lemma is extended to general tree-structured
sets of groups $\G$:
\UpperSpace
\begin{proposition}[Convergence in one pass]\label{prop:one_pass_conv}~\\
        Suppose that the groups in $\G$ are ordered according to the total
order relation $\preceq$ of Definition~\ref{def:tree_struct}, and
that the norm $\Norm{.}$ is either the $\ell_2$- or $\ell_\infty$-norm.  Then,
after initializing $\xib$ to $\0$, \textit{a single pass} of Algorithm~\ref{alg:bcd}
over $\G$ with the order~$\preceq$ yields the solution of the proximal
problem~(\ref{eq:prox_problem}).
\end{proposition}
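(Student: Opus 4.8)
The plan is to check that the output of a single sweep already satisfies the optimality conditions of Lemma~\ref{lem:dual}: feasibility, together with the fixed-point relations $\xib^g = \Pi_{\|.\|_\ast \le \lambda\weights_g}(\v_{\gi} + \xib^g)$ for every $g \in \G$, where $\v = \u - \sum_{g \in \G}\xib^g$ is the associated primal variable. Feasibility is immediate, since every update in Algorithm~\ref{alg:bcd} is an orthogonal projection onto the ball $\{\DualNorm{\cdot} \le \lambda\weights_g\}$ acting only on the coordinates indexed by $g$; hence $\DualNorm{\xib^g} \le \lambda\weights_g$ and $\xib^g_j = 0$ for $j \notin g$ throughout. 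All the work therefore lies in the fixed-point relations, and the key will be that the order $\preceq$ prevents a group from being later overwritten on its own support by a strictly smaller group.

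Order the groups as $g_1 \preceq \cdots \preceq g_{|\G|}$ and let $\v^{(i)} \defin \u - \sum_{k \le i}\xib^{g_k}$ be the running residual after the $i$-th update (groups not yet visited still equal $\0$, so $\v^{(i)}$ coincides with the current value of $\u - \sum_{g \in \G}\xib^g$). I would prove by induction on $i$ the invariant that, immediately after the $i$-th update, every group $g \in \{g_1,\dots,g_i\}$ satisfies $\xib^{g} = \Pi_{\|.\|_\ast \le \lambda\weights_{g}}(\v^{(i)}_{\gi} + \xib^{g})$; taking $i = |\G|$ then reproduces exactly the conditions of Lemma~\ref{lem:dual}. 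The relation for the group just updated, say $g = g_i$, is a one-line check: as $\xib^{g}$ is supported in $g$, we have $\v^{(i)}_{\gi} + \xib^{g} = \big[\u - \sum_{k<i}\xib^{g_k}\big]_{\gi}$, which is precisely the vector projected to form $\xib^{g}$; the base case $i=1$ is the special instance with no prior subtraction.

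For the inductive step I must show that the $i$-th update preserves the relations already holding for the earlier groups. Fix a group $g \in \{g_1,\dots,g_{i-1}\}$, write $h \defin g_i$ for the group being updated, and set $\v^- \defin \v^{(i-1)}$ and $\v^+ \defin \v^{(i)} = \v^- - \xib^{h}$. The ordering is decisive here: since $g \preceq h$, Definition~\ref{def:tree_struct} forces either $g \cap h = \emptyset$ or $g \subseteq h$, and rules out $h \subsetneq g$. If $g \cap h = \emptyset$, then $\xib^{h}$ vanishes on $g$, so $\v^+_{\gi} = \v^-_{\gi}$ and the relation carries over verbatim. The only substantive case is $g \subseteq h$, which I would reduce to Lemma~\ref{lem:proj_nested_gr} by introducing $\tilde\u \defin \v^- + \xib^{g}$: the induction hypothesis reads $\xib^{g} = \Pi_{\|.\|_\ast \le \lambda\weights_g}(\tilde\u_{\gi})$, while the update reads $\xib^{h} = \Pi_{\|.\|_\ast \le \lambda\weights_h}(\v^-_{\hi}) = \Pi_{\|.\|_\ast \le \lambda\weights_h}(\tilde\u_{\hi} - \xib^{g})$, using that $\xib^{g}$ is supported in $g \subseteq h$. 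This is verbatim the configuration of Lemma~\ref{lem:proj_nested_gr}, whose conclusion gives $\xib^{g} = \Pi_{\|.\|_\ast \le \lambda\weights_g}\big((\tilde\u - \xib^{g} - \xib^{h})_{\gi} + \xib^{g}\big)$; since $\tilde\u - \xib^{g} - \xib^{h} = \v^+$, this is exactly the relation required after step $i$. I expect the main obstacle to be precisely this nested case, the delicate part being the bookkeeping — the choice of $\tilde\u$, the support conventions, and the telescoping of the residuals — that lets Lemma~\ref{lem:proj_nested_gr} apply without modification; the tree structure intervenes only through $\preceq$, which guarantees that every earlier group meeting $h$ is contained in it, so no further case can occur.
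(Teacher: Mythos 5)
Your proposal is correct and follows essentially the same route as the paper's own proof: the same induction over the groups in the order $\preceq$, with the same auxiliary vector $\tilde\u = \v^- + \xib^{g}$ reducing the nested case $g \subseteq h$ to Lemma~\ref{lem:proj_nested_gr}, and the final verification against the optimality conditions of Lemma~\ref{lem:dual}. If anything, your write-up is slightly more explicit than the paper's, since you spell out the disjoint case $g \cap h = \emptyset$ and the fixed-point check for the group just updated, which the paper leaves implicit.
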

\begin{proof}
The proof largely relies on \refLemma{lem:proj_nested_gr} and proceeds by
induction.
By definition of Algorithm~\ref{alg:bcd},
the feasibility of $\xib$ is always guaranteed.
We consider the following induction hypothesis
$$
\mathcal{H}(h) \defin \big\{ \forall g \preceq h,\ \text{it holds that}\
\xib^g = \Pi_{\|.\|_\ast \leq \lambda \weights_g}([\u-{\textstyle\sum_{g' \preceq h}} \xib^{g'}]_{\gi} + \xib^{g} )  \big\}.
$$
Since the dual variables $\xib$ are initially equal to zero,
the summation over $g' \preceq h,\ g'\neq g$ is equivalent to a summation over $g'\neq g$.
We initialize the induction with the first group in $\G$, that, by definition
of $\preceq$, does not contain any other group.
The first step of Algorithm~\ref{alg:bcd} easily shows that the induction hypothesis $\mathcal{H}$ is satisfied
for this first group.

We now assume that $\mathcal{H}(h)$ is true and consider the next group $h'$, $h \preceq h'$,
in order to prove that $\mathcal{H}(h')$ is also satisfied.
We have for each group $g \subseteq h$,
$$
   \xib^g = \Pi_{\|.\|_\ast \leq \lambda \weights_g}([\u-{\textstyle\sum_{g' \preceq h}} \xib^{g'}]_{\gi} + \xib^{g} )
          = \Pi_{\|.\|_\ast \leq \lambda \weights_g}([\u-{\textstyle\sum_{g' \preceq h}} \xib^{g'} + \xib^{g}]_{\gi} ).
$$
Since $\xib^{g}_{\hi'}=\xib^{g}$ for $g \subseteq h'$, we have
$$
[\u-{\textstyle\sum_{g' \preceq h}} \xib^{g'}]_{\hi'} = [\u-{\textstyle\sum_{g' \preceq h}} \xib^{g'}]_{\hi'}+ \xib^{g} - \xib^{g}
= [\u-{\textstyle\sum_{g' \preceq h}} \xib^{g'}+ \xib^{g}]_{\hi'} - \xib^{g},
$$
and following the update rule for the group $h'$,
$$
   \xib^{h'} = \Pi_{\|.\|_\ast \leq \lambda \weights_{h'}}([\u-{\textstyle\sum_{g' \preceq h}} \xib^{g'}]_{\hi'})
             = \Pi_{\|.\|_\ast \leq \lambda \weights_{h'}}([\u-{\textstyle\sum_{g' \preceq h}} \xib^{g'}+ \xib^{g}]_{\hi'} - \xib^{g}).
$$
At this point, we can apply \refLemma{lem:proj_nested_gr} for each group $g \subseteq h$, which proves
that the induction hypothesis $\mathcal{H}(h')$ is true.
Let us introduce $\v \defin \u - \sum_{g\in\G}\xib^g$.
We have shown that for all $g$ in $\G$,
$
\xib^{g} = \Pi_{\|.\|_\ast \leq \lambda \weights_{g}}( \v_\gi + \xib^{g} ).
$
As a result, the pair $\{\v,\xib\}$ satisfies the optimality conditions~(\ref{eq:opt_cond_primaldual}) 
of problem~(\ref{eq:dual_problem}). Therefore, after one complete pass over $g\in\G$,
the primal/dual pair $\{\v,\xib\}$ is optimal, and in particular, 
$\v$ is the solution of problem~(\ref{eq:prox_problem}).
\end{proof}
Using conic duality, we have derived a dual formulation of the proximal
operator, leading to Algorithm~\ref{alg:bcd} which is generic and works for any
norm $\|.\|$, as long as one is able to perform projections onto balls of the
dual norm~$\|.\|_\ast$. We have further shown that when $\|.\|$ is the $\ell_2$-
or the $\ell_\infty$-norm, a single pass provides the exact solution
when the groups $\G$ are correctly ordered.  We show however in Appendix~\ref{appendix:counter}, that, perhaps surprisingly, the conclusions of Proposition~\ref{prop:one_pass_conv} do not hold for general $\ell_q$-norms, if $q\notin \{1,2,\infty\}$.
Next, we give another interpretation of
this result.

\subsection{Interpretation in Terms of Composition of Proximal Operators}
In Algorithm~\ref{alg:bcd}, since all the vectors $\xib^g$ are initialized to $\0$, 
when the group $g$ is considered, we have by induction $\u-\sum_{h\neq g} \xib^h=\u-\sum_{h \preceq g} \xib^h$. Thus,
to maintain at each iteration of the inner loop $\v=\u-\sum_{h\neq g}\xib^h$ one can instead update $\v$ after updating $\xib^g$ according to $\v \leftarrow \v-\,\xib^g$. Moreover, since~$\xib^g$ is no longer needed in the algorithm, and since only the entries of $\v$ indexed by $g$ are updated, we can combine the two updates into $\v_{\gi} \leftarrow \v_{\gi} - \Pi_{\|.\|_\ast \leq \lambda \weights_g} (\v_{\gi})$,
leading to a simplified Algorithm~\ref{alg:bcd2} equivalent to Algorithm~\ref{alg:bcd}.

\begin{algorithm}[!hbtp]
\caption{Practical Computation of the Proximal Operator for $\ell_2$- or $\ell_\infty$-norms.}\label{alg:bcd2}
\begin{algorithmic}
\STATE Inputs: $\u \in \R{p}$ and an ordered tree-structured set of groups $\G$.
\STATE Outputs: $\v$ (primal solution).
\STATE Initialization: $\v=\u$.
\FOR{ $g \in \G$, following the order $\preceq$, }
\STATE $\displaystyle \v_{\gi} \leftarrow \v_{\gi} - \Pi_{\|.\|_\ast \leq \lambda \weights_g} (\v_{\gi})$.
\ENDFOR
\end{algorithmic}
\end{algorithm}
Actually, in light of the classical relationship between proximal operator and projection 
(as discussed in Section~\ref{sec:prox_op}), 
it is easy to show that each update $\v_{\gi} \leftarrow \v_{\gi} -
\Pi_{\|.\|_\ast \leq \lambda \weights_g} (\v_{\gi})$ is equivalent to $\v_{\gi}
\leftarrow \text{Prox}_{\lambda \weights_g\|.\|}[\v_{\gi}]$.
To simplify the notations, we define the proximal operator for a group $g$ in $\G$ as
$\text{Prox}^{g}(\u) \defin  \text{Prox}_{\lambda\weights_g\|.\|}(\u_{\gi})$ for
every vector $\u$ in $\Real^p$.

Thus, Algorithm~\ref{alg:bcd2} in fact performs a sequence of $|\G|$ proximal
operators, and we have shown the following corollary of Proposition~\ref{prop:one_pass_conv}:
\UpperSpace
\begin{corollary}[Composition of Proximal Operators]\label{cor:composition}~\\
Let $g_1 \preccurlyeq \ldots \preccurlyeq g_m$ such that $\G=\{g_1,\ldots,g_m\}$. The proximal operator $\text{Prox}_{\lambda \Omega}$ associated with the norm $\Omega$ can be written as the composition of elementary operators:
$$\text{Prox}_{\lambda \Omega}=\text{Prox}^{g_{m}} \circ \ldots \circ \text{Prox}^{g_{1}}.$$
\end{corollary}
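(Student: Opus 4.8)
The plan is to read the corollary off as a bookkeeping consequence of Proposition~\ref{prop:one_pass_conv} together with the two equivalences already established in the surrounding text, so the whole argument amounts to assembling a chain of identifications. First I would record what Proposition~\ref{prop:one_pass_conv} buys us: a single pass of Algorithm~\ref{alg:bcd} over $\G$, taken in the order $g_1 \preccurlyeq \ldots \preccurlyeq g_m$ of Definition~\ref{def:tree_struct}, returns the primal variable $\v$ solving the proximal problem~(\ref{eq:prox_problem}), which by definition of the proximal operator is exactly $\text{Prox}_{\lambda\Omega}(\u)$. Next I would invoke the equivalence between Algorithm~\ref{alg:bcd} and the streamlined Algorithm~\ref{alg:bcd2}, justified in the paragraph preceding the latter: since every $\xib^g$ starts at $\0$, maintaining $\v = \u - \sum_{h \neq g}\xib^h$ and folding the two updates into $\v_{\gi} \leftarrow \v_{\gi} - \Pi_{\|.\|_\ast \leq \lambda\weights_g}(\v_{\gi})$ reproduces the same sequence of primal iterates. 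Hence Algorithm~\ref{alg:bcd2}, run with the order $\preceq$, also outputs $\text{Prox}_{\lambda\Omega}(\u)$.

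The second ingredient I would use is the identification of one loop iteration with an elementary proximal operator. Using the duality $\text{Prox}_{\lambda\weights_g\|.\|} = \text{Id} - \Pi_{\|.\|_\ast \leq \lambda\weights_g}$ recalled in Section~\ref{sec:prox_op}, the update $\v_{\gi} \leftarrow \v_{\gi} - \Pi_{\|.\|_\ast \leq \lambda\weights_g}(\v_{\gi})$ is precisely the application of $\text{Prox}^{g}$ to the current iterate, where $\text{Prox}^{g}$ acts on the $g$-block of its argument and leaves all other coordinates untouched. I would therefore describe one iteration of the for-loop as the map $\v \mapsto \text{Prox}^{g}(\v)$ on $\Real^p$.

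Finally I would unroll the loop. Starting from $\v = \u$ and applying the iterations in the order $g_1, \ldots, g_m$ produces successively $\text{Prox}^{g_1}(\u)$, then $\text{Prox}^{g_2}(\text{Prox}^{g_1}(\u))$, and so on, so that the terminal iterate is $(\text{Prox}^{g_m} \circ \ldots \circ \text{Prox}^{g_1})(\u)$. Since this terminal iterate coincides with $\text{Prox}_{\lambda\Omega}(\u)$ and $\u$ is arbitrary, the operator identity claimed in the corollary follows.

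I do not expect a genuine obstacle, as the statement merely repackages Proposition~\ref{prop:one_pass_conv}. The one point demanding care is the precise meaning of $\text{Prox}^{g}$ as an operator on all of $\Real^p$: one must check that it modifies only the coordinates indexed by $g$ and preserves the complement, so that composing the $m$ block-wise maps in loop order faithfully coincides with the coordinate updates performed by Algorithm~\ref{alg:bcd2} (in particular, that the order of composition, with innermost $\text{Prox}^{g_1}$ and outermost $\text{Prox}^{g_m}$, matches the sequential order of the for-loop). Verifying this matching is the only step where I would be deliberate rather than routine.
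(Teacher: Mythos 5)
Your proposal is correct and follows essentially the same route as the paper: the paper also obtains the corollary by combining Proposition~\ref{prop:one_pass_conv} with the equivalence of Algorithm~\ref{alg:bcd} and Algorithm~\ref{alg:bcd2}, and then identifying each update $\v_{\gi} \leftarrow \v_{\gi} - \Pi_{\|.\|_\ast \leq \lambda \weights_g}(\v_{\gi})$ with $\text{Prox}^{g}$ via the duality $\text{Prox}_{\lambda\|.\|} = \text{Id} - \Pi_{\|.\|_\ast \leq \lambda}$. Your final caveat about $\text{Prox}^{g}$ acting only on the coordinates indexed by $g$ is exactly the right point to check and matches the paper's convention $\text{Prox}^{g}(\u) \defin \text{Prox}_{\lambda\weights_g\|.\|}(\u_{\gi})$.
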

\LowerSpace

\subsection{Efficient Implementation and Complexity}\label{sec:efficiency}
Since Algorithm~\ref{alg:bcd2} involves $|\G|$
projections on the dual balls (respectively the $\ell_2$- and the $\ell_1$-balls
for the $\ell_2$- and $\ell_\infty$-norms) of vectors in $\Real^p$, in a first approximation, its complexity is at most $O(p^2)$, because each
of these projections can be computed in $O(p)$
operations~\cite{Brucker1984,Maculan1989}.
But in fact, the algorithm performs one projection for each group $g$ involving $|g|$ variables, and
the total complexity is therefore $O\Big(\sum_{g \in \G}|g|\Big)$.
By noticing that if $g$ and~$h$ are two groups with the same depth in the tree,
then $g \cap h = \emptyset$, it is easy to show that the number of variables involved in all the projections
is less than or equal to $dp$, where $d$ is the depth of the tree:
\UpperSpace
\begin{lemma}[Complexity of Algorithm \ref{alg:bcd2}]\label{lemma:linfty}~\newline
   Algorithm \ref{alg:bcd2} gives the solution of the primal problem Eq.~(\ref{eq:prox_problem}) in $O(pd)$ operations,
   where $d$ is the depth of the tree.
\end{lemma}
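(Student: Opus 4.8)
The plan is to split the claim into two independent pieces: correctness and complexity. Correctness is already in hand, since \refProp{prop:one_pass_conv} guarantees that a single pass of Algorithm~\ref{alg:bcd2}, with the groups traversed according to $\preceq$ and $\Norm{.}$ equal to the $\ell_2$- or $\ell_\infty$-norm, returns exactly the minimizer $\v$ of the proximal problem~(\ref{eq:prox_problem}). Thus the lemma reduces to counting the arithmetic operations performed during that one pass.

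First I would account for the cost of a single iteration of the loop. For a group $g\in\G$, the update $\v_{\gi} \leftarrow \v_{\gi} - \Pi_{\|.\|_\ast \leq \lambda \weights_g}(\v_{\gi})$ only touches the $|g|$ coordinates indexed by $g$, since the remaining entries of $\v_{\gi}$ are zero by definition. When $\Norm{.}$ is the $\ell_2$-norm, the dual ball is an $\ell_2$-ball and the projection is the closed-form rescaling recalled in \refSec{sec:prox_op}, costing $O(|g|)$; when $\Norm{.}$ is the $\ell_\infty$-norm, the projection is onto an $\ell_1$-ball, computable in $O(|g|)$ operations~\cite{Brucker1984,Maculan1989}. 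Summing over the loop, the whole pass therefore costs $O\big(\sum_{g\in\G}|g|\big)$, so the remaining task is to bound this sum by $dp$.

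The crux of the argument is the combinatorial observation quoted before the statement: two distinct groups $g=\text{descendants}(j)$ and $h=\text{descendants}(k)$ attached to nodes $j,k$ lying at the same depth of $\TT$ satisfy $g\cap h=\emptyset$. Indeed, at equal depth neither node is an ancestor of the other, so no index can simultaneously descend from both. Stratifying $\G$ by the depth $\ell\in\{1,\dots,d\}$ of the node defining each group, the groups at a fixed depth $\ell$ are pairwise disjoint subsets of $\{1,\dots,p\}$, whence the sum of their cardinalities is at most $p$. Summing this bound over the $d$ depth levels yields $\sum_{g\in\G}|g|\le dp$, and combining it with the per-group cost gives the announced $O(pd)$ complexity.

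I expect the only delicate point to be the disjointness claim and its use: one must check that each group arises as $\text{descendants}(j)$ for a well-defined node $j$, so that ``the depth of a group'' makes sense, and that the union of same-depth descendant sets stays inside $\{1,\dots,p\}$. Both facts are immediate consequences of the tree structure, so once the depth-stratification idea is in place no real obstacle remains and the rest is bookkeeping.
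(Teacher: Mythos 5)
Your proposal is correct and takes essentially the same route as the paper: the paper also reduces correctness to Proposition~\ref{prop:one_pass_conv}, charges $O(|g|)$ per projection (closed-form $\ell_2$ scaling, and linear-time $\ell_1$-ball projection via \cite{Brucker1984,Maculan1989}), and bounds $\sum_{g\in\G}|g|\leq dp$ by the observation that groups attached to nodes at the same depth are pairwise disjoint. The only difference is that you make explicit the depth-stratification bookkeeping that the paper dismisses as ``easy to show.''
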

\LowerSpace
Lemma \ref{lemma:linfty} should not suggest that the complexity is linear in $p$, 
since $d$ could depend of $p$ as well, 
and in the worst case the hierarchy is a chain, yielding $d=p-1$. 
However, in a balanced tree, $d=O(\log(p))$. In practice, the structures we have considered experimentally are relatively flat, with a depth not exceeding $d=5$, and the complexity is therefore almost linear.

Moreover, in the case of the $\ell_2$-norm, it is actually possible to propose
an algorithm with complexity $O(p)$. Indeed, in that case each of the proximal
operators $\Prox^g$ is a scaling operation: $\v_{\gi} \leftarrow \big(1-
\lambda  \weights_g / \|\v_{\gi}\|_2 \big )_+ \v_{\gi}$. The composition of these
operators in Algorithm~\ref{alg:bcd} thus corresponds to performing sequences
of scaling operations.  The idea behind Algorithm~\ref{alg:fbcd} is that the
corresponding scaling factors depend only on the norms of the successive
residuals of the projections and that these norms can be computed recursively
in one pass through all nodes in $O(p)$ operations; finally, computing and
applying all scalings to each entry takes then again $O(p)$ operations.

To formulate the algorithm, two new notations are used: for a group $g$ in $\G$, we
denote by $\text{root}(g)$ the indices of the variables that are at the root of the
subtree corresponding to~$g$,\footnote{As a reminder, $\text{root}(g)$ 
is not a singleton when several dictionary elements are considered per node.} 
and by $\text{children}(g)$ the set of groups that are
the children of $\text{root}(g)$ in the tree. For example, in the
tree presented in Figure~\ref{fig:groups}, $\text{root}(\{3,5,6\})\!=\!\{3\}$, $\text{root}(\{1,2,3,4,5,6\})\!=\!\{1\}$,
$\text{children}(\{3,5,6\})\!=\!\{ \{5\},\{6\} \}$, and $\text{children}(\{1,2,3,4,5,6\})\!=\!\{ \{2,4\},\{3,5,6\} \}$.
Note that all the groups of
$\text{children}(g)$ are necessarily included in $g$.
\begin{algorithm}[t]
   \caption{Fast computation of the Proximal operator for $\ell_2$-norm case.}\label{alg:fbcd}
\begin{algorithmic}[1]
\REQUIRE $\u \in \R{p}$ (input vector), set of groups $\G$, $(\weights_g)_{g \in \G}$
(positive weights), and $g_0$ (root of the tree).
\STATE Variables: $\rhob=(\rho_g)_{g \in \G}$ in $\R{|\G|}$ (scaling factors); $\v$ in $\R{p}$ (output, primal variable).
\STATE \texttt{computeSqNorm}($g_0$).
\STATE \texttt{recursiveScaling}($g_0$,$1$).
\RETURN $\v$ (primal solution).
\end{algorithmic}
\vspace*{0.2cm}
{\bf Procedure} \texttt{computeSqNorm}($g$)
\begin{algorithmic}[1]
   \STATE Compute the squared norm of the group: $\eta_g \leftarrow \|\u_{\text{root}(g)}\|_2^2 + \sum_{h \in \text{children}(g)}
   \text{\texttt{computeSqNorm}}(h)$.
   \STATE Compute the scaling factor of the group: $\rho_g \leftarrow \big (1-\lambda \weights_g/ \sqrt{\eta_g} \big )_+$.
   \RETURN $\eta_g\rho_g^2$.
\end{algorithmic}
{\bf Procedure} \texttt{recursiveScaling}($g$,$t$)
\begin{algorithmic}[1]
   \STATE $\rho_g \leftarrow t\rho_g$.
   \STATE $\v_{\text{root}(g)} \leftarrow \rho_g \u_{\text{root}(g)}$.
   \FOR{$h \in \text{children}(g)$}
   \STATE  \texttt{recursiveScaling}($h$,$\rho_g$).
   \ENDFOR
\end{algorithmic}
\end{algorithm}
The next lemma is proved in Appendix~\ref{sec:proofs}.
\UpperSpace
\begin{lemma}[Correctness and complexity of Algorithm \ref{alg:fbcd}]\label{lemma:l2}~\newline
   When $\Norm{.}$ is chosen to be the $\ell_2$-norm, Algorithm \ref{alg:fbcd} gives the solution of the primal problem Eq.~(\ref{eq:prox_problem}) in $O(p)$ operations.
\end{lemma}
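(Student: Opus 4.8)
The plan is to prove that Algorithm~\ref{alg:fbcd} returns exactly the same vector $\v$ as Algorithm~\ref{alg:bcd2} specialized to the $\ell_2$-norm, and then to invoke \refProp{prop:one_pass_conv} (equivalently Corollary~\ref{cor:composition}), which already guarantees that the latter solves the proximal problem~(\ref{eq:prox_problem}). Recall that for the $\ell_2$-norm each elementary operator $\Prox^g$ acts as the pure scaling $\v_{\gi}\leftarrow \rho_g\,\v_{\gi}$ with $\rho_g=\big(1-\lambda\weights_g/\|\v_{\gi}\|_2\big)_+$ evaluated at the current value of $\v_{\gi}$. Because the groups are processed in the order $\preceq$, i.e. from the leaves up to the root $g_0$, the whole sweep of Algorithm~\ref{alg:bcd2} amounts to multiplying each coordinate $j$ by the product of the factors $\rho_g$ of all groups $g$ containing $j$, namely the groups rooted at the ancestors of $j$. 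The two procedures of Algorithm~\ref{alg:fbcd} compute these factors (\texttt{computeSqNorm}) and then apply the corresponding products (\texttt{recursiveScaling}), so it suffices to show each procedure does so correctly.

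First I would show, by induction over the groups in the order $\preceq$, that the scalar $\eta_g$ computed in \texttt{computeSqNorm} equals $\|\v_{\gi}\|_2^2$ at the instant group $g$ is treated by Algorithm~\ref{alg:bcd2}, so that the $\rho_g$ of step~2 of \texttt{computeSqNorm} matches the scaling factor used there. The base case is a leaf group, whose subvector is still untouched, giving $\eta_g=\|\u_{\text{root}(g)}\|_2^2$. For the inductive step, the crucial observation is that between the moment a child group $h\in\text{children}(g)$ is processed and the moment $g$ itself is processed, no scaling ever touches a coordinate of $h$: the groups containing a fixed coordinate $k\in h$ form a chain, and since $\text{root}(h)$ is a direct child of $\text{root}(g)$, the group immediately following $h$ in that chain is $g$ itself, so there is no intermediate group. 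Hence at the time $g$ is treated the subvector $\v_{\hi}$ already carries its final factor $\rho_h$, giving $\|\v_{\hi}\|_2^2=\rho_h^2\,\eta_h$, while the root coordinates $\text{root}(g)$ are still equal to $\u_{\text{root}(g)}$. Summing over the disjoint pieces $\text{root}(g)$ and the children subtrees yields precisely the recursion $\eta_g=\|\u_{\text{root}(g)}\|_2^2+\sum_{h\in\text{children}(g)}\rho_h^2\eta_h$ implemented by the algorithm.

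Next I would verify that \texttt{recursiveScaling} applies the correct cumulative factor to every coordinate. A downward induction shows that \texttt{recursiveScaling}$(g,t)$ is always called with $t=\prod_{g'\supsetneq g}\rho_{g'}$, the product over the strict ancestor groups of $g$; after the update $\rho_g\leftarrow t\rho_g$ this becomes $\prod_{g'\supseteq g}\rho_{g'}$, which is then applied to $\u_{\text{root}(g)}$. Since the groups containing a node of $\text{root}(g)$ are exactly those rooted at its ancestors, this reproduces the total scaling that coordinate receives in Algorithm~\ref{alg:bcd2}; and as the sets $\text{root}(g)$, $g\in\G$, partition $\{1,\dots,p\}$, every coordinate ends up with its correct value. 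Combining the two inductions shows that the vector $\v$ produced by Algorithm~\ref{alg:fbcd} coincides with the one produced by Algorithm~\ref{alg:bcd2}, which equals $\Prox_{\lambda\Omega}(\u)$.

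Finally, for the complexity, each of the two procedures visits every group exactly once and performs at $g$ only $O(|\text{root}(g)|+|\text{children}(g)|)$ arithmetic. Summing, $\sum_{g\in\G}|\text{root}(g)|=p$ because the roots partition the index set, and $\sum_{g\in\G}|\text{children}(g)|=|\G|-1\le p-1$, so the total cost is $O(p)$. The main obstacle is the inductive bookkeeping of the second paragraph: one must argue carefully that the bottom-up squared norms $\eta_g$ are exactly the norms encountered by the sequential sweep of Algorithm~\ref{alg:bcd2}, which hinges on the fact that a child subtree is left untouched between its own processing and that of its parent; once this is established, the scaling pass and the complexity count are routine.
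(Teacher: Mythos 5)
Your proof is correct and follows essentially the same route as the paper's: an upward induction showing that $\eta_g$ equals $\|\v_{\gi}\|_2^2$ at the instant group $g$ is treated in the one-pass sweep of Algorithm~\ref{alg:bcd2}, a downward induction showing \texttt{recursiveScaling} applies the product of ancestor factors, and the same $O(p)$ accounting. The only difference is that you make explicit the step the paper leaves implicit---that a child subtree is untouched between its own processing and its parent's, because every group containing the child other than the parent must contain the parent and hence comes later in the order $\preceq$---which is precisely the right point to justify, and your complexity count via $\sum_g |\text{root}(g)| = p$ and $\sum_g |\text{children}(g)| = |\G|-1$ is if anything slightly more careful than the paper's.
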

\LowerSpace
So far the dictionary $\D$ was fixed to be for example a wavelet basis.
In the next section, we apply the tools we developed for solving efficiently problem~(\ref{eq:main_formulation}) to learn a dictionary $\D$ adapted to our hierarchical sparse coding formulation.

\section{Application to Dictionary Learning} \label{sec:dictionary_learning}

We start by briefly describing dictionary learning.
\subsection{The Dictionary Learning Framework}
Let us consider a set
$ \X = [\x^1,\dots,\x^n]$ in $\RR{m}{n}$
of $n$ signals of dimension $m$.
Dictionary learning is a matrix factorization problem which aims at representing these
signals as linear combinations of the dictionary elements,
that are the columns of a matrix $\D = [\d^1,\dots,\d^p]$ in $\RR{m}{p}$.
More precisely, the dictionary $\D$ is \textit{learned} along with a matrix of
decomposition coefficients $\A = [\alphab^1,\dots,\alphab^n]$ in~$\RR{p}{n}$, so that $\x^i \approx \D \alphab^i$ for every signal~$\x^i$.

While learning simultaneously $\D$ and $\A$, one may want to encode specific
prior knowledge about the problem at hand, such as, for example, the positivity of the
decomposition \cite{Lee1999}, or the sparsity of~$\A$~\cite{Olshausen1997,Aharon2006,Lee2007,Mairal2010}.
This leads to penalizing or constraining
$(\D,\A)$ and results in the following formulation:
\begin{equation}\label{eq:general_formulation}
   \min_{\D \in \mathcal{D},\A \in \mathcal{A}}
   \frac{1}{n}\sum_{i=1}^n\Big[\frac{1}{2} \|\x^i-\D\alphab^i\|_2^2 + \lambda \Psi(\alphab^i) \Big],
\end{equation}
where $\mathcal{A}$ and $\mathcal{D}$ denote two convex sets and $\Psi$ is a regularization term,
usually a norm or a squared norm, whose effect is controlled by the regularization parameter $\lambda > 0$.
Note that $\mathcal{D}$ is assumed to be bounded to avoid any degenerate solutions of Problem~(\ref{eq:general_formulation}).
For instance, the standard sparse coding formulation
takes $\Psi$ to be the $\ell_1$-norm,
$\mathcal{D}$ to be the set of matrices in $\RR{m}{p}$ whose columns
have unit $\ell_2$-norm,
with $\mathcal{A}=\Real^{p \times n}$~\cite{Olshausen1997,Lee2007,Mairal2010}.

However, this classical setting treats each dictionary element independently
from the others, and does not exploit possible relationships between them.  
To embed the dictionary in a tree structure, we therefore replace the $\ell_1$-norm
by our hierarchical norm and set $\Psi=\Omega$ in Eq.~(\ref{eq:general_formulation}).

A question of interest is whether hierarchical priors are more appropriate in supervised settings 
or in the matrix-factorization context in which we use it. It is not so common
in the supervised setting to have strong prior information that allows us to organize the features in a hierarchy.
On the contrary, in the case of dictionary learning, since the atoms are learned, one can argue that 
the dictionary elements learned will \emph{have to} match well the hierarchical prior that is imposed by the regularization.
In other words, combining structured regularization with dictionary learning has precisely
the advantage that the dictionary elements will \emph{self-organize} to match the prior.

\subsection{Learning the Dictionary} \label{subsec:updateD}

Optimization for dictionary learning has already been intensively studied. We choose in this paper
a typical alternating scheme, which optimizes in turn $\D$ and $\A=[\alphab^1,\ldots,\alphab^n]$ while keeping
the other variable fixed~\cite{Aharon2006,Lee2007,Mairal2010}.\footnote{Note that although we use this classical scheme for simplicity, it would also be possible to use the stochastic approach proposed by~\citet{Mairal2010}.}
Of course, the convex optimization tools we develop in this paper
do not change the intrinsic non-convex nature of the dictionary learning problem.
However, they solve the underlying convex subproblems efficiently, which is crucial to yield good results in practice.
In the next section, we report good performance on some applied problems, and 
we show empirically that
our algorithm is stable and does not seem to get trapped in bad local minima.
The main difficulty of our problem lies in the
optimization of the vectors~$\alphab^i$, $i$ in $\{1,\ldots,n\}$, for the dictionary $\D$ kept fixed.
Because of $\Omega$, the corresponding convex subproblem is nonsmooth
and has to be solved for each of the $n$ signals considered.
The optimization of the dictionary $\D$ (for~$\A$ fixed), which we discuss first, is in general easier.
\paragraph{Updating the dictionary $\D$.}
We follow the matrix-inversion free procedure
of~\citet{Mairal2010} to update the dictionary.
This method consists in iterating block-coordinate descent over the columns of $\D$.
Specifically, we assume that the domain set $\mathcal{D}$ has the
form
\begin{equation}\label{eq:d_mu}
\mathcal{D}_\mu \defin \{ \D\in\RR{m}{p},\ \mu \|\d^j\|_1+ (1-\mu)\| \d^j\|_2^2 \leq 1, ~\text{for all}~ j \in \{1,\ldots,p\} \},
\end{equation}
or $
\mathcal{D}_\mu^+ \defin \mathcal{D}_\mu \cap \RR{m}{p}_+,
$
with $\mu \in [0,1]$.  The choice for these particular domain sets is motivated
by the experiments of Section~\ref{sec:experiment}.  For natural image patches,
the dictionary elements are usually constrained to be in the unit $\ell_2$-norm
ball (i.e., $\DD=\DD_0$), while for topic modeling, the dictionary elements are
distributions of words and therefore belong to the simplex (i.e.,
$\DD=\DD_1^+$).  The update of each dictionary element amounts to performing a
Euclidean projection, which can be computed efficiently~\cite{Mairal2010}.
Concerning the stopping criterion, we follow the strategy from the same authors
and go over the columns of~$\D$ only a few times, typically $5$ times in our
experiments.
Although we have not explored locality constraints on the dictionary elements, 
these have been shown to be particularly relevant to some applications such as patch-based image classification~\citep{Yu2009}.
Combining tree structure and locality constraints is an interesting future research.

\paragraph{Updating the vectors $\alphab^i$.}
The procedure for updating the columns of $\A$ is based on the results
derived in Section~\ref{sec:one_pass_conv}.  Furthermore, positivity
constraints can be added on the domain of $\A$, by noticing that for our norm~$\Omega$
and any vector $\u$ in $\R{p}$, adding these constraints when
computing the proximal operator is equivalent to solving
$
\min_{\v \in \R{p}} \frac{1}{2}\! \NormDeux{[\u]_+ \! - \v}^2 + \lambda  \Omega(\v).
$
This equivalence is proved in Appendix~\ref{sec:nonneg_prox}.
We will indeed use positive decompositions to model text corpora in Section \ref{sec:experiment}.
Note that by constraining the decompositions $\alphab^i$ to be nonnegative, some entries $\alphab^i_j$ may be set to zero in addition to those already zeroed out by the norm~$\Omega$. 
As a result, the sparsity patterns obtained in this way might not satisfy the tree-structured condition~(\ref{eq:ancestor_cond}) anymore.
\section{Experiments} \label{sec:experiment}

We next turn to the experimental validation of our hierarchical sparse coding.
\subsection{Implementation Details}
In Section~\ref{sec:one_pass_conv}, we have shown that the proximal operator
associated to $\Omega$ can be computed exactly and efficiently. The problem is therefore amenable to
fast proximal algorithms that are well suited to nonsmooth convex
optimization.
Specifically, we tried the accelerated scheme from both
\citet{Nesterov2007} and \citet{Beck2009}, and finally opted for the latter
since, for a comparable level of precision, fewer calls of the proximal
operator are required.  The basic proximal scheme presented in \refSec{sec:prox_op}
is formalized by \citet{Beck2009} as an algorithm called ISTA; the same authors propose moreover
an accelerated variant, FISTA, which is a similar procedure, except that the
operator is not directly applied on the current estimate, but on an auxiliary
sequence of points that are linear combinations of past estimates.  This latter algorithm has
an optimal convergence rate in the class of first-order techniques, and also
allows for warm restarts, which is crucial in the alternating scheme of dictionary
learning.\footnote{Unless otherwise specified, 
the initial stepsize in ISTA/FISTA is chosen as the maximum eigenvalue of the sampling covariance matrix divided by 100, while 
the growth factor in the line search is set to $1.5$.}

Finally, we monitor the convergence of the algorithm by checking the relative
decrease in the cost function.\footnote{We are currently investigating algorithms for computing duality gaps based on network flow optimization tools~\cite{Mairal2010a}.}
Unless otherwise specified, all the algorithms used in the following
experiments are implemented in \texttt{C/C++}, with a \texttt{Matlab}
interface. Our implementation is freely available at \texttt{http://www.di.ens.fr/willow/SPAMS/}.

\subsection{Speed Benchmark}
To begin with, we conduct speed comparisons between our approach and other
convex programming methods, in the setting where $\Omega$ is chosen to be a
linear combination of $\ell_2$-norms.  The algorithms that take part in the
following benchmark are:\\
\hspace*{0.2cm}\textbullet\ Proximal methods, with ISTA and the accelerated FISTA methods~\cite{Beck2009}.\\
\hspace*{0.2cm}\textbullet\ A reweighted-least-square scheme (Re-$\ell_2$), as described by~\citet{Jenatton2009, Kim2009}.
This approach is adapted to the square loss, 
since closed-form updates can be used.\footnote{The computation of the updates related to the variational 
formulation~(\ref{eq:eta_trick}) also benefits from the hierarchical structure of $\G$, and can be performed in $O(p)$ operations.}\\
\hspace*{0.2cm}\textbullet\ Subgradient descent, whose step size is taken to be equal either to
$a/(k+b)$ or $a/(\sqrt{k}+b)$ (respectively referred to as SG and $\text{SG}_{\text{sqrt}}$), where $k$ is the iteration number, and $(a,b)$ are the best\footnote{``The best step size'' is understood as being the step size leading to the smallest cost function after 500 iterations.} parameters selected on the logarithmic grid
$(a,b) \in \{10^{-4},\ldots,10^3\}\times \{10^{-2}, \ldots, 10^5\}$.\\
\hspace*{0.2cm}\textbullet\ A commercial software (\texttt{Mosek}, available at \texttt{http://www.mosek.com/}) for second-order cone programming (SOCP).\\
Moreover, the experiments we carry out cover various settings, 
with notably different sparsity regimes, i.e., low, medium and high, respectively
corresponding to about $50\%, 10\%$ and $1\%$ of the total number of dictionary elements.
Eventually, all reported results are obtained on a single core of a 3.07Ghz CPU with 8GB of memory.
\begin{figure}[h!]
  \centering
   \subfloat[scale: small, regul.: low]{\includegraphics[width=0.33\linewidth]{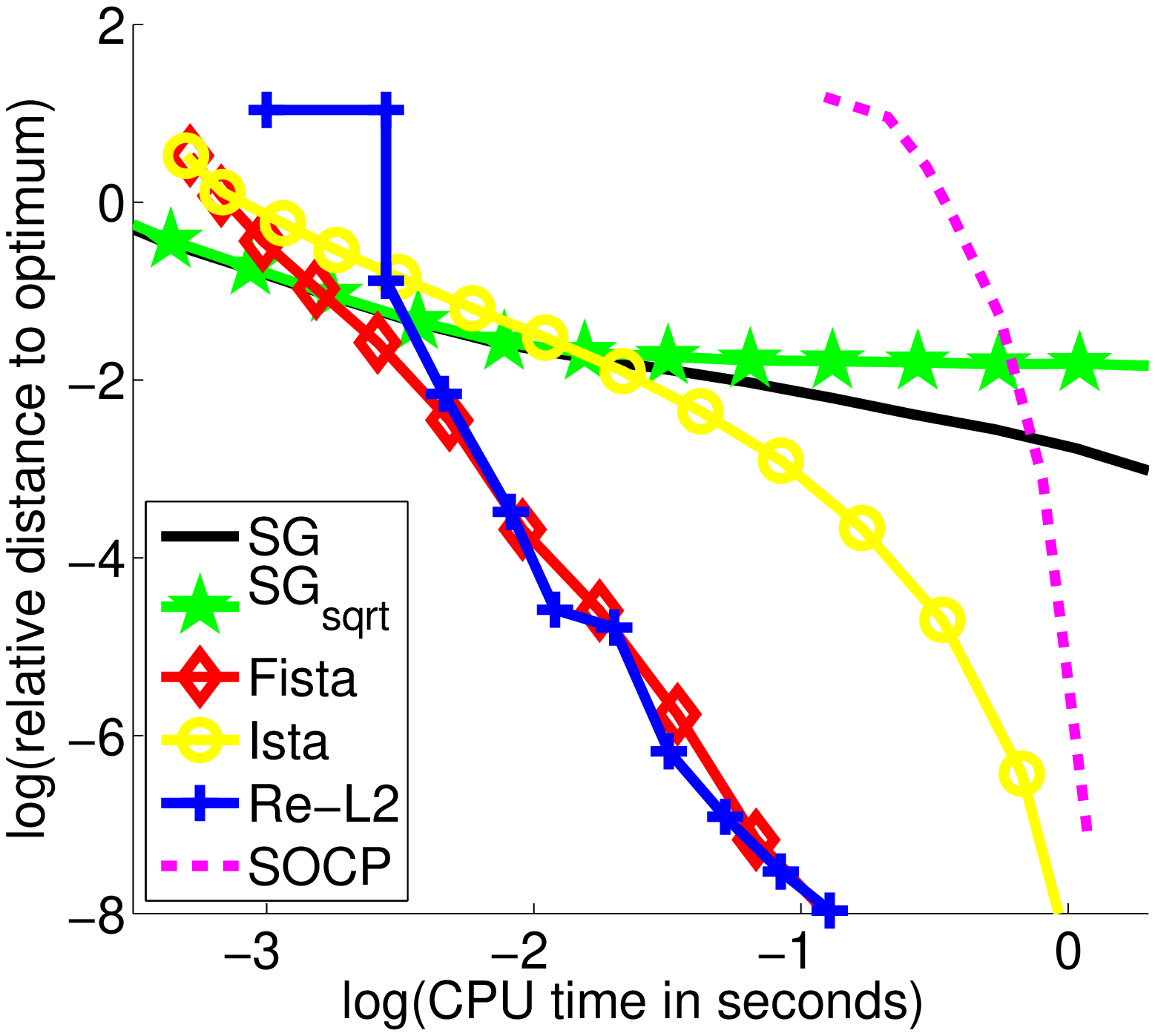}} \hfill
   \subfloat[scale: small, regul.: medium]{\includegraphics[width=0.33\linewidth]{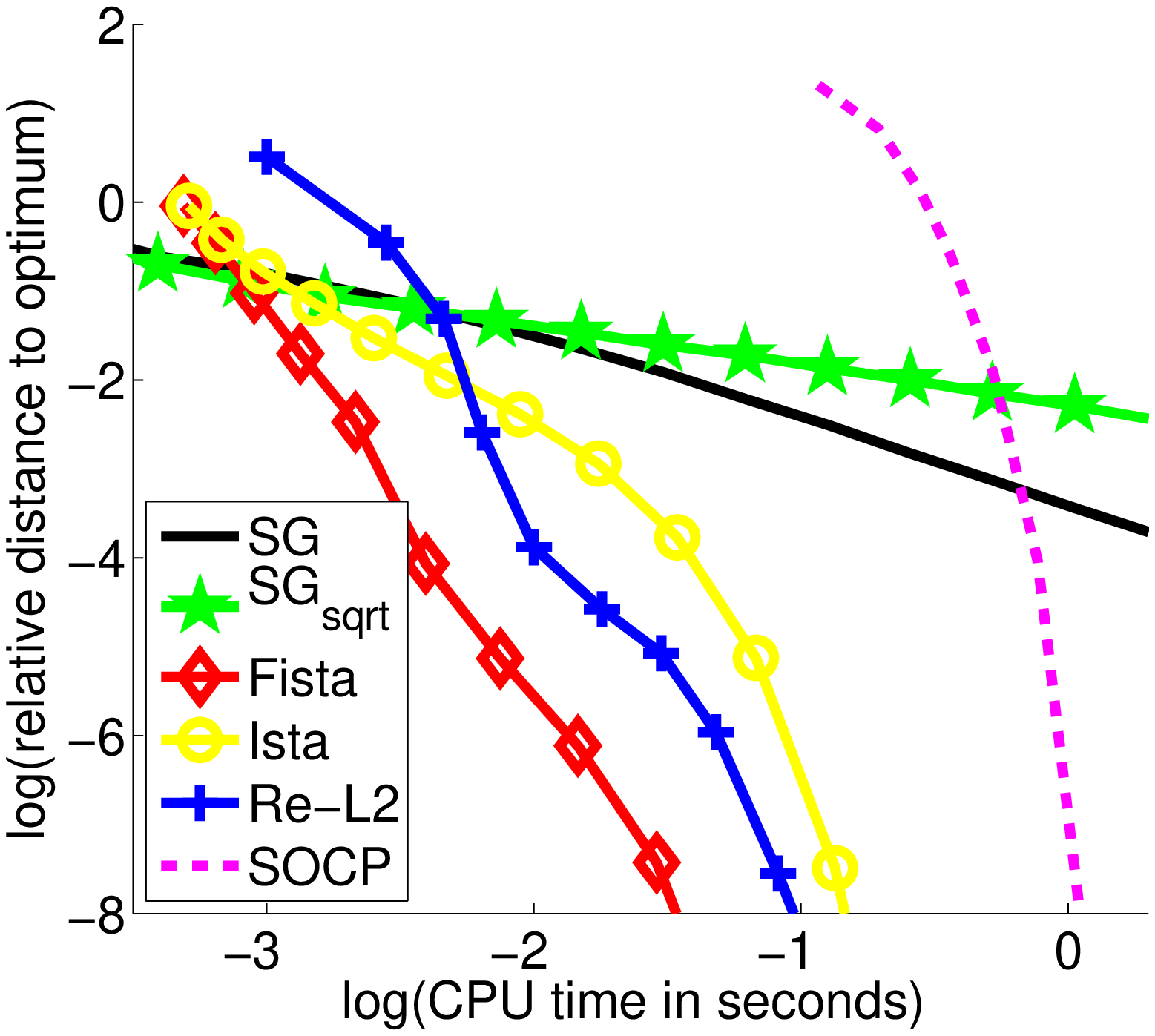}} \hfill
   \subfloat[scale: small, regul.: high]{\includegraphics[width=0.33\linewidth]{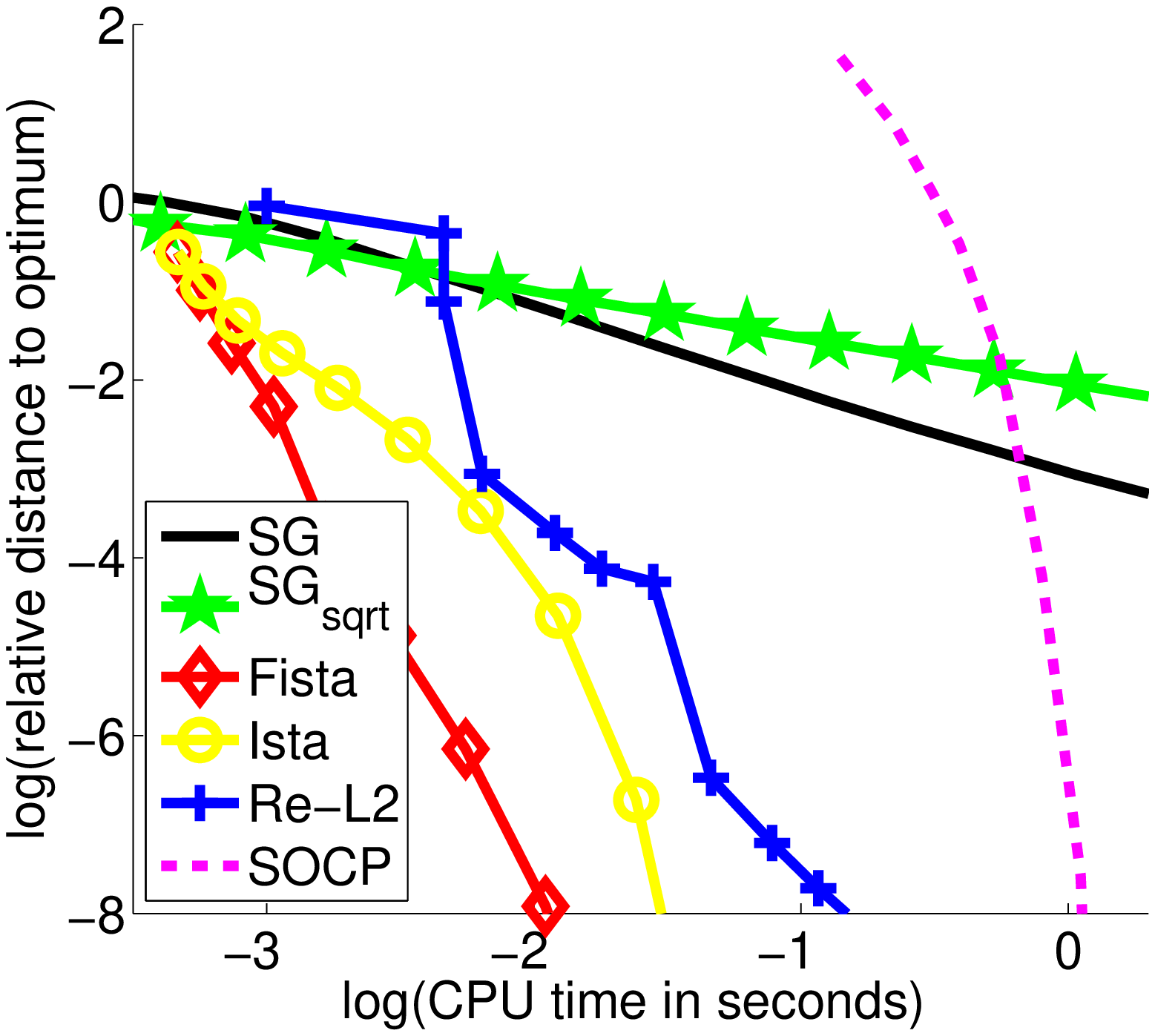}} \\
   \caption{Benchmark for solving a least-squares regression problem regularized by the hierarchical norm $\Omega$.
   The experiment is small scale, $m=256, p=151$, and shows the performances of six optimization methods (see main text for details)
   for three levels of regularization. The curves represent the relative value of the objective to the optimal value as a function of the computational time in second on a $\log_{10}/\log_{10}$ scale. All reported results are obtained by averaging $5$ runs.}
   \label{fig:struct_bench_patches}
\end{figure}
\subsubsection{Hierarchical dictionary of natural image patches}
In this first benchmark, we consider a least-squares regression problem regularized by $\Omega$ that arises in the context of
denoising of natural image patches, as further exposed in Section~\ref{sec:natural_img_patches}.
In particular, based on a hierarchical dictionary, 
we seek to reconstruct noisy $16\!\times\!16$-patches.
The dictionary we use is represented on Figure~\ref{fig:tree2}.
Although the problem involves a small number of variables, i.e., $p=151$ dictionary elements,
it has to be solved repeatedly for tens of thousands of patches, at moderate precision.
It is therefore crucial to be able to solve this problem quickly and efficiently.

We can draw several conclusions from the results of the simulations reported in Figure~\ref{fig:struct_bench_patches}.
First, we observe that in most cases,
the accelerated proximal scheme performs better than the other approaches.
In addition, unlike FISTA, ISTA seems to suffer in non-sparse scenarios.
In the least sparse setting,
the reweighted-$\ell_2$ scheme is the only method that competes with FISTA.
It is however not able to yield truly sparse solutions, and would therefore need a subsequent (somewhat arbitrary) thresholding operation.
As expected, the generic techniques such as SG and SOCP do not compete with dedicated algorithms.
\begin{figure}[h!]
  \subfloat[scale: large, regul.: low]{\includegraphics[width=0.33\linewidth]{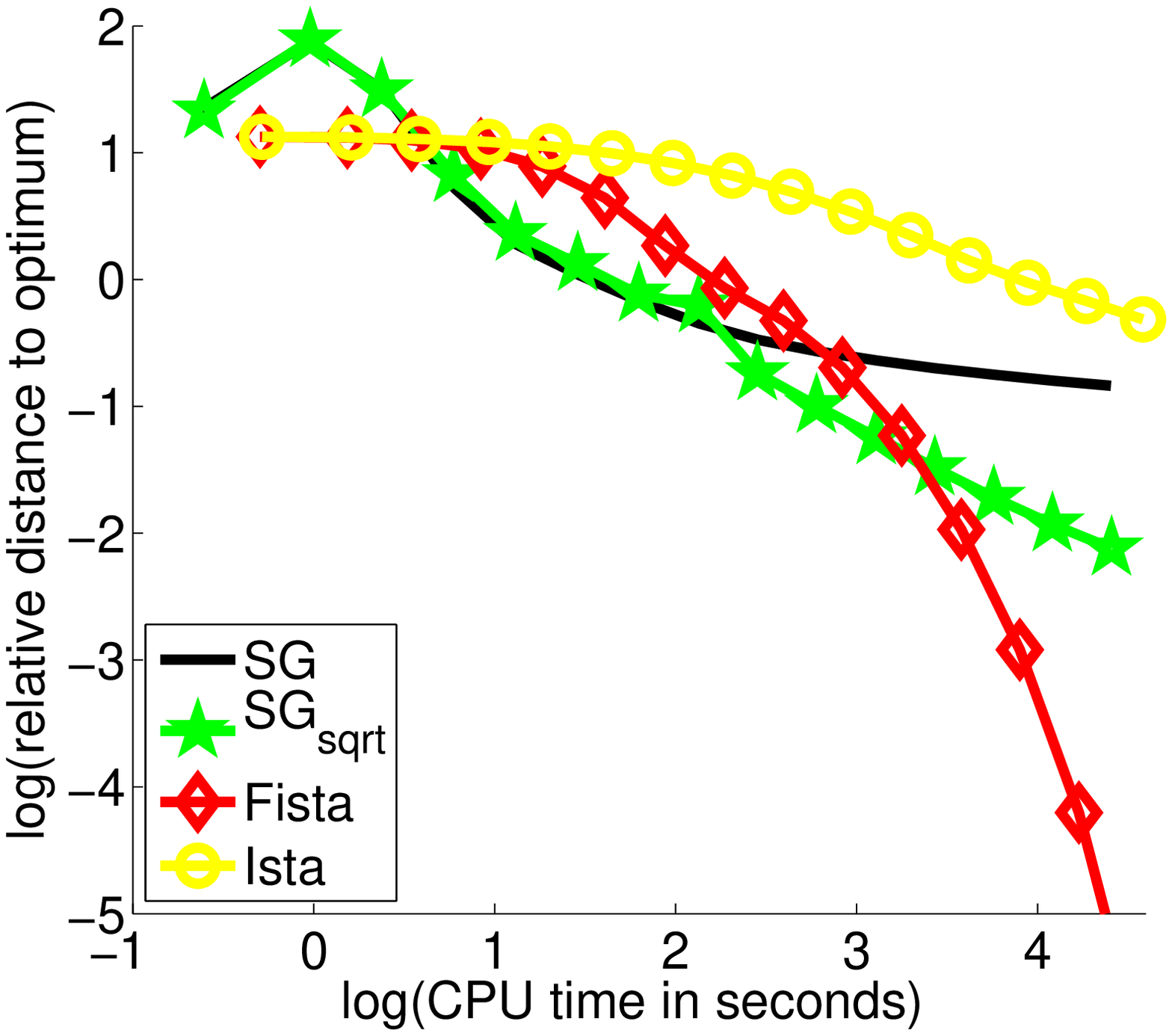}} 
  \subfloat[scale: large, regul.: medium]{\includegraphics[width=0.33\linewidth]{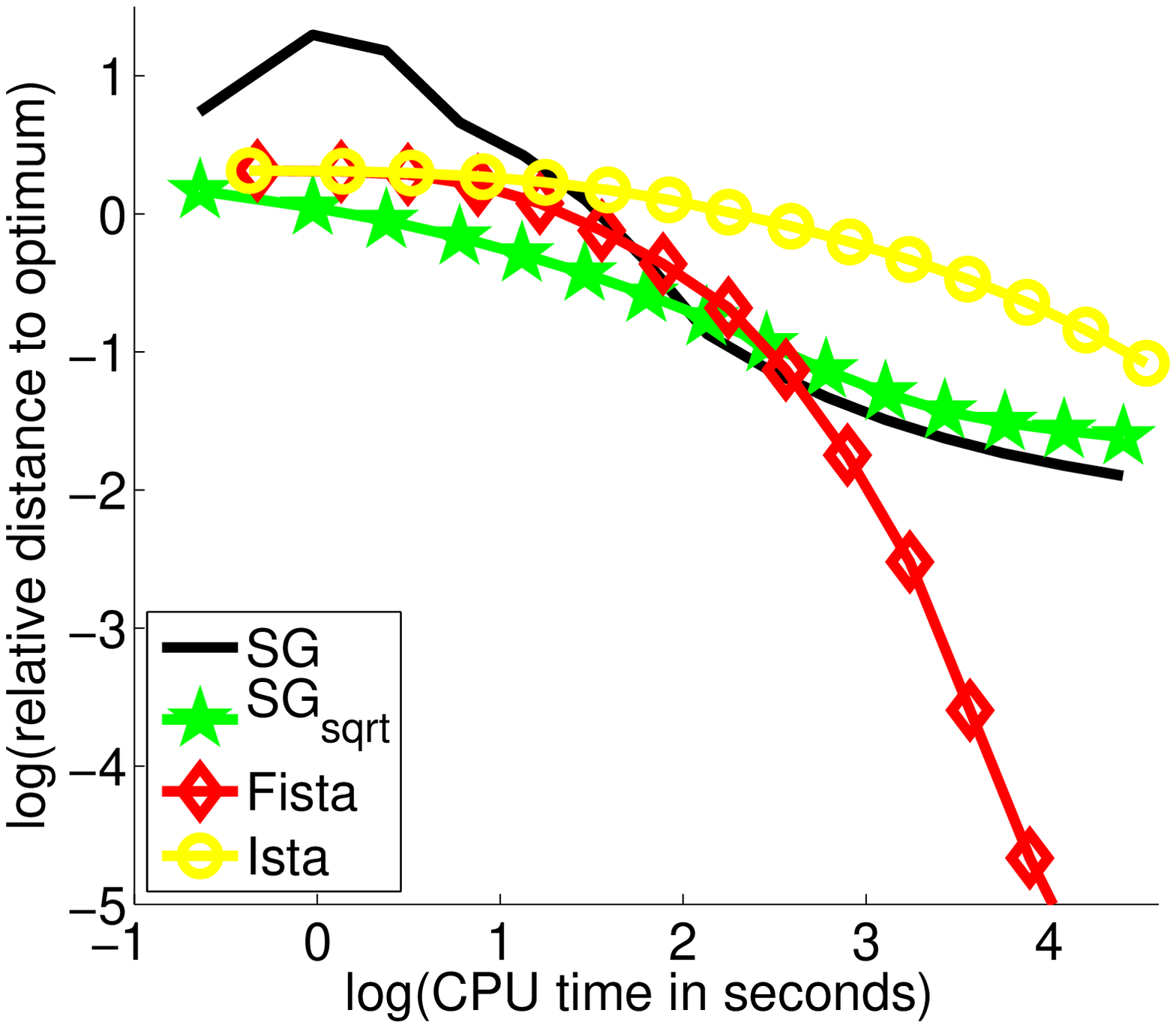}} 
  \subfloat[scale: large, regul.: high]{\includegraphics[width=0.33\linewidth]{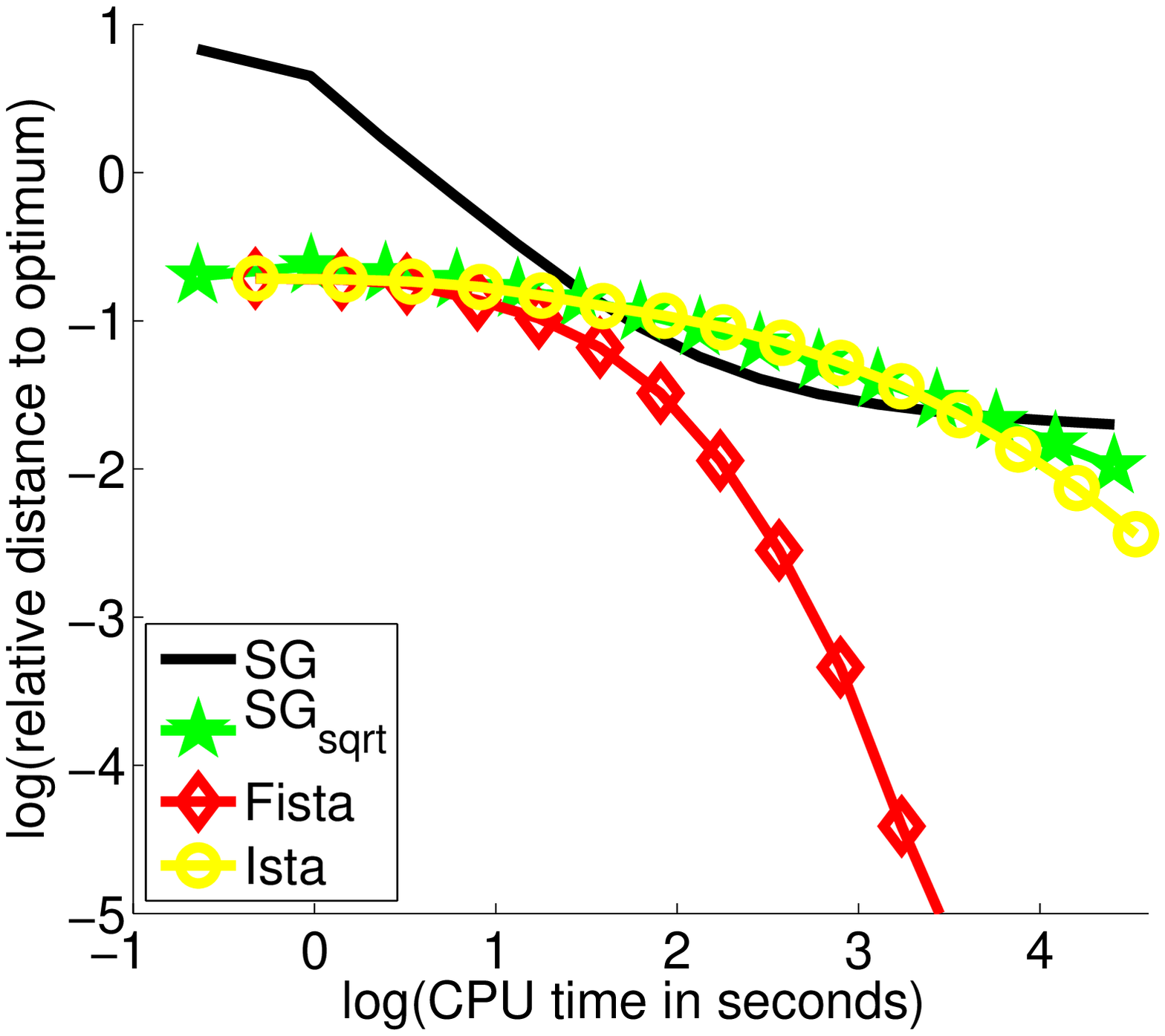}}
  \caption{Benchmark for solving a large-scale multi-class classification problem
   for four optimization methods (see details about the datasets and the methods in the main text).
   Three levels of regularization are considered. The curves represent the relative value of the objective to the optimal value as a function of the computational time in second on a $\log_{10}/\log_{10}$ scale.
   In the highly regularized setting, tuning the step-size for the subgradient turned out to be difficult,
   which explains the behavior of SG in the first iterations.}
  \label{fig:struct_bench_bio}
\end{figure}
\subsubsection{Multi-class classification of cancer diagnosis}
The second benchmark explores a different supervised learning setting, where $f$ is no longer the square loss function.
The goal is to demonstrate that our optimization tools apply in various scenarios, beyond traditional sparse approximation problems.
To this end, we consider a gene expression dataset\footnote{The dataset we use is \textit{14\_Tumors},
which is freely available at \texttt{http://www.gems-system.org/}.}
in the context of cancer diagnosis. More precisely, we
focus on a multi-class classification problem where the number $m$ of
samples to be classified is small compared to the number $p$ of gene
expressions that characterize these samples.  Each atom thus
corresponds to a gene expression across the $m$ samples, whose class labels are
recorded in the vector~$\x$ in~$\R{m}$.

The dataset contains $m=308$ samples, $p=30\,017$
variables and $26$ classes.  In addition, the data exhibit
highly-correlated dictionary elements.  Inspired by~\citet{Kim2009}, we build
the tree-structured set of groups $\G$ using Ward's hierarchical
clustering~\cite{Johnson1967} on the gene expressions.  The norm $\Omega$ built
in this way aims at capturing the hierarchical structure of gene expression
networks~\cite{Kim2009}.

Instead of the square loss function, we consider the multinomial logistic loss
function that is better suited to deal with multi-class classification
problems~\cite[see, e.g.,][]{Hastie2009}.  As a direct consequence,
algorithms whose applicability crucially depends on the choice of the loss
function $f$ are removed from the benchmark.  This is the case with
reweighted-$\ell_2$ schemes that do not have closed-form updates anymore.
Importantly, the choice of the multinomial logistic loss
function leads to an optimization problem over a matrix with dimensions~$p$ times the number
of classes (i.e., a total of  $30\,017\times 26\approx780\,000$ variables).  
Also, due to scalability issues, generic
interior point solvers could not be considered here.  

The results in Figure~\ref{fig:struct_bench_bio} highlight that
the accelerated proximal scheme performs overall better that the two other
methods. Again, it is important to note that both proximal algorithms yield
sparse solutions, which is not the case for SG.

\subsection{Denoising with Tree-Structured Wavelets}

We demonstrate in this section how a tree-structured sparse regularization can
improve classical wavelet representation, and how our method can be used to
efficiently solve the corresponding large-scale optimization problems.  We
consider two wavelet orthonormal bases, Haar and Daubechies3~\cite[see][]{Mallat1999}, and choose a classical quad-tree structure on the
coefficients, which has notably proven to be useful for image compression
problems~\cite{Baraniuk1999}. This experiment follows the approach of
\citet{Zhao2009} who used the same tree-structured regularization in the case of
small one-dimensional signals, and the approach of~\citet{Baraniuk2008} and
\citet{Huang2009} images where images were reconstructed from compressed sensing
measurements with a hierarchical nonconvex penalty.

We compare the performance for image denoising of both nonconvex and convex approaches.
Specifically, we consider the following formulation
$$
\min_{\alphab\in\R{m}} \frac{1}{2} \|\x-\D\alphab\|_2^2+\lambda\psi(\alphab)
=\min_{\alphab\in\R{m}} \frac{1}{2}\|\D^\top\x-\alphab\|_2^2 + \lambda\psi(\alphab),
$$
where $\D$ is one of the orthonormal wavelet basis mentioned above,
$\x$ is the input noisy image, $\D\alphab$ is the estimate of the denoised image, 
and $\psi$ is a sparsity-inducing regularization. Note that in this case, $m=p$.
We first consider classical settings where $\psi$ is either the $\ell_1$-norm---
this leads to the wavelet soft-thresholding method
of \citet{Donoho1995}---
or the $\ell_0$-pseudo-norm, whose solution can be obtained
by hard-thresholding~\cite[see][]{Mallat1999}.
Then, we consider the convex tree-structured
regularization $\Omega$ defined as a sum of $\ell_2$-norms ($\ell_\infty$-norms), 
which we denote by $\Omega_{\ell_2}$ (respectively
$\Omega_{\ell_\infty}$). Since the basis is here orthonormal, solving the corresponding decomposition problems amounts
to computing a single instance of the proximal operator. 
As a result, when $\psi$ is~$\Omega_{\ell_2}$, 
we use Algorithm~\ref{alg:fbcd} and for $\Omega_{\ell_\infty}$, 
Algorithm~\ref{alg:bcd2} is applied.
Finally, we consider the nonconvex tree-structured
regularization used by \citet{Baraniuk2008} denoted here by $\ell_0^{\text{tree}}$,
which we have presented in Eq.~(\ref{eq:nonconvex}); 
the implementation details for $\ell_0^{\text{tree}}$ can be found in
Appendix~\ref{appendix:greedy}.

\begin{table}[h!]
\centering
\begin{tabular}{|c|c|*{5}{|c}|}
\hline
& & \multicolumn{5}{c|}{Haar} \\
\cline{2-7}
&$\sigma$ & $\ell_0\; [0.0012]$ & $\ell_0^\text{tree}\; [0.0098]$ 
& $\ell_1\; [0.0016]$ & $\Omega_{\ell_2}\; [0.0125]$ & $\Omega_{\ell_\infty}\; [0.0221]$ \\
\hline
\multirow{5}{*}{PSNR} 
&$5$   & 34.48 & 34.78 & 35.52 & \textbf{35.89} & 35.79 \\
\cline{2-7}
&$10$  & 29.63 & 30.24 & 30.74 & \textbf{31.40} & 31.23  \\
\cline{2-7}
&$25$  & 24.44 & 25.27 & 25.30 & \textbf{26.41} & 26.14  \\
\cline{2-7}
&$50$  & 21.53 & 22.37 & 20.42 & \textbf{23.41} & 23.05  \\
\cline{2-7}
&$100$ & 19.27 & 20.09 & 19.43 & \textbf{20.97} & 20.58  \\
\hline
\hline
\multirow{5}{*}{IPSNR} 
&$5$ & - & $.30\pm.23$ & $1.04\pm.31$ & $\mathbf{1.41\pm.45}$ & $1.31\pm.41$ \\
\cline{2-7}
&$10$ & - & $.60\pm.24$ & $1.10\pm.22$ & $\mathbf{1.76\pm.26}$ & $1.59\pm.22$ \\
\cline{2-7}
&$25$ & - & $.83\pm.13$ & $.86\pm.35$ & $\mathbf{1.96\pm.22}$ & $1.69\pm.21$\\
\cline{2-7}
&$50$ & - & $.84\pm.18$ & $.46\pm.28$ & $\mathbf{1.87\pm.20}$ & $1.51\pm.20$ \\
\cline{2-7}
&$100$ & - & $.82\pm.14$ & $.15\pm.23$ & $\mathbf{1.69\pm.19}$ & $1.30\pm.19$ \\
\hline
\noalign{\vspace*{0.3cm}} 
\hline
& & \multicolumn{5}{c|}{Daub3} \\
\cline{2-7}
&$\sigma$ & $\ell_0\; [0.0013]$ & $\ell_0^\text{tree}\; [0.0099]$ 
& $\ell_1\; [0.0017]$ & $\Omega_{\ell_2}\; [0.0129]$ & $\Omega_{\ell_\infty}\; [0.0204]$ \\
\hline
\multirow{5}{*}{PSNR} 
&$5$   & 34.64 & 34.95 & 35.74 & \textbf{36.14} & 36.00 \\
\cline{2-7}
&$10$  &  30.03 & 30.63 & 31.10 & \textbf{31.79} & 31.56 \\
\cline{2-7}
&$25$  & 25.04 & 25.84 & 25.76 & \textbf{26.90} & 26.54 \\
\cline{2-7}
&$50$  & 22.09 & 22.90 & 22.42 & \textbf{23.90} & 23.41 \\
\cline{2-7}
&$100$ & 19.56 & 20.45 & 19.67 & \textbf{21.40} & 20.87 \\
\hline
\hline
\multirow{5}{*}{IPSNR} 
&$5$ & - & $.31\pm.21$ & $1.10\pm.23$ & $\mathbf{1.49\pm.34}$ & $1.36\pm.31$ \\
\cline{2-7}
&$10$ & - & $.60\pm.16$ & $1.06\pm.25$& $\mathbf{1.76\pm.19}$ & $1.53\pm.17$ \\
\cline{2-7}
&$25$ & - & $.80\pm.10$ & $.71\pm.28$ & $\mathbf{1.85\pm.17}$ & $1.50\pm.18$ \\
\cline{2-7}
&$50$ & - & $.81\pm.15$ & $.33\pm.24$ & $\mathbf{1.80\pm.11}$ & $1.33\pm.12$ \\
\cline{2-7}
&$100$ & - & $.89\pm.13$ & $0.11\pm.24$ & $\mathbf{1.82\pm.24}$ & $1.30\pm.17$ \\
\hline
\end{tabular}
\caption{Top part of the tables: Average PSNR measured for the denoising of $12$
standard images, when the wavelets are Haar or Daubechies3
wavelets~\cite[see][]{Mallat1999}, 
for two nonconvex approaches ($\ell_0$ and $\ell_0^{\text{tree}}$) and three different convex regularizations---that is, the $\ell_1$-norm, the tree-structured sum of $\ell_2$-norms ($\Omega_{\ell_2}$), 
and the tree-structured sum of $\ell_\infty$-norms ($\Omega_{\ell_\infty}$). 
Best results for each level of noise and each wavelet type are in bold. 
Bottom part of the tables: Average improvement in PSNR with respect to the $\ell_0$ nonconvex method (the standard deviations are computed over the 12 images).
CPU times (in second) averaged over all images and noise realizations are reported in brackets next to the names of the methods they correspond to.}
\label{table:wave}
\end{table}
Compared to \citet{Zhao2009}, the novelty of our approach is essentially to be
able to solve efficiently and exactly large-scale instances of this problem.
We use $12$ classical standard test images,\footnote{These images are used in
classical image denoising benchmarks. See~\citet{Mairal2009b}.} and generate
noisy versions of them corrupted by a white Gaussian noise of variance
$\sigma$. For each image, we test several values of $\lambda =
2^{\frac{i}{4}}\sigma\sqrt{\log{m}}$, with~$i$ taken in a specific
range.\footnote{For the convex formulations, $i$ ranges in
$\{-15,-14,\dots,15\}$, while in the nonconvex case $i$ ranges in
$\{-24,\dots,48\}$.}
We then keep the parameter $\lambda$
giving the best reconstruction error.
The factor $\sigma\sqrt{\log{m}}$ is a classical heuristic for choosing a reasonable regularization parameter~\citep[see][]{Mallat1999}.
We provide reconstruction results in terms of PSNR in
Table~\ref{table:wave}.\footnote{Denoting by \textrm{MSE} the mean-squared-error
for images whose intensities are between $0$ and $255$, the
\textrm{PSNR} is defined as $\textrm{PSNR}=10\log_{10}( 255^2 /
\textrm{MSE} )$ and is measured in dB. A gain of $1$dB reduces
the \textrm{MSE} by approximately $20\%$.}
We report in this table the results when $\Omega$ is chosen to be a sum of
$\ell_2$-norms or $\ell_\infty$-norms with weights~$\weights_g$ all equal to one.
Each experiment was run $5$ times with different noise realizations.
In every setting, we observe that the tree-structured norm significantly outperforms the
$\ell_1$-norm and the nonconvex approaches. 
We also present a visual comparison on
two images on Figure~\ref{fig:wave}, showing that the tree-structured
norm reduces visual artefacts (these artefacts are better seen by zooming on a computer screen).
The wavelet transforms in our experiments are computed with the
matlabPyrTools
software.\footnote{\texttt{http://www.cns.nyu.edu/{\texttildelow}eero/steerpyr/}.}

\begin{figure}[hbtp]
   \centering
   \subfloat[\textsf{Lena}, $\sigma=25$, $\ell_1$]{\includegraphics[width=0.245\linewidth]{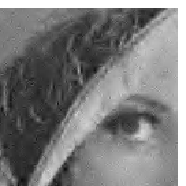}} \hfill
   \subfloat[\textsf{Lena}, $\sigma=25$, $\Omega_{\ell_2}$]{\includegraphics[width=0.245\linewidth]{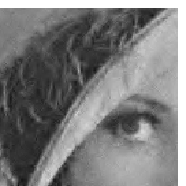}} \hfill
   \subfloat[\textsf{Barb.}, $\sigma=50$, $\ell_1$]{\includegraphics[width=0.245\linewidth]{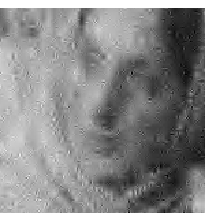}} \hfill
   \subfloat[\textsf{Barb.}, $\sigma=50$, $\Omega_{\ell_2}$]{\includegraphics[width=0.245\linewidth]{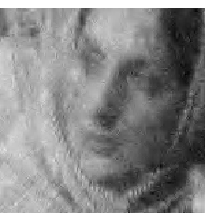}}
   \caption{Visual comparison between the wavelet shrinkage model with the $\ell_1$-norm and the tree-structured model, on cropped versions of the images \textsf{Lena} and \textsf{Barb.}. Haar wavelets are used.}
   \label{fig:wave}
\end{figure}
This experiment does of course not provide state-of-the-art results for image
denoising~\cite[see][and references therein]{Mairal2009b}, but shows that
the tree-structured regularization significantly improves the reconstruction
quality for wavelets. In this experiment the convex setting~$\Omega_{\ell_2}$ and 
$\Omega_{\ell_\infty}$ also outperforms the nonconvex one $\ell_0^{\text{tree}}$.\footnote{
It is worth mentioning that comparing convex and nonconvex approaches for
sparse regularization is a bit difficult. This conclusion holds
for the classical formulation we have used, but might not hold in other
settings such as~\citet{Coifman1995}.}
We also note that the speed of our approach makes it scalable to real-time
applications. Solving the proximal problem for an image with $m=512 \times 512
= 262\,144$ pixels takes approximately $0.013$ seconds on
a single core of a 3.07GHz CPU if $\Omega$ is a sum of ${\ell_2}$-norms, and $0.02$ seconds when it is a sum of $\ell_\infty$-norms.
By contrast, unstructured approaches have a speed-up factor of about 7-8 with respect to the tree-structured methods.

\subsection{Dictionaries of Natural Image Patches}\label{sec:natural_img_patches}
This experiment studies whether a hierarchical structure can help
dictionaries for denoising natural image patches, and in which noise
regime the potential gain is significant.
We aim at reconstructing \emph{corrupted}
patches from a test set, after having learned dictionaries on a training
set of \emph{non-corrupted} patches.
Though not typical
in machine learning, this setting is reasonable in the context of images, where
lots of non-corrupted patches are easily
available.\footnote{Note that we study the ability of the model
to reconstruct independent patches, and additional work is required to apply
our framework to a full image processing task, where patches usually
overlap~\cite{Elad2006,Mairal2009b}. }
\begin{table}[h!]
   \centering {
   \begin{tabular}{|@{\hspace{1mm}}c@{\hspace{1mm}}|*{5}{@{\hspace{1mm}}c@{\hspace{1mm}}|}}
      \hline
      noise & 50 \% & 60 \%  &  70 \% &  80 \% &  90 \%   \tabularnewline
      \hline
      flat & $19.3 \pm 0.1$   & $26.8 \pm 0.1$  & $36.7 \pm 0.1$  & $50.6 \pm 0.0$  & $72.1 \pm 0.0$  \tabularnewline
      \hline
      tree & $18.6 \pm 0.1$   & $25.7 \pm 0.1$  & $35.0 \pm 0.1$  & $48.0 \pm 0.0$  & $65.9 \pm 0.3$  \tabularnewline
      \hline
   \end{tabular} }
   \caption{Quantitative results of the reconstruction task on natural image patches.
   First row: percentage of missing pixels. Second and third rows: mean square error multiplied by $100$, respectively for classical sparse coding, and tree-structured sparse coding.}\label{table:noise1}
\end{table}
\begin{figure}[h!]
   \centering
   \includegraphics[width=0.7\linewidth]{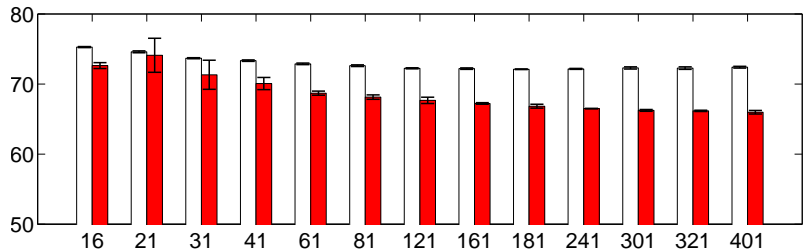}
   \caption{Mean square error multiplied by $100$ obtained with $13$ structures with error bars, sorted by number of dictionary elements from $16$ to $401$. Red plain bars represents the tree-structured dictionaries. White bars correspond to the flat dictionary model containing the same number of dictionary as the tree-structured one. For readability purpose, the $y$-axis of the graph starts at $50$.}\label{fig:errorbars}
\end{figure}

We extracted $100\,000$ patches of size $m=8 \times 8$ pixels
from the Berkeley segmentation database of natural
images \cite{Martin2001}, which contains a high variability of scenes.  We
then split this dataset into a training set~$\X_{tr}$, a validation set
$\X_{val}$, and a test set $\X_{te}$, respectively of size $50\,000$,
$25\,000$, and $25\,000$ patches. All the patches are centered and normalized
to have unit $\ell_2$-norm.

For the first experiment, the dictionary $\D$ is learned on $\X_{tr}$ using the
formulation of Eq.~(\ref{eq:general_formulation}), with $\mu=0$ for~${\mathcal D_{\mu}}$ as defined in Eq.~(\ref{eq:d_mu}).
The validation and test sets are corrupted by removing a certain percentage of
pixels, the task being to reconstruct the missing pixels from the known pixels.
We thus introduce for each element $\x$ of the validation/test set, a vector
$\tildex$, equal to $\x$ for the known pixel values and $0$ otherwise. Similarly, we define~$\tildeD$ as the matrix equal to $\D$, except for the rows
corresponding to missing pixel values, which are set to $0$.  By decomposing~$\tildex$ on~$\tildeD$, we obtain a sparse code $\alphab$, and the estimate of
the reconstructed patch is defined as $\D\alphab$.  Note that this procedure
assumes that we know which pixel is missing and which is not for every element
$\x$.

The parameters of the experiment are the regularization parameter
$\lambda_{tr}$ used during the training step, the regularization parameter
$\lambda_{te}$ used during the validation/test step, and the structure of the
tree.  For every reported result, these parameters were selected by taking the
ones offering the best performance on the \emph{validation} set, before
reporting any result from the \emph{test} set.  The values for the
regularization parameters $\lambda_{tr}, \lambda_{te}$ were selected on a
logarithmic scale $\{2^{-10},2^{-9},\ldots,2^{2}\}$, and then further refined
on a finer logarithmic scale with multiplicative increments of $2^{-1/4}$.  For simplicity, we
chose arbitrarily to use the $\ell_\infty$-norm in the structured norm
$\Omega$, with all the weights equal to one.
We tested $21$ balanced tree structures
of depth $3$ and~$4$, with different \emph{branching factors} $p_1, p_2,\ldots,
p_{d-1}$, where~$d$ is the depth of the tree and $p_k$, $k \in
\{1,\ldots,d-1\}$ is the number of children for the nodes at depth $k$.
The branching factors tested for the trees of depth $3$ where $p_1 \in
\{5,10,20,40,60,80,100\}$, $p_2 \in \{2,3\}$, and for trees of depth $4$, $p_1
\in \{5,10,20,40\}$, $p_2 \in \{2,3\}$ and $p_3 =2$, giving $21$ possible
structures associated with dictionaries with at most $401$ elements.
For each tree structure, we evaluated the performance obtained with the
tree-structured dictionary along with a non-structured dictionary containing
the same number of elements.  These experiments were carried out four times, each
time with a different initialization, and with a different
noise realization.

Quantitative results are reported in Table \ref{table:noise1}. For all fractions of missing pixels considered, the tree-structured dictionary outperforms the
``unstructured one'', and the most significant improvement is obtained in the
noisiest setting. Note that having more dictionary elements is worthwhile when
using the tree structure.
To study the influence of the chosen structure, we report in Figure~\ref{fig:errorbars}
the results obtained with the $13$ tested structures of depth $3$, along with those obtained
with unstructured dictionaries containing the same number of elements,
when $90 \%$ of the pixels are missing.  For each dictionary
size, the tree-structured dictionary significantly outperforms the
unstructured one.
An example of a learned tree-structured dictionary is presented on Figure
\ref{fig:tree2}. Dictionary elements naturally organize in groups of
patches, often with low frequencies near the root of the tree,
and high frequencies near the leaves. 
\begin{figure}
\centering
   \includegraphics[width=0.5\linewidth]{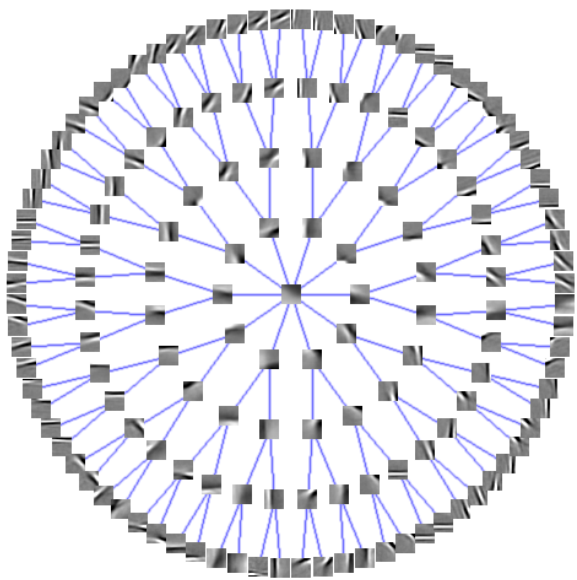}
   \caption{Learned dictionary with a tree structure of depth $5$. The root of the tree is in the middle of the figure.
   The branching factors are $p_1=10$, $p_2=2$, $p_3=2$, $p_4=2$. The dictionary is learned on $50,000$ patches of size $16 \times 16$ pixels.}
   \label{fig:tree2}
\end{figure}

\subsection{Text Documents}\label{sec:exp_txt_documents}
This last experimental section shows that our approach can also be applied to model text corpora.
The goal of probabilistic topic models is to find a low-dimensional representation
of a collection of documents, where
the representation should provide a semantic description of the collection.
Approaching the problem in a parametric Bayesian framework,
latent Dirichlet allocation (LDA) \citet{Blei2003} model documents, represented as vectors of word counts,
as a mixture of a predefined number of \emph{latent topics} that are distributions over a fixed vocabulary. 
LDA is fundamentally a matrix factorization problem: \citet{Buntine2002} shows that LDA can be interpreted as a Dirichlet-multinomial counterpart of factor analysis.
The number of topics is usually small compared to the size of the vocabulary (e.g., 100 against $10\, 000$),
so that the topic proportions of each document provide a compact representation of the corpus.
For instance, these new features can be used to feed a classifier in a subsequent classification task.
We similarly use our dictionary learning approach to find low-dimensional representations of text corpora.

Suppose that the signals
$ \X = [\x^1,\dots,\x^n]$ in $\RR{m}{n}$
are each the \emph{bag-of-word} representation of each of $n$ documents over a vocabulary of $m$ words,
the $k$-th component of $\x^i$ standing for the frequency of the $k$-th word in the document $i$.
If we further assume that the entries of $\D$ and $\A$ are nonnegative,
and that the dictionary elements~$\d^j$ have unit $\ell_1$-norm,
the decomposition~$(\D,\A)$ can be interpreted as the parameters of a topic-mixture model.
The regularization $\Omega$ induces the organization of these topics on a tree, so that,
if a document involves a certain topic, then all ancestral topics in the tree are also present in
the topic decomposition.
Since the hierarchy is shared by all documents, the topics at the top of the tree
participate in every decomposition,
and should therefore gather the lexicon which is common to all documents.
Conversely, the deeper the topics in the tree, the more specific they should be.
An extension of LDA to model topic hierarchies was proposed by \citet{Blei2010}, who introduced
a non-parametric Bayesian prior over trees of topics and modelled documents as
convex combinations of topics selected along a path in the hierarchy.
We plan to compare our approach with this model in future work.
\begin{figure}[h!]
\centering
\includegraphics[width=0.55\linewidth]{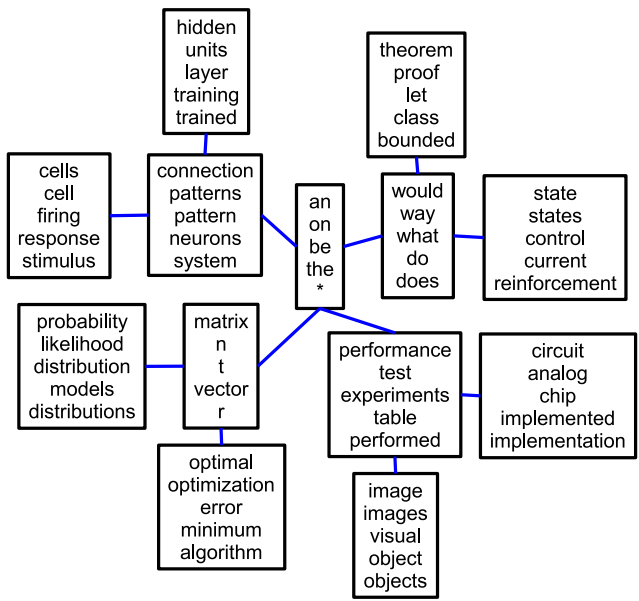}
\caption{Example of a topic hierarchy estimated from 1714 NIPS proceedings papers (from 1988 through 1999).
Each node corresponds to a topic whose 5 most important words are displayed.
Single characters such as $n, t, r$ are part of the vocabulary and often appear in NIPS papers, and their place in the hierarchy is semantically relevant to children topics.}
\label{fig:topics_nips}
\end{figure}

\paragraph{Visualization of NIPS proceedings.}
We qualitatively illustrate our approach on the NIPS proceedings
from 1988 through 1999 \cite{Griffiths2004}.
After removing words appearing fewer than 10 times, the dataset is composed of 1714 articles, with
a vocabulary of 8274 words.
As explained above, we consider $\mathcal{D}_1^+$ and take $\mathcal{A}$ to be $\RR{p}{n}_+$.
Figure~\ref{fig:topics_nips} displays an example of a learned dictionary with 13 topics,
obtained by using the $\ell_\infty$-norm in $\Omega$ and selecting manually $\lambda\!=\!2^{-15}$.
As expected and similarly to~\citet{Blei2010}, we capture the stopwords at the root of the tree, and topics reflecting the different subdomains of the conference
such as neurosciences, optimization or learning theory.
\begin{figure}
\centering
\includegraphics[width=0.6\linewidth]{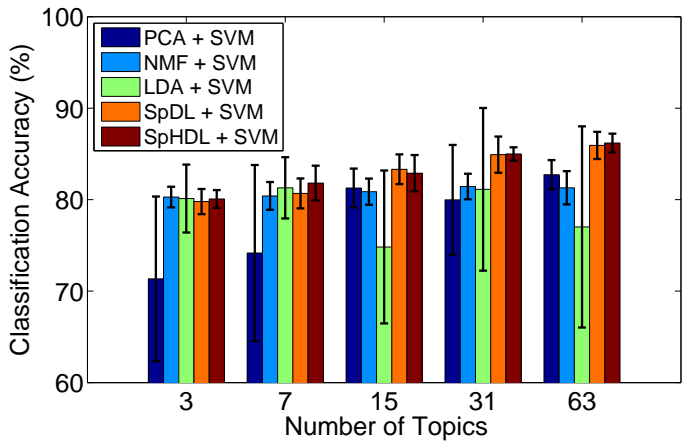}
\caption{Binary classification of two newsgroups: classification accuracy for different dimensionality reduction techniques coupled with a linear SVM classifier.
The bars and the errors are respectively the mean and the standard deviation, based on 10 random splits of the dataset. Best seen in color.}
\label{fig:cmp_LDA}
\end{figure}
\paragraph{Posting classification.}
We now consider a binary classification task of $n$ postings from the 20 Newsgroups data set.\footnote{Available at \texttt{http://people.csail.mit.edu/jrennie/20Newsgroups/}.}
We learn to discriminate between the postings from the two newsgroups \textit{alt.atheism} and \textit{talk.religion.misc},
following the setting of \citet{Lacoste-Julien2008} and \citet{Zhu2009}.
After removing words appearing fewer than 10 times and standard stopwords,
these postings form a data set of 1425 documents over a vocabulary of 13312 words.
We compare different dimensionality reduction techniques that we use to feed a linear SVM classifier, i.e.,
we consider
(i) LDA, with the code from \citet{Blei2003},
(ii) principal component analysis (PCA),
(iii) nonnegative matrix factorization (NMF),
(iv) standard sparse dictionary learning (denoted by SpDL) and
(v) our sparse hierarchical approach (denoted by SpHDL).
Both SpDL and SpHDL are optimized over $\mathcal{D}_1^+$ and $\mathcal{A}\!=\!\RR{p}{n}_+$,
with the weights $\weights_g$ equal to 1.
We proceed as follows: given a random split into a training/test set of $1\,000/425$ postings,
and given a number of topics $p$ (also the number of components for PCA, NMF, SpDL and SpHDL),
we train an SVM classifier based on the low-dimensional representation of the postings.
This is performed on a training set of $1\,000$ postings, where the parameters,
$\lambda \! \in \! \{2^{-26},\dots,2^{-5}\}$
and/or
$C_{\text{svm}} \!\in\! \{4^{-3},\dots,4^1\}$
are selected by 5-fold cross-validation.
We report in Figure~\ref{fig:cmp_LDA} the average classification scores on the test set of 425 postings, based on 10 random splits, for different number of topics.
Unlike the experiment on image patches, we consider only complete binary trees with depths in $\{1,\dots,5\}$.
The results from Figure~\ref{fig:cmp_LDA} show that SpDL and SpHDL perform better than the other dimensionality reduction techniques on this task.
As a baseline, the SVM classifier applied directly to the raw data (the 13312 words) obtains a score of $90.9\! \pm \! 1.1$, which is better
than all the tested methods, but without dimensionality reduction \cite[as already reported by][]{Blei2003}.
Moreover, the error bars indicate that, though nonconvex, SpDL and SpHDL do not seem to suffer much from instability issues.
Even if SpDL and SpHDL perform similarly, SpHDL has the advantage to provide a more interpretable topic mixture in terms of hierarchy, which standard unstructured sparse coding does not.
\section{Discussion}\label{sec:ccl}
We have applied hierarchical sparse coding in various settings, with fixed/learned dictionaries,
and based on different types of data, namely, natural images and text documents.
A line of research to pursue is to develop other optimization tools
for structured norms with general overlapping groups. For instance, 
\citet{Mairal2010a} have used network flow optimization techniques for that purpose,
and \citet{Bach2010a} submodular function optimization. 
This framework can also be used in the context of hierarchical kernel learning~\cite{Bach2008},
where we believe that our method can be more efficient than existing ones.

This work establishes a connection between dictionary learning and
probabilistic topic models, which should prove fruitful as the two lines of work have
focused on different aspects of the same unsupervised learning problem: Our
approach is based on convex optimization tools, and provides experimentally
more stable data representations. Moreover, it can be 
easily extended with the
same tools to other types of structures corresponding to other
norms~\cite{Jenatton2009,Jacob2009}.  
It should be noted, however, that, unlike some Bayesian methods, dictionary learning by itself does not provide mechanisms for the automatic selection of model hyper-parameters (such as
the dictionary size or the topology of the tree).
An interesting common line
of research to pursue could be the supervised design of dictionaries, which has been
proved useful in the two frameworks~\cite{Mairal2009a,Bradley2009,Blei2008}.
\acks{This paper was partially supported by grants from the
Agence Nationale de la Recherche (MGA Project) and
from the European Research Council (SIERRA Project 239993).
The authors would like to thank Jean Ponce for interesting discussions and
suggestions for improving this manuscript.  They also would like to thank
Volkan Cevher for pointing out links between our approach and nonconvex
tree-structured regularization and for insightful discussions.
Finally, we thank the reviewers for their constructive and helpful comments.}
\appendix

\section{Links with Tree-Structured Nonconvex Regularization} \label{appendix:greedy}

We present in this section an algorithm introduced by \citet{Donoho1997} in the more general context of approximation from dyadic partitions~\citep[see Section 6 in][]{Donoho1997}.
This algorithm solves the following problem
\begin{equation}
   \min_{\v \in \R{p}} \frac{1}{2}\|\u-\v\|_2^2 + \lambda \sum_{g \in \G} \delta^g(\v),\label{eq:proxgreedy}
\end{equation}
where the $\u$ in $\R{p}$ is given, $\lambda$ is a regularization parameter, $\G$ is a set of
tree-structured groups in the sense of definition~\ref{def:tree_struct},
and the functions $\delta^g$ are defined as in Eq.~(\ref{eq:nonconvex})---that
is, $\delta^g(\v)=1$ if there exists $j$ in $g$ such that $\v_j \neq 0$, and $0$
otherwise. This problem can be viewed as a proximal operator for the nonconvex 
regularization $\sum_{g \in \G} \delta^g(\v)$. As we will show, it can be
solved efficiently, and in fact it can be used to obtain approximate
solutions of the nonconvex problem presented in Eq.~(\ref{eq:ancestor_cond}),
or to solve tree-structured wavelet decompositions as done by~\citet{Baraniuk2008}.

We now briefly show how to derive the dynamic programming approach introduced
by \citet{Donoho1997}. Given a group $g$ in~$\G$, we use the same notations
$\text{root}(g)$ and $\text{children(g)}$ introduced in
Section~\ref{sec:efficiency}. It is relatively easy to show that finding
a solution of Eq.~(\ref{eq:proxgreedy}) amounts to finding the support
$S \subseteq \{1,\ldots,p\}$ of its solution and that the problem can be equivalently rewritten
\begin{equation}
   \min_{S \subseteq \{1,\ldots,p\}} -\frac{1}{2}\|\u_S\|_2^2 + \lambda \sum_{g \in \G} \delta^g(S), \label{eq:proxgreedy2}
\end{equation}
with the abusive notation $\delta^g(S) = 1$ if $g \cap S \neq \emptyset$ and $0$ otherwise.
We now introduce the quantity
\begin{displaymath}
\psi_g(S) \defin \begin{cases}
  0 & ~\text{if}~ g \cap S = \emptyset \\
  -\frac{1}{2}\|\u_{\text{root}(g)}\|_2^2 + \lambda + \sum_{h \in \text{children}(g)} \psi_h(S) & ~\text{otherwise}.
\end{cases}
\end{displaymath}
After a few computations, solving
Eq.~(\ref{eq:proxgreedy2}) can be shown to be equivalent to minimizing $\psi_{g_0}(S)$ where
$g_0$ is the root of the tree. It is then easy to prove that for any group $g$ in $\G$, we have
\begin{displaymath}
   \min_{S \subseteq \{1,\ldots,p\}} \psi_g(S) = \min\Big(0,  -\frac{1}{2}\|\u_{\text{root}(g)}\|_2^2 + \lambda + \sum_{h \in \text{children}(g)}\min_{S' \subseteq \{1,\ldots,p\}} \psi_h(S') \Big),
\end{displaymath}
which leads to the following dynamic programming approach presented in Algorithm~\ref{alg:proxgreedy}.
\begin{algorithm}[h!]
\caption{Computation of the Proximal Operator for the Nonconvex Approach}\label{alg:proxgreedy}
\begin{algorithmic}
\STATE Inputs: $\u \in \R{p}$, a tree-structured set of groups $\G$ and $g_0$ (root of the tree).
\STATE Outputs: $\v$ (primal solution).
\STATE Initialization: $\v \leftarrow \u$.
\STATE Call \texttt{recursiveThresholding}($g_0$).
\end{algorithmic}
{\bf Procedure} \texttt{recursiveThresholding}($g$)
\begin{algorithmic}[1]
   \STATE $\eta \leftarrow \min\Big(0,-\frac{1}{2}\|\u_{\text{root}(g)}\|_2^2 + \lambda + \sum_{h \in \text{children}(g)} \text{\texttt{recursiveThresholding}}(h)\Big)$.
   \IF{ $\eta = 0$ }
        \STATE $\v_g \leftarrow 0$.
   \ENDIF
   \RETURN $\eta$.
\end{algorithmic}
\end{algorithm}
This algorithm shares several conceptual links with Algorithm~\ref{alg:bcd2}
and~\ref{alg:fbcd}.  It traverses the tree in the same order, has a complexity
in $O(p)$, and it can be shown that the whole procedure actually performs a
sequence of thresholding operations on the variable $\v$.

\section{Proofs}\label{sec:proofs}
We gather here the proofs of the technical results of the paper. 
\subsection{Proof of Lemma~\ref{lem:dual}}
\begin{proof}
The proof relies on tools from conic duality \cite{Boyd2004}.
Let us introduce the cone 
$\mathcal{C}\defin\{(\v,z) \in \R{p+1};\ \Norm{\v}  \leq z \}$
and its dual counterpart
$\mathcal{C^\ast} \defin\{(\xib,\tau) \in \R{p+1};\ \DualNorm{\xib}  \leq \tau \}$.
These cones induce generalized inequalities for which Lagrangian duality also applies.
We refer the interested readers to~\citet{Boyd2004} for further details.

We can rewrite problem~(\ref{eq:prox_problem}) as
$$
\min_{\v \in \R{p}, \z \in \R{|\G|}}
\frac{1}{2} \NormDeux{\u-\v}^2 + \lambda  \sum_{g \in \G} \weights_g z_g,\
\text{such that}\ (\v_{\gi},z_g) \in \mathcal{C},\ \forall g\in\G,
$$
by introducing the primal variables $\z = (z_g)_{g \in \G} \in \R{|\G|}$,
with the additional $|\G|$ conic constraints $(\v_{\gi},z_g) \in \mathcal{C}$, for $g\in\G$.

This primal problem is convex and satisfies Slater's conditions for generalized
conic inequalities (i.e., existence of a feasible point in the interior of the domain), which implies that strong duality holds
\cite{Boyd2004}.  We now consider the Lagrangian~$\mathcal{L}$ defined as
$$
\mathcal{L}(\v, \z, \taub, \xib)
=
\frac{1}{2} \NormDeux{\u-\v}^2
+ \lambda  \sum_{g \in \G} \weights_g z_g
- \sum_{g\in\G} \binom{z_g}{\v_{\gi}}^\top \binom{\tau_g}{\xib^g},
$$
with the dual variables $\taub =(\tau_g)_{g \in \G}$ in $\R{|\G|}$, and $\xib=(\xib^g)_{g \in \G}$ in
$\RR{p}{|\G|}$,
such that for all $g\in\G$,
$\xib^g_j = 0 \, \mbox{ if } \, j \notin g$
and
$ (\xib^g,\tau_g) \in \mathcal{C^\ast}$.

The dual function is obtained by minimizing out the primal variables.
To this end, we take the derivatives of $\mathcal{L}$ with respect to the primal variables~$\v$ and~$\z$ and set them to zero,
which leads to
$$
\v - \u - \sum_{ g \in \G } \xib^g  =  0\quad\mathrm{and}\quad
\forall g \in \G,\ \lambda \weights_g - \tau_g = 0.
$$
After simplifying the Lagrangian and flipping (without loss of generality) the sign of $\xib$,
we obtain the dual problem in Eq.~(\ref{eq:dual_problem}).
We derive the optimality conditions from the Karush\textendash Kuhn\textendash Tucker conditions
for generalized conic inequalities \cite{Boyd2004}.
We have that $\{\v, \z, \taub,\xib\}$ are optimal if and only if
\begin{eqnarray*}
&\forall g \in \G,  z_g \tau_g - \v_{\gi}^\top \xib^g & = 0, \quad \text{(Complementary slackness)}\\                                    
\forall g \in \G,  (\v_{\gi},z_g) \in \mathcal{C}, &\forall g \in \G, \lambda \weights_g -  \tau_g & = 0, \\
\forall g \in \G,  (\xib^g,\tau_g) \in \mathcal{C^\ast}, &\v - \u + \sum_{ g \in \G} \xib^g & =  0.
\end{eqnarray*}
Combining the complementary slackness with the definition of the dual norm,
we have
$$
\forall g \in \G,\  z_g \tau_g = \v_{\gi}^\top \xib^g \leq \Norm{\v_{\gi}} \DualNorm{\xib^g}.
$$
Furthermore, using the fact that
$\forall g \in \G,\  (\v_{\gi},z_g) \in \mathcal{C}$ and
$(\xib^g,\tau_g)=(\xib^g,\lambda \weights_g) \in \mathcal{C^\ast}$,
we obtain the following chain of inequalities
$$
\forall g \in \G,\  \lambda z_g \weights_g  = \v_{\gi}^\top \xib^g
\leq \Norm{\v_{\gi}} \DualNorm{\xib^g}
                                     \leq z_g \DualNorm{\xib^g}
                                     \leq \lambda z_g \weights_g,
$$
for which equality must hold.
In particular, we have
$\v_{\gi}^\top \xib^g = \Norm{\v_{\gi}} \DualNorm{\xib^g}$ and
$z_g \DualNorm{\xib^g} = \lambda z_g \weights_g$.
If $\v_{\gi}\neq 0$, then $z_g$ cannot be equal to zero, which implies in turn that $\DualNorm{\xib^g} = \lambda \weights_g$.
Eventually, applying Lemma~\ref{lem:opt_cond_proj} gives the advertised optimality conditions.

Conversely, starting from the optimality conditions of Lemma~\ref{lem:dual}, 
and making use again of Lemma~\ref{lem:opt_cond_proj},
we can derive the Karush\textendash Kuhn\textendash Tucker conditions displayed above.
More precisely, we define for all $g\in\G$,
$$
\tau_g \defin \lambda \weights_g\quad \text{and}\quad z_g \defin \Norm{\v_{\gi}}.
$$
The only condition that needs to be discussed is the complementary slackness condition.
If $\v_{\gi} = 0$, then it is easily satisfied.
Otherwise, combining the definitions of $\tau_g,\ z_g$ and the fact that
$$
\v_{\gi}^\top \xib^g = \Norm{\v_{\gi}} \DualNorm{\xib^g}\ \text{and}\ \DualNorm{\xib^g}=\lambda \weights_g,
$$
we end up with the desired complementary slackness.
\end{proof}
\subsection{Optimality condition for the projection on the dual ball}
\begin{lemma}[Projection on the dual ball]\label{lem:opt_cond_proj}~\newline
Let $\w \in \R{p}$ and $t>0$. We have $\kappab = \Pi_{\|.\|_\ast \leq t}(\w)$ if and only if
$$
\begin{cases}
\text{if}\ \DualNorm{\w} \leq t,& \kappab=\w, \\ 	
\text{otherwise,}&
\DualNorm{\kappab} = t\
\mbox{ and }\
\kappab^\top (\w - \kappab) = \DualNorm{\kappab} \Norm{\w - \kappab}.
\end{cases}
$$
\end{lemma}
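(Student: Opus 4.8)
The plan is to invoke the standard variational characterization of the Euclidean projection onto a closed convex set. Writing $C \defin \{\kappab \in \R{p} : \DualNorm{\kappab} \leq t\}$, which is closed and convex, the projection $\kappab = \Pi_{\|.\|_\ast \leq t}(\w)$ is the unique point of $C$ satisfying $(\z - \kappab)^\top(\w - \kappab) \leq 0$ for all $\z \in C$. The first case is then immediate: if $\DualNorm{\w} \leq t$ then $\w \in C$, so $\w$ is its own projection and $\kappab = \w$. I therefore concentrate on the case $\DualNorm{\w} > t$, where $\w \notin C$ and the projection must lie on the boundary, giving $\DualNorm{\kappab} = t$ (otherwise one could move slightly toward $\w$ and strictly decrease the distance).

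For the forward direction, I would rewrite the variational inequality as $\z^\top(\w - \kappab) \leq \kappab^\top(\w - \kappab)$ for every $\z \in C$, and take the supremum of the left-hand side over $\DualNorm{\z} \leq t$. The key identity is that $\Norm{.}$ is the dual norm of $\DualNorm{.}$ (the bidual equals the original norm in finite dimension), so that $\sup_{\DualNorm{\z} \leq t} \z^\top(\w - \kappab) = t\,\Norm{\w - \kappab}$. This yields the lower bound $t\,\Norm{\w - \kappab} \leq \kappab^\top(\w - \kappab)$. The reverse inequality comes from the generalized Cauchy--Schwarz (H\"older) inequality, $\kappab^\top(\w - \kappab) \leq \DualNorm{\kappab}\,\Norm{\w - \kappab} = t\,\Norm{\w - \kappab}$. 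Combining the two bounds forces the stated equality $\kappab^\top(\w - \kappab) = \DualNorm{\kappab}\,\Norm{\w - \kappab}$.

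For the converse, I would assume $\DualNorm{\kappab} = t$ together with the equality $\kappab^\top(\w - \kappab) = \DualNorm{\kappab}\,\Norm{\w - \kappab}$ and verify the variational inequality directly. For any $\z \in C$, H\"older gives $\z^\top(\w - \kappab) \leq \DualNorm{\z}\,\Norm{\w - \kappab} \leq t\,\Norm{\w - \kappab} = \kappab^\top(\w - \kappab)$, hence $(\z - \kappab)^\top(\w - \kappab) \leq 0$; since $\kappab \in C$, the characterization identifies $\kappab$ as the projection, and this settles both implications.

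The computations are routine convex analysis, so I do not expect a serious obstacle. The one point requiring care is the squeeze argument: pairing the variational inequality (which, through the bidual identity, produces the lower bound $t\,\Norm{\w-\kappab}$) with H\"older's upper bound, and correctly arguing that the projection lands on the boundary sphere $\DualNorm{\kappab} = t$ precisely when $\w \notin C$.
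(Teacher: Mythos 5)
Your proof is correct and takes essentially the same route as the paper's: both rest on the normal-cone (variational inequality) characterization $(\w-\kappab)^\top(\y-\kappab)\leq 0$ for all $\y$ with $\DualNorm{\y}\leq t$, combined with the definition of the dual norm. The paper compresses the final step into ``the result follows from the definition of the dual norm,'' whereas you make explicit exactly what that means---the biduality identity $\sup_{\DualNorm{\z}\leq t}\z^\top(\w-\kappab)=t\Norm{\w-\kappab}$ squeezed against the generalized Cauchy--Schwarz bound, plus the converse verification---so your write-up is a faithful, fully detailed version of the same argument.
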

\begin{proof}
When the vector $\w$ is already in the ball of $\DualNorm{.}$
with radius $t$, i.e., $\DualNorm{\w}\leq t$, the situation is simple,
since the projection $\Pi_{\|.\|_\ast \leq t}(\w)$ obviously gives $\w$ itself.
On the other hand, a necessary and sufficient optimality condition for having
$\kappab=\Pi_{\|.\|_\ast \leq t}(\w)=\argmin_{\DualNorm{\y} \leq
t}\NormDeux{\w-\y}$ 
is that the residual $\w-\kappab$ lies in the normal cone
of the constraint set \cite{Borwein2006}, that is, for all $\y$ such that
$\DualNorm{\y}\! \leq t$, $(\w-\kappab)^\top \!(\y-\kappab)\! \leq 0$.  The
displayed result then follows from the definition of the dual norm, namely
$\DualNorm{\kappab} \!= \! \max_{\Norm{\z}\leq1}\z^\top\kappab$.
\end{proof}
\subsection{Proof of Lemma~\ref{lem:proj_nested_gr}}
\begin{proof}
First, notice that the conclusion $\xib^h = \Pi_{\|.\|_\ast \leq \lambda \weights_h}(\v_{\hi} +\xib^h)$
simply comes from the definition of $\xib^h$ and $\v$, 
along with the fact that $\xib^g = \xib_{\hi}^g$ since $g \subseteq h$.
We now examine $\xib^g$.

The proof mostly relies on the optimality conditions characterizing the projection onto a ball of the dual norm~$\DualNorm{\cdot}$.
Precisely, by Lemma~\ref{lem:opt_cond_proj}, we need to show that either
$$
\xib^g=\u_{\gi}-\xib_{\gi}^h,\ \text{if}\ \DualNorm{\u_{\gi}-\xib_{\gi}^h} \leq t_g,
$$
or
$$
\DualNorm{\xib^g} = t_g\
	\mbox{ and }\
        \xib^{g\top} (\u_{\gi} - \xib_{\gi}^h - \xib^g)
        = \DualNorm{\xib^g} \Norm{\u_{\gi} - \xib_{\gi}^h - \xib^g}.
$$
Note that the feasibility of $\xib^g$, i.e., $\DualNorm{\xib^g}\leq t_g$, 
holds by definition of $\kappa^g$.

Let us first assume that $\DualNorm{\xib^g} < t_g$.
We necessarily have that $\u_{\gi}$ also lies in the interior of the ball of $\DualNorm{.}$ with radius $t_g$,
and it holds that $\xib^g = \u_{\gi}$.
Since $g \subseteq h$, we have that the vector $\u_{\hi}-\xib^g=\u_{\hi}-\u_{\gi}$ has only zero entries on $g$.
As a result, $\xib^h_g=0$ (or equivalently, $\xib^h_\gi=0$) and we obtain
$$
\xib^g=\u_{\gi}=\u_{\gi}-\xib_{\gi}^h,
$$
which is the desired conclusion.
From now on, we assume that $\DualNorm{\xib^g} = t_g$. It then remains to show that
$$
\xib^{g\top} (\u_{\gi} - \xib_{\gi}^h - \xib^g)
        = \DualNorm{\xib^g} \Norm{\u_{\gi} - \xib_{\gi}^h - \xib^g}.
$$
We now distinguish two cases, according to the norm used.
\vspace*{0.25cm}

 \underline{$\ell_2$-norm:} As a consequence of Lemma~\ref{lem:opt_cond_proj},
 the optimality condition reduces to the conditions for equality
 in the Cauchy-Schwartz inequality, i.e.,
 when the vectors have same signs and are linearly dependent.
 Applying these conditions to individual projections we get that there exists $\rho_g,\rho_h > 0$ such that
 \begin{equation}\label{eq:proof_proj_L2norm}
  \rho_g \xib^g = \u_{\gi} -\xib^g\quad \mathrm{and}\quad 
\rho_h \xib^h = \u_{\hi}-\xib^g -\xib^h.
 \end{equation}
Note that the case $\rho_h = 0$ leads to $\u_{\hi}-\xib^g-\xib^h=0$, and therefore $\u_{\gi}-\xib^g-\xib_{\gi}^h=0$ since $g \subseteq h$,
which directly yields the result. The case $\rho_g=0$ implies $\u_{\gi}-\xib^g = 0$ and therefore $\xib_{\gi}^h=0$,
yielding the result as well.
Now, we can therefore assume $\rho_h > 0$ and $\rho_g > 0$.
From the first equality of~(\ref{eq:proof_proj_L2norm}), we have $\xib^g=\xib^g_\gi$ since $(\rho_g+1) \xib^g = \u_{\gi}$.
Further using the fact that $g \subseteq h$ in the second equality of ~(\ref{eq:proof_proj_L2norm}),
we obtain 
$$
(\rho_h+1) \xib^h_\gi=\u_\gi - \xib^g=\rho_g \xib^g.
$$
This implies that $\u_{\gi}-\xib^g-\xib_{\gi}^h=\rho_g \xib^g - \frac{\rho_g}{\rho_h+1} \xib^g$, 
which eventually leads to 
$$\xib^g = \frac{\rho_h+1}{\rho_g \rho_h} (\u_{\gi}-\xib^g-\xib_{\gi}^h).$$
The desired conclusion follows
$\xib^{g\top} (\u_{\gi} - \xib^g -\xib_{\gi}^h) = \NormDeux{\xib^g}\NormDeux{\u_{\gi}- \xib^g-\xib_{\gi}^h}.$

\vspace*{0.25cm}

 \underline{$\ell_\infty$-norm:} In this case, the optimality corresponds to the conditions for equality
 in the $\ell_\infty$-$\ell_1$ H\"older inequality.
 Specifically, $\xib^g = \Pi_{\|.\|_\ast \leq t_g}(\u_{\gi})$ holds if and only if
 for all $\xib^g_j \neq 0, j\in g$, we have
 $$
 \u_j-\xib^g_j = \NormInf{\u_{\gi}-\xib^g}  \sign(\xib^g_j).
 $$
 Looking at the same condition for $\xib^h$, we have that
 $\xib^h = \Pi_{\|.\|_\ast \leq t_h}\big(\u_{\hi}-\xib_g\big)$ holds if and only if
 for all $\xib^h_j \neq 0, j\in h$, we have
 $$
 \u_j -\xib^g_j-\xib^h_j = \NormInf{\u_{\hi} -\xib^g-\xib^h}  \sign(\xib^h_j).
 $$
 From those relationships we notably deduce that for all $j\in g$ such that $\xib^g_j \neq 0$,
 $\sign(\xib^g_j) =\sign(\u_j) =\sign(\xib^h_j) =\sign(\u_j-\xib^g_j) = \sign(\u_j-\xib^g_j-\xib^h_j)
 $.
 Let $j \in g$ such that $\xib^g_j \neq 0$. At this point, using the equalities we have just presented,
 $$
 | \u_j -\xib^g_j-\xib^h_j | = \left\{ \begin{array}{ll}
    \NormInf{\u_{\gi} -\xib^g} & \text{if}~ \xib^h_j=0 \\
    \NormInf{\u_{\hi} -\xib^g-\xib^h} & \text{if}~ \xib^h_j \neq 0.
 \end{array} \right.
 $$
 Since $\NormInf{\u_{\gi} -\xib^g} \geq \NormInf{\u_{\gi} -\xib^g-\xib_{\gi}^h}$ (which can be shown using the sign equalities above), and
 $\NormInf{\u_{\hi} -\xib^g-\xib^h} \geq \NormInf{\u_{\gi} -\xib^g-\xib_{\gi}^h}$ (since $g \subseteq h$), we have
 $$
 \NormInf{\u_{\gi} -\xib^g-\xib_{\gi}^h} \geq |\u_j -\xib^g_j-\xib^h_j| \geq \NormInf{\u_{\gi} -\xib^g-\xib_{\gi}^h},
 $$
 and therefore for all $\xib^{g}_j \neq 0$, $j \in g$, we have 
 $
 \u_j -\xib^g_j-\xib^h_j = \NormInf{\u_{\gi} -\xib^g-\xib_{\gi}^h} \sign(\xib^g_j),
 $
which yields the result.
\end{proof}
\subsection{Proof of Lemma~\ref{lemma:l2}}
\begin{proof}
   Notice first that the procedure \texttt{computeSqNorm} is called exactly once for each
   group~$g$ in~$\G$, computing a set of scalars $(\rho_g)_{g \in \G}$ in an order which is
   compatible with the convergence in one pass of
   Algorithm~\ref{alg:bcd}---that is, the children of a node are processed
   prior to the node itself.
   Following such an order, the update of the group $g$ in the original
   Algorithm~\ref{alg:bcd} computes the variable $\xib^g$ which updates
   implicitly the primal variable as follows
   \begin{displaymath}
      \v_{\gi} \leftarrow \big(1-\frac{\lambda \weights_g}{\|\v_{\gi}\|_2}\big)_+ \, \v_{\gi}.
   \end{displaymath}
   It is now possible to show by induction that for all
   group $g$ in $\G$, after a call to the procedure \texttt{computeSqNorm}($g$), the
   auxiliary variable $\eta_g$ takes the value $\|\v_{\gi}\|_2^2$ where $\v$ has the same value as during
   the iteration $g$ of Algorithm~\ref{alg:bcd}.
   Therefore, after calling the procedure \texttt{computeSqNorm}($g_0$), where $g_0$ is
   the root of the tree, the values $\rho_g$ correspond to the successive
   scaling factors of the variable $\v_{\gi}$ obtained during the execution of
   Algorithm~\ref{alg:bcd}.
   After having computed all the scaling factors $\rho_g$, $g\in \G$, the
   procedure \texttt{recursiveScaling} ensures that each variable~$j$ in
   $\{1,\ldots,p\}$ is scaled by the product of all the $\rho_h$, where $h$ is
   an ancestor of the variable $j$.

   The complexity of the algorithm is easy to characterize: Each procedure
\texttt{computeSqNorm} and \texttt{recursiveScaling} is called $p$ times, each
call for a group $g$ has a constant number of operations plus as many
operations as the number of children of $p$.  Since each child can be called
at most one time, the total number of operation of the algorithm is $O(p)$.
\end{proof}
\subsection{Sign conservation by projection}
The next lemma specifies a property for projections when
$\|.\|$ is further assumed to be a $\ell_q$-norm (with $q \geq 1$).
We recall that in that case, $\|.\|_\ast$ is simply the $\ell_{q'}$-norm, with $q' = (1-1/q)^{-1}$.
\begin{lemma}[Projection on the dual ball and sign property]\label{lem:proj_same_sign}~\newline
Let $\w \in \R{p}$ and $t>0$.
Let us assume that $\|.\|$ is a $\ell_q$-norm (with $q \geq 1$). 
Consider also a diagonal matrix $\Sb\in\RR{p}{p}$ whose diagonal entries are in $\{-1,1\}$.
We have
$
 \Pi_{\|.\|_\ast \leq t}(\w) = \Sb \Pi_{\|.\|_\ast \leq t}(\Sb \w).
$
\end{lemma}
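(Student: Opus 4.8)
The plan is to reduce the identity to two elementary symmetry properties of the objects at play: that $\Sb$ is an orthogonal involution, and that the dual ball is invariant under $\Sb$. First I would recall that, by definition, $\Pi_{\|.\|_\ast \leq t}(\w)$ is the \emph{unique} minimizer of $\tfrac{1}{2}\NormDeux{\w-\kappab}^2$ over the closed convex set $\{\kappab \in \R{p};\ \DualNorm{\kappab} \leq t\}$, uniqueness coming from strong convexity of the squared Euclidean distance together with convexity of the ball. This uniqueness is what will let me transfer minimizers, and not merely optimal values, through a change of variable.

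Next I would establish the two invariances. Since the diagonal entries of $\Sb$ lie in $\{-1,1\}$, we have $\Sb^\top\Sb = \I$ and $\Sb^{-1}=\Sb$, so $\Sb$ preserves the Euclidean norm, i.e. $\NormDeux{\Sb\z}=\NormDeux{\z}$ for every $\z$. Moreover, because $\|.\|$ is an $\ell_q$-norm, its dual $\DualNorm{.}$ is the $\ell_{q'}$-norm with $q'=(1-1/q)^{-1}$, which depends only on the absolute values of the coordinates; hence $\DualNorm{\Sb\z}=\DualNorm{\z}$ and the ball $\{\DualNorm{\kappab}\leq t\}$ is mapped onto itself by $\Sb$.

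The core step would then be a change of variable in the projection defining $\Pi_{\|.\|_\ast \leq t}(\Sb\w)$. Setting $\kappab=\Sb\boldsymbol\mu$ (equivalently $\boldsymbol\mu=\Sb\kappab$, since $\Sb^2=\I$), the constraint $\DualNorm{\kappab}\leq t$ becomes $\DualNorm{\boldsymbol\mu}\leq t$ by the second invariance, while $\NormDeux{\Sb\w-\kappab}^2 = \NormDeux{\Sb\w-\Sb\boldsymbol\mu}^2 = \NormDeux{\w-\boldsymbol\mu}^2$ by the first. Thus minimizing over $\kappab$ in the dual ball is equivalent to minimizing $\tfrac{1}{2}\NormDeux{\w-\boldsymbol\mu}^2$ over the same ball, whose unique solution is $\Pi_{\|.\|_\ast \leq t}(\w)$. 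Writing $\kappab^\ast \defin \Pi_{\|.\|_\ast \leq t}(\Sb\w)$, the optimal $\boldsymbol\mu$ is $\Sb\kappab^\ast$, so $\Sb\kappab^\ast = \Pi_{\|.\|_\ast \leq t}(\w)$, which is exactly the advertised identity $\Sb\,\Pi_{\|.\|_\ast \leq t}(\Sb\w)=\Pi_{\|.\|_\ast \leq t}(\w)$.

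I do not expect a genuine obstacle here, as the result is a pure symmetry statement. The only point requiring a little care is to invoke uniqueness of the Euclidean projection onto the convex dual ball, so that the change of variable transfers the \emph{argmin} rather than just the optimal objective value; everything else follows immediately from $\Sb$ being an orthogonal sign-flip and the $\ell_{q'}$-norm being insensitive to sign changes.
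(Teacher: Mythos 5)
Your proof is correct and follows essentially the same route as the paper: both arguments rest on the two invariances that $\Sb$ is an orthogonal involution preserving the Euclidean metric and that the $\ell_{q'}$-ball is invariant under sign flips. The only cosmetic difference is that the paper transports the variational-inequality optimality condition $(\w-\kappab)^\top(\y-\kappab)\leq 0$ through $\Sb$, whereas you perform the change of variable $\kappab=\Sb\boldsymbol\mu$ directly in the argmin and invoke uniqueness of the projection---two equivalent packagings of the same symmetry argument.
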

\begin{proof}
Let us consider $\kappab=\Pi_{\|.\|_\ast \leq t}(\w)$.
Using essentially the same argument as in the proof of Lemma~\ref{lem:opt_cond_proj}, we have
for all $\y$ such that $\|\y\|_{q'}\! \leq t$, $(\w-\kappab)^\top \!(\y-\kappab)\! \leq 0$.
Noticing that $\Sb^\top \Sb = \I$ and $\|\y\|_{q'}=\|\Sb\y\|_{q'}$, we further obtain
$(\Sb\w-\Sb\kappab)^\top \!(\y'-\Sb\kappab)\! \leq 0$ for all $\y'$ with $\|\y'\|_{q'}\! \leq t$.
This implies in turn that 
$\Sb \Pi_{\|.\|_\ast \leq t}(\w) = \Pi_{\|.\|_\ast \leq t}(\Sb \w)$, which is equivalent to the advertised conclusion.
\end{proof}
Based on this lemma, note that we can assume without loss of generality that the vector we want to project (in this case, $\w$) has only nonnegative entries.
Indeed, it is sufficient to store beforehand the signs of that vector,
compute the projection of the vector with nonnegative entries,
and assign the stored signs to the result of the projection.
\subsection{Non-negativity constraint for the proximal operator}\label{sec:nonneg_prox}
The next lemma shows how we can easily add a non-negativity constraint on the proximal operator when the norm $\Omega$ is 
\textit{absolute}~\citep[Definition 1.2]{Stewart1990}, that is, 
a norm for which the relation 
$\Omega(\u) \leq \Omega(\w)$ 
holds for any two vectors $\w$ and $\u \in \R{p}$ such that $|\u_j|\leq|\w_j|$ for all $j$.
\begin{lemma}[Non-negativity constraint for the proximal operator]\label{lem:nonneg_prox}~\newline
Let $\kappab \in \R{p}$ and $\lambda>0$. Consider an absolute norm $\Omega$.
We have
\begin{equation}\label{eq:prox_nonneg}
\argmin_{\z \in \R{p}} \Big[ \frac{1}{2}\! \NormDeux{[\kappab]_+ \! - \z}^2 + \lambda  \Omega(\z)\Big]=
\argmin_{\z \in \R{p}_+} \Big[ \frac{1}{2}\! \NormDeux{\kappab \! - \z}^2 + \lambda  \Omega(\z)\Big].
\end{equation}
\end{lemma}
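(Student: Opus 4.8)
The plan is to route both problems through the truncation $\w \defin [\kappab]_+$ and to exploit that $\Omega$ is \emph{absolute}. Write $G(\z) \defin \frac{1}{2}\NormDeux{\w - \z}^2 + \lambda\Omega(\z)$ for the unconstrained objective on the left and $F(\z) \defin \frac{1}{2}\NormDeux{\kappab - \z}^2 + \lambda\Omega(\z)$ for the objective on the right, the latter minimized over $\z \in \R{p}_+$. Both are strongly convex, hence each admits a unique minimizer; let $\hat\z$ denote the unconstrained minimizer of $G$, i.e.\ the left-hand side. The goal is to show that $\hat\z$ is feasible for the constrained problem and is in fact its minimizer.

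First I would establish two structural properties of $\hat\z$, using only the coordinatewise domination $|\u_j| \leq |\w_j|\ (\forall j) \Rightarrow \Omega(\u) \leq \Omega(\w)$. (i) $\hat\z \in \R{p}_+$: since $\w \geq \0$, replacing $\hat\z$ by its elementwise absolute value $|\hat\z|$ leaves $\Omega$ unchanged (absoluteness) and does not increase the quadratic term, because $(\w_j - |\hat\z_j|)^2 \leq (\w_j - \hat\z_j)^2$ whenever $\w_j \geq 0$; uniqueness of the minimizer then forces $|\hat\z| = \hat\z$, so $\hat\z \geq \0$. (ii) $\hat\z_j = 0$ whenever $\w_j = 0$: if some $\hat\z_j > 0$ while $\w_j = 0$, zeroing that one coordinate strictly decreases the quadratic term and cannot increase $\Omega$ (the modified vector is dominated coordinatewise), contradicting minimality. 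In particular $\hat\z_j = 0$ for every index $j$ with $\kappab_j < 0$.

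Next I would compare the two objectives on the feasible set. Splitting coordinates according to the sign of $\kappab_j$, and using $\w_j = \kappab_j$ when $\kappab_j \geq 0$ and $\w_j = 0$ otherwise, a short computation gives, for every $\z \in \R{p}_+$,
$$F(\z) = G(\z) + C + \sum_{j:\,\kappab_j < 0} |\kappab_j|\,\z_j, \qquad C \defin \tfrac{1}{2}\!\!\sum_{j:\,\kappab_j<0}\!\!\kappab_j^2,$$
where $C$ does not depend on $\z$. The last sum is nonnegative on $\R{p}_+$ and, by property (ii), vanishes at $\z = \hat\z$, so $F(\hat\z) = G(\hat\z) + C$. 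Combining this with $G(\z) \geq G(\hat\z)$ for all $\z$ yields $F(\z) \geq F(\hat\z)$ for every $\z \in \R{p}_+$; thus the nonnegative vector $\hat\z$ minimizes $F$ over $\R{p}_+$, and strong convexity gives uniqueness, so the two argmins coincide.

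I expect the only delicate point to be carrying out properties (i) and (ii) rigorously from the definition of an absolute norm, together with verifying the exact constant $C$ and the nonnegative linear remainder in the objective decomposition above. Once that identity is in place, the conclusion is a routine consequence of strong convexity and of the fact that the remainder vanishes precisely at $\hat\z$.
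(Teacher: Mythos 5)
Your proof is correct, and it takes a genuinely different route from the paper's. The paper works through first-order optimality conditions: it writes the subgradient characterizations of the two problems, shows by contradiction (an exchange argument using absoluteness) that $[\kappab]_- = \kappab - [\kappab]_+$ belongs to the normal cone $\N_{\R{p}_+}(\hat{\z}^+)$, and then assembles the KKT certificate $(\w,[\kappab]_-)\in\partial\Omega(\hat{\z}^+)\times\N_{\R{p}_+}(\hat{\z}^+)$ for the constrained problem. You avoid subdifferentials and normal cones entirely: you establish $\hat{\z}\geq\0$ and $\hat{\z}_j=0$ on $\{j:\kappab_j<0\}$ by two elementary exchange arguments, then exploit the exact identity $F(\z)=G(\z)+C+\sum_{j:\,\kappab_j<0}|\kappab_j|\z_j$ on $\R{p}_+$, whose linear remainder is nonnegative and vanishes at $\hat{\z}$, so a pure objective-value comparison plus uniqueness from strong convexity finishes the argument. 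The kernel fact is the same in both proofs---on coordinates where $[\kappab]_+$ vanishes, shrinking $\z_j$ toward zero lowers the quadratic term and, by absoluteness, cannot raise $\Omega$---but the paper packages it as normal-cone membership while you package it as your property (ii). What your version buys is self-containedness and transparency about where absoluteness enters (exactly twice, in (i) and (ii)); it also verifies explicitly that $\hat{\z}$ is feasible for the constrained problem, a point the paper leaves implicit when invoking the normal cone at $\hat{\z}^+$. What the paper's version buys is brevity given convex-analysis machinery and explicit KKT multipliers, consistent with the optimality-condition toolkit used throughout its appendix (e.g., Lemma~\ref{lem:opt_cond_proj}). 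Your individual steps all check out: $\Omega(|\z|)=\Omega(\z)$ follows from applying the absoluteness property in both directions, the coordinatewise inequality $(\w_j-|\z_j|)^2\leq(\w_j-\z_j)^2$ is valid whenever $\w_j\geq 0$, and the constant $C=\tfrac{1}{2}\sum_{j:\,\kappab_j<0}\kappab_j^2$ in your decomposition is exact.
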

\begin{proof}
Let us denote by $\hat{\z}^+$ and $\hat{\z}$ the unique solutions of the left- and right-hand side of~(\ref{eq:prox_nonneg}) respectively.
Consider the normal cone $\N_{\R{p}_+}(\z_0)$ of $\R{p}_+$ at the point $\z_0$~\citep{Borwein2006}
and decompose $\kappab$ into its positive and negative parts, $\kappab=[\kappab]_+ + [\kappab]_-$. 
We can now write down the optimality conditions for the two convex problems above~\citep{Borwein2006}:
$\hat{\z}^+$ is optimal if and only if there exists $\w \in\partial\Omega(\hat{\z}^+)$ such that
$
\hat{\z}^+ - [\kappab]_+ +\lambda \w = \mathbf{0}.
$
Similarly, $\hat{\z}$ is optimal if and only if there exists 
$(\s,\u) \in \partial\Omega(\hat{\z}) \times \N_{\R{p}_+}(\hat{\z})$ such that
$
\hat{\z} - \kappab +\lambda \s + \u = \mathbf{0}.
$
We now prove that $[\kappab]_-=\kappab - [\kappab]_+$ belongs to $\N_{\R{p}_+}(\hat{\z}^+)$.
We proceed by contradiction. 
Let us assume that there exists $\z \in \R{p}_+$ such that $[\kappab]_-^\top (\z - \hat{\z}^+) > 0$.
This implies that there exists $j \in \{1,\dots,p\}$ for which
$[\kappab_j]_- < 0$ and $\z_j - \hat{\z}^+_j < 0$.
In other words, we have $0 \leq \z_j = \z_j-[\kappab_j]_+ < \hat{\z}^+_j = \hat{\z}^+_j - [\kappab_j]_+$.
With the assumption made on $\Omega$ and replacing $\hat{\z}^+_j$ by $\z_j$,
we have found a solution to the left-hand side of~(\ref{eq:prox_nonneg})
with a stricly smaller cost function than the one evaluated at $\hat{\z}^+$, hence the contradiction.
Putting the pieces together, we now have
$$
\hat{\z}^+ - [\kappab]_+ +\lambda \w  = \hat{\z}^+ - \kappab +\lambda \w + [\kappab]_-= \mathbf{0},\ 
\text{with}\ (\w,[\kappab]_-)\in  \partial\Omega(\hat{\z}^+) \times \N_{\R{p}_+}(\hat{\z}^+),
$$
which shows that $\hat{\z}^+$ is the solution of the right-hand side of~(\ref{eq:prox_nonneg}).
\end{proof}
\section{Counterexample for $\ell_q$-norms, with $q \notin \{1,2,\infty\}$.} \label{appendix:counter}
The result we have proved in Proposition~\ref{prop:one_pass_conv} in the specific setting where $\Norm{.}$ is the
$\ell_2$- or $\ell_\infty$-norm does not hold more generally for
$\ell_q$-norms, when $q$ is not in $\{1,2,\infty\}$. 
Let $q > 1$ satisfying this condition.
We denote by $q' \defin (1 - q^{-1})^{-1}$ the norm parameter dual to $q$.
We keep the same notation as in \refLemma{lem:proj_nested_gr} and
assume from now on that $\Norm{\u_{\gi}}_{q'}>t_g$ and $\Norm{\u_{\hi}}_{q'}>t_g+t_h$. 
These two inequalities guarantee that the vectors $\u_{\gi}$ and $\u_{\hi}-\xib^g$ do not lie in the interior of the $\ell_{q'}$-norm balls, of respective radius $t_g$ and $t_h$.

We show in this section that there exists a setting for which the conclusion of
\refLemma{lem:proj_nested_gr} does not hold anymore. We first focus on a necessary condition
of \refLemma{lem:proj_nested_gr}:
\UpperSpace
\begin{lemma}[Necessary condition of \refLemma{lem:proj_nested_gr}]\label{lem:necessary_cond_proj}~\\
Let $\Norm{.}$ be a $\ell_q$-norm, with $q \notin \{1,2,\infty\}$. If the conclusion of \refLemma{lem:proj_nested_gr} holds,
then the vectors $\xib_{\gi}^g$ and $\xib_{\gi}^h$ are linearly dependent.
\end{lemma}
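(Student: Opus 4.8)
The plan is to extract, from the assumed conclusion together with the definition of $\xib^h$, two entrywise ``alignment'' relations on the coordinates of $g$, and then to compose them using the fact that the duality map attached to the $\ell_{q'}$-norm is the exact inverse of the one attached to the $\ell_q$-norm. Throughout, write $q' \defin (1-q^{-1})^{-1}$, so that $\DualNorm{\cdot}=\|\cdot\|_{q'}$ and $\Norm{\cdot}=\|\cdot\|_q$, and introduce the entrywise maps $\phi(\x)_j \defin \sign(\x_j)|\x_j|^{q-1}$ and $\phi^\ast(\x)_j \defin \sign(\x_j)|\x_j|^{q'-1}$. The key algebraic fact I will use is $\phi\circ\phi^\ast=\mathrm{Id}$, which holds because $(q-1)(q'-1)=1$, together with the positive homogeneity $\phi(\nu\x)=\nu^{q-1}\phi(\x)$ for $\nu\ge 0$.

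First I would record the optimality conditions in the boundary regime. Since any projection onto the $t_g$-ball satisfies $\DualNorm{\xib^g}\le t_g$ and $\Norm{\u_{\hi}}_{q'}>t_g+t_h$ by assumption, the triangle inequality gives $\DualNorm{\u_{\hi}-\xib^g}>t_h$, so the projection defining $\xib^h$ is active. \refLemma{lem:opt_cond_proj} then yields the equality case of H\"older's inequality between $\xib^h$ and $\u_{\hi}-\xib^g-\xib^h$. For $1<q<\infty$ this equality forces $(|\xib^h_j|^{q'})_j$ to be proportional to $(|\u_j-\xib^g_j-\xib^h_j|^q)_j$ with matching signs, i.e.\ there is $\mu\ge 0$ with $\xib^h=\mu\,\phi(\u_{\hi}-\xib^g-\xib^h)$. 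Restricting this entrywise identity to the coordinates of $g\subseteq h$ and using that $(\u_{\hi}-\xib^g-\xib^h)_{\gi}=\v_{\gi}$, I obtain $\xib^h_{\gi}=\mu\,\phi(\v_{\gi})$.

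Next I would exploit the assumed conclusion $\xib^g=\Pi_{\|.\|_\ast\le t_g}(\v_{\gi}+\xib^g)$. If $\v_{\gi}+\xib^g$ lies strictly inside the ball, the projection is the identity and $\v_{\gi}=\0$, whence $\xib^h_{\gi}=\mu\,\phi(\0)=\0$ and the dependence is trivial. Otherwise \refLemma{lem:opt_cond_proj} gives the H\"older equality between $\xib^g$ and $(\v_{\gi}+\xib^g)-\xib^g=\v_{\gi}$, which by the same characterization produces $\nu\ge 0$ with $\v_{\gi}=\nu\,\phi^\ast(\xib^g)$. Substituting into the relation from the previous step and using homogeneity of $\phi$ and the inversion identity $\phi\circ\phi^\ast=\mathrm{Id}$, I get $\xib^h_{\gi}=\mu\,\phi(\nu\,\phi^\ast(\xib^g))=\mu\nu^{q-1}\,\phi(\phi^\ast(\xib^g))=\mu\nu^{q-1}\,\xib^g$. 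Since $\xib^g=\xib^g_{\gi}$, this exhibits $\xib^h_{\gi}$ as a scalar multiple of $\xib^g_{\gi}$, which is exactly the claimed linear dependence.

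The routine parts are the triangle-inequality check that the outer projection is active and the bookkeeping of supports (every vector involved is supported in $g$, so the entrywise maps commute with restriction to $g$). The step I expect to be the crux is the clean use of the equality case of H\"older's inequality: one must phrase it as the \emph{exact} entrywise duality relation $\x=\mu\,\phi(\y)$ (including the degenerate cases where a proportionality constant vanishes), and then recognize that composing the two duality maps collapses to the identity precisely because $(q-1)(q'-1)=1$. Note that the derivation itself does not single out $q$; it is only the subsequent counterexample that will contradict the linear dependence established here, thereby showing that the conclusion of \refLemma{lem:proj_nested_gr} is generically violated when $q\notin\{1,2,\infty\}$.
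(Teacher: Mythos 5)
Your proof is correct and is essentially the paper's own argument: both apply the equality case of H\"older's inequality from Lemma~\ref{lem:opt_cond_proj} to the projection defining $\xib^h$ and to the assumed identity $\xib^g = \Pi_{\|.\|_\ast \leq t_g}(\v_{\gi}+\xib^g)$, and chain the two resulting alignment relations through the common residual $\v_{\gi}$ to conclude that $\xib^h_{\gi}$ is a nonnegative multiple of $\xib^g_{\gi}$. Your composition $\phi\circ\phi^\ast=\mathrm{Id}$ is a repackaging of the paper's entrywise ratio $|\xib_j^g|^{q'} = (\rho/\rho_h)\,|\xib_j^h|^{q'}$, the only cosmetic differences being that you carry the signs inside the duality maps where the paper invokes Lemma~\ref{lem:proj_same_sign} to assume nonnegative entries, and that you accept the degenerate case $\v_{\gi}=\0$ as trivial linear dependence where the paper excludes it outright via the standing assumption $\Norm{\u_{\gi}}_{q'}>t_g$.
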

\LowerSpace
\begin{proof}
According to our assumptions on $\u_{\gi}$ and $\u_{\hi}-\xib^g$, 
we have that $\Norm{\xib^g}_{q'}=t_g$ and $\Norm{\xib^h}_{q'}=t_h$.
In this case, we can apply the second optimality conditions of \refLemma{lem:opt_cond_proj}, 
which states that equality holds in the $\ell_q$-$\ell_{q'}$ H\"older inequality.
As a result, there exists $\rho_g,\rho_h>0$ such that for all $j$ in $g$:
\begin{equation}
 |\xib_j^g|^{q'} = \rho_g |\u_j - \xib_j^g|^q\quad \mathrm{and}\quad 
 |\xib_j^h|^{q'} = \rho_h |\u_j - \xib_j^g - \xib_j^h|^q.
\end{equation}
If the conclusion of \refLemma{lem:proj_nested_gr} holds---that is, we have 
$\xib^g = \Pi_{\|.\|_\ast \leq t_g}(\u_{\gi}-\xib_{\gi}^h)$,
notice that it is not possible to have the following scenarios, as proved below by contradiction:
\begin{itemize}
 \item If $\Norm{\u_{\gi}-\xib_{\gi}^h}_{q'} < t_g$, then we would have $\xib^g=\u_{\gi}-\xib_{\gi}^h$, 
 which is impossible since $\Norm{\xib^g}_{q'}=t_g$.
 \item If $\Norm{\u_{\gi}-\xib_{\gi}^h}_{q'} = t_g$, then we would have for all $j$ in $g$, 
 $|\xib_j^h|^{q'} = \rho_h |\u_j - \xib_j^g - \xib_j^h|^q = 0$, which implies that $\xib_{\gi}^h=0$ and $\Norm{\u_{\gi}}_{q'} = t_g$. 
 This is impossible since we assumed $\Norm{\u_{\gi}}_{q'} > t_g$.
\end{itemize}
We therefore have $\Norm{\u_{\gi}-\xib_{\gi}^h}_{q'} > t_g$ and using again the second optimality conditions of \refLemma{lem:opt_cond_proj}, 
there exists $\rho>0$ such that for all $j$ in $g$, $|\xib_j^g|^{q'} = \rho |\u_j - \xib_j^g - \xib_j^h|^q$.
Combined with the previous relation on $\xib_{\gi}^h$, we obtain for all $j$ in $g$,
$
|\xib_j^g|^{q'} = \frac{\rho}{\rho_h} |\xib_j^h|^{q'}.
$
Since we can assume without loss of generality that $\u$ only has nonnegative entries (see Lemma~\ref{lem:proj_same_sign}), 
the vectors $\xib^g$ and $\xib^h$ can also be assumed to have nonnegative entries, 
hence the desired conclusion.
\end{proof}
We need another intuitive property of the projection $\Pi_{\|.\|_\ast \leq t}$ to derive our counterexample:
\UpperSpace
\begin{lemma}[Order-preservation by projection]\label{lem:prop_proj}~\\
Let $\Norm{.}$ be a $\ell_q$-norm, with $q \notin \{1,\infty\}$ and $q'\defin1/(1-q^{-1})$. Let us consider the vectors $\kappab,\w \in \R{p}$ such that 
$
\kappab=\Pi_{\|.\|_\ast \leq t}(\w)=\argmin_{\Norm{\y}_{q'} \leq t}\|\y-\w\|_2,
$
with the radius $t$ satisfying $\Norm{\w}_{q'} > t$. If we have
$\w_i < \w_j$ for some $(i,j)$ in $\{1,\dots,p\}^2$, then it also holds that $\kappab_i < \kappab_j$.
\end{lemma}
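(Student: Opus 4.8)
The plan is to characterize the projection $\kappab$ coordinatewise through its optimality conditions, and then recognize that each entry $\w_j$ is obtained from $\kappa_j$ by applying one fixed, strictly increasing scalar function. First I would note that, since $\DualNorm{\w}=\Norm{\w}_{q'}>t$, the point $\w$ lies strictly outside the ball, so its projection sits on the boundary and the constraint $\Norm{\y}_{q'}^{q'}\le t^{q'}$ is active. Because $q'>1$ (as $q\notin\{1,\infty\}$), the map $\y\mapsto\Norm{\y}_{q'}^{q'}=\sum_j|\y_j|^{q'}$ is continuously differentiable, and Slater's condition holds (the origin is strictly feasible), so the KKT conditions apply: there is a multiplier $\mu\ge0$ such that, for every $j$,
\[
(\kappa_j-\w_j)+\mu\,q'\,|\kappa_j|^{q'-1}\sign(\kappa_j)=0.
\]
This is exactly the coordinatewise form of the H\"older-equality optimality condition already used in \refLemma{lem:opt_cond_proj}. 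Since $\mu=0$ would force $\kappab=\w$, which is infeasible here, we have $\mu>0$; crucially, $\mu$ is a \emph{single} scalar shared by all coordinates.

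Next I would introduce the scalar map $\psi(s)\defin s+\mu q'|s|^{q'-1}\sign(s)$ on $\mathbb{R}$, so that the stationarity relation reads $\w_j=\psi(\kappa_j)$ for all $j$. The heart of the proof is that $\psi$ is a strictly increasing odd bijection of $\mathbb{R}$. For $s\neq0$ its derivative is $\psi'(s)=1+\mu q'(q'-1)|s|^{q'-2}$, which is strictly positive: when $q'\ge2$ this is immediate, and when $1<q'<2$ the term $|s|^{q'-2}$ blows up near $0$ but stays positive, while $\psi$ remains continuous with $\psi(0)=0$. Hence $\psi$ is a homeomorphism of $\mathbb{R}$ onto itself with a strictly increasing inverse $\psi^{-1}$.

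Finally, from $\w_j=\psi(\kappa_j)$ I obtain $\kappa_j=\psi^{-1}(\w_j)$ for every coordinate with the same $\psi$, so the strict monotonicity of $\psi^{-1}$ immediately gives $\w_i<\w_j\Rightarrow\kappa_i<\kappa_j$, which is the claim. The hard part will be the careful justification that a single positive multiplier governs all coordinates and that the induced scalar map is genuinely strictly monotone despite the non-smoothness of $|s|^{q'-1}\sign(s)$ at the origin when $1<q'<2$; once that is settled, order preservation is automatic. Note that the sign-reduction of \refLemma{lem:proj_same_sign} is \emph{not} directly exploitable here, since flipping coordinate signs would reshuffle their order, which is precisely why I treat all signs simultaneously through the odd map $\psi$.
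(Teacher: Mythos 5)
Your proof is correct, and its skeleton is the same as the paper's: both write the projection as the minimizer of $\frac{1}{2}\NormDeux{\y-\w}^2$ subject to $\Norm{\y}_{q'}^{q'}\leq t^{q'}$, extract the coordinatewise stationarity condition $\kappab_j-\w_j+\alpha q'|\kappab_j|^{q'-1}\sign(\kappab_j)=0$ with a \emph{single} multiplier $\alpha>0$ (positive because $\Norm{\kappab}_{q'}=t<\Norm{\w}_{q'}$), and conclude by strict monotonicity of the resulting scalar relation. Where you genuinely diverge is the treatment of signs. The paper invokes Lemma~\ref{lem:proj_same_sign} to assume without loss of generality that $\w$ has nonnegative entries, and then studies $\varphi_w(\kappa)=\kappa-w+\alpha q'\kappa^{q'-1}$ on $(0,\infty)$, showing its unique positive root is strictly increasing in $w$. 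You instead keep arbitrary signs and work with the odd map $\psi(s)=s+\mu q'|s|^{q'-1}\sign(s)$ on all of $\mathbb{R}$, obtaining $\kappab_j=\psi^{-1}(\w_j)$ for every coordinate at once. Your remark that the sign reduction is not directly exploitable is well taken: flipping coordinate signs preserves absolute values but not the relation $\w_i<\w_j$ when the two entries have opposite signs, so the paper's reduction, as written, only covers the lemma's statement after restricting to nonnegative vectors---which is harmless in its sole application (Proposition~\ref{prop:counterexample}, where the comparison $0<\u_i<\u_j$ is made after the reduction), but your odd-map variant proves the lemma as literally stated for mixed-sign $\w$. One small simplification available to you: since $s\mapsto|s|^{q'-1}\sign(s)$ is itself strictly increasing for $q'>1$, $\psi$ is a sum of two strictly increasing continuous functions, so its strict monotonicity (and hence that of $\psi^{-1}$) follows without any derivative computation or case analysis at $s=0$.
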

\LowerSpace
\begin{proof}
Let us first notice that given the assumption on $t$, we have $\Norm{\kappab}_{q'} = t$.
The Lagrangian~$\mathcal{L}$ associated with the convex minimization problem underlying the definition of $\Pi_{\|.\|_\ast \leq t}$ can be written as
$$
	\mathcal{L}(\y,\alpha) = \frac{1}{2}\|\y-\w\|^2_2 + \alpha \big[ \Norm{\y}^{q'}_{q'} - t^{q'}\big],\ \text{with the Lagrangian parameter}\ \alpha \geq 0.
$$
At optimality, the stationarity condition for $\kappab$ leads to
$$
	\forall\ j\in\{1,\dots,p\},\ \kappab_j - \w_j + \alpha {q'} |\kappab_j|^{q'-1} = 0.
$$
We can assume without loss of generality that $\w$ only has nonnegative entries (see Lemma~\ref{lem:proj_same_sign}).
Since the components of $\kappab$ and $\w$ have the same signs (see Lemma~\ref{lem:proj_same_sign}), 
we therefore have $|\kappab_j|=\kappab_j \geq 0$, for all $j$ in $\{1,\dots,p\}$. Note that $\alpha$ cannot be equal to zero because of 
$\Norm{\kappab}_{q'} = t < \Norm{\w}_{q'}$.

Let us consider the continuously differentiable function $\varphi_w:  \kappa \mapsto \kappa - w + \alpha q' \kappa^{q'-1}$ defined on $(0,\infty)$.
Since $\varphi_w(0)=-w < 0$, $\lim_{\kappa\to\infty}\varphi_w(\kappa)=\infty$ and $\varphi_w$ is strictly nondecreasing, there exists a unique $\kappa^\ast_w > 0$ such that $\varphi_w(\kappa^\ast_w)=0$.
If we now take $w<v$, we have
$$
\varphi_v(\kappa^\ast_w)=\varphi_w(\kappa^\ast_w)+w-v=w-v<0=\varphi_v(\kappa^\ast_v).
$$
With $\varphi_v$ being strictly nondecreasing, we thus obtain $\kappa^\ast_w < \kappa^\ast_v$.
The desired conclusion stems from the application of the previous result to the stationarity condition of $\kappab$.
\end{proof}

Based on the two previous lemmas, we are now in position to present our counterexample:
\UpperSpace
\begin{proposition}[Counterexample]\label{prop:counterexample}~\\
Let $\Norm{.}$ be a $\ell_q$-norm, with $q \notin \{1,2,\infty\}$ and $q'\defin1/(1-q^{-1})$.
Let us consider $\G=\{g,h\}$, with $g \subseteq h \subseteq \{1,\dots, p\}$ and $|g|>1$.
Let $\u$ be a vector in $\R{p}$ that has at least two different nonzero entries in $g$, i.e., there exists $(i,j)$ in $g\times g$ such that $0<|\u_i|<|\u_j|$. 
Let us consider the successive projections
$$
\xib^g \defin \Pi_{\|.\|_\ast \leq t_g}(\u_{\gi}) 
	~\text{ and }~
        \xib^h \defin \Pi_{\|.\|_\ast \leq t_h}(\u_{\hi}-\xib^g) 
$$
with $t_g, t_h > 0$ satisfying $\Norm{\u_{\gi}}_{q'}>t_g$ and $\Norm{\u_{\hi}}_{q'}>t_g+t_h$. 
Then, the conclusion of Lemma~\ref{lem:proj_nested_gr}
does not hold.
\end{proposition}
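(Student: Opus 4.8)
The plan is to argue by contradiction: I assume the conclusion of \refLemma{lem:proj_nested_gr} holds for the successive projections $\xib^g$ and $\xib^h$, and then exhibit an incompatibility with the two preceding lemmas. By \refLemma{lem:proj_same_sign} I may assume without loss of generality that $\u$ has nonnegative entries, so that $0<\u_i<\u_j$ for the two distinguished indices $i,j\in g$. The assumption $\Norm{\u_{\gi}}_{q'}>t_g$ places $\xib^g$ on the boundary $\Norm{\xib^g}_{q'}=t_g$; moreover, since $\xib^g$ is supported on $g\subseteq h$, the triangle inequality gives $\Norm{\u_{\hi}-\xib^g}_{q'}\ge \Norm{\u_{\hi}}_{q'}-t_g>t_h$, so $\xib^h$ also lies on its boundary. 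Both projections are therefore governed by the Lagrangian stationarity condition derived in the proof of \refLemma{lem:prop_proj}.

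Concretely, for each $k\in g$ there are multipliers $\alpha_g,\alpha_h>0$ with
\[
\u_k-\xib^g_k=\alpha_g q'\,(\xib^g_k)^{q'-1}
\quad\text{and}\quad
\u_k-\xib^g_k-\xib^h_k=\alpha_h q'\,(\xib^h_k)^{q'-1}.
\]
Now I invoke \refLemma{lem:necessary_cond_proj}: if the conclusion of \refLemma{lem:proj_nested_gr} holds, then $\xib^g_{\gi}$ and $\xib^h_{\gi}$ are linearly dependent, and since both are nonzero with nonnegative entries I may write $\xib^h_k=c\,\xib^g_k$ for all $k\in g$ with some $c>0$. Substituting this together with the first stationarity relation into the second to eliminate $\u_k$, the terms rearrange to
\[
-c\,\xib^g_k=\big(\alpha_h c^{\,q'-1}-\alpha_g\big)\,q'\,(\xib^g_k)^{q'-1},
\]
so that on every coordinate $k\in g$ with $\xib^g_k\neq 0$ the quantity $(\xib^g_k)^{2-q'}$ equals one and the same constant.

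This is the crux. Since $q\neq 2$ we have $q'\neq 2$, hence $x\mapsto x^{2-q'}$ is injective on $(0,\infty)$, which forces $\xib^g_k$ to take a single common value over all nonzero coordinates in $g$. But \refLemma{lem:prop_proj} applied to the first projection (using $\Norm{\u_{\gi}}_{q'}>t_g$ and $\u_i<\u_j$) yields $0<\xib^g_i<\xib^g_j$, a contradiction. I expect the main obstacle to be bookkeeping rather than conceptual: one must verify that the second projection genuinely sits on its boundary so that its stationarity relation holds with $\alpha_h>0$, and must track the coordinates where $\xib^g_k=0$ (on which linear dependence forces $\xib^h_k=0$ as well) so that the constancy argument is applied only where it is valid, in particular at $i$ and $j$. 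The role of the hypothesis $q\notin\{1,2,\infty\}$ is then transparent: $q\notin\{1,\infty\}$ is what makes the stationarity and order-preservation lemmas available, while $q\neq 2$ is exactly what makes $2-q'\neq 0$ and thereby turns the ``constant'' relation into the desired contradiction.
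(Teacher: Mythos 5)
Your proof is correct and takes essentially the same route as the paper's: identical Lagrangian stationarity conditions, \refLemma{lem:necessary_cond_proj} to get the proportionality $\xib^h_{\gi}=c\,\xib^g_{\gi}$, the relation $(\xib^g_k)^{2-q'}=\text{const}$ with $q'\neq 2$ forcing the entries of $\xib^g$ to coincide, and the contradiction with the order preservation of \refLemma{lem:prop_proj}. Your explicit verification that $\xib^h$ sits on its boundary (via the triangle inequality $\Norm{\u_{\hi}-\xib^g}_{q'}\geq\Norm{\u_{\hi}}_{q'}-t_g>t_h$) and your tracking of the zero coordinates of $\xib^g$ are in fact slightly more careful than the paper's own write-up.
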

\LowerSpace
\begin{proof}
We apply the same rationale as in the proof of \refLemma{lem:prop_proj}.
Writing the stationarity conditions for $\xib^g$ and $\xib^h$, we have for all $j$ in $g$
\begin{equation}
 \xib_j^g + \alpha q' (\xib_j^g)^{q'-1} - \u_j = 0,\quad \mathrm{and}\quad 
 \xib_j^h + \beta q' (\xib_j^h)^{q'-1}  - (\u_j - \xib_j^g) = 0, 
\end{equation}
with Lagrangian parameters $\alpha,\beta>0$.
We now proceed by contradiction and assume that 
$
\xib^g = \Pi_{\|.\|_\ast \leq t_g}(\u_{\gi}-\xib_{\gi}^h).
$
According to \refLemma{lem:necessary_cond_proj}, there exists $\rho>0$ such that for all $j$ in $g$, 
$
\xib_j^h=\rho \xib_j^g.
$
If we combine the previous relations on $\xib^g$ and $\xib^h$, we obtain for all $j$ in $g$, 
$$
 \xib_j^g = C (\xib_j^g)^{q'-1},\ \text{with}\ C\defin \frac{q'(\alpha-\beta \rho^{q'-1})}{\rho}.
$$
If $C < 0$, then we have a contradiction, since the entries of $\xib^g$ and $\u_{\gi}$ have the same signs.
Similarly, the case $C = 0$ leads a contradiction,  since we would have $\u_{\gi}=0$ and $\Norm{\u_{\gi}}_{q'}>t_g$.
As a consequence, it follows that $C > 0$ and for all $j$ in $g$, 
$
\xib_j^g = \exp\big\{\frac{\log(C)}{2-q'}\big\}, 
$
which means that all the entries of the vector $\xib_g^g$ are identical.
Using \refLemma{lem:prop_proj}, since there exists $(i,j) \in g\times g$ such that $\u_i < \u_j$, 
we also have $\xib_i^g < \xib_j^g$, which leads to a contradiction.
\end{proof}

\bibliography{main_bibliography}

\end{document}